\documentclass[11pt]{article}
\usepackage[sc]{mathpazo} 
\usepackage{environ}
\linespread{1.08}

\usepackage[utf8]{inputenc}

\oddsidemargin=-0.1in 
\evensidemargin=-0.1in 
\topmargin=-.5in
\textheight=9in 
\textwidth=6.5in
\parindent=18pt

\bibliographystyle{alpha}

\usepackage[usenames,dvipsnames]{xcolor}
\definecolor{Gred}{RGB}{219, 50, 54}
\definecolor{Ggreen}{RGB}{60, 186, 84}
\definecolor{Gblue}{RGB}{72, 133, 237}
\definecolor{Gyellow}{RGB}{247, 178, 16}
\definecolor{ToCgreen}{RGB}{0, 128, 0}
\definecolor{myGold}{RGB}{231,141,20}
\definecolor{myBlue}{rgb}{0.19,0.41,.65}
\definecolor{myPurple}{RGB}{175,0,124}
\definecolor{niceRed}{RGB}{153,0,0}
\definecolor{niceRed}{RGB}{190,38,38}
\definecolor{blueGrotto}{HTML}{059DC0}
\definecolor{royalBlue}{HTML}{057DCD}
\definecolor{navyBlueP}{HTML}{0B579C}
\definecolor{limeGreen}{HTML}{81B622}
\definecolor{nicePink}{RGB}{247,83,148}

\usepackage{mdframed}
\usepackage{algorithm,algorithm,multicol}
\usepackage{pgfplots}

\usepackage{cmap}
\usepackage[T1]{fontenc}
\usepackage{bm}
\pagestyle{plain}

\usepackage{amsmath}
\usepackage{amsfonts}
\usepackage{amssymb}
\usepackage{amsbsy}
\usepackage{amsthm}
\usepackage{bbm}

\usepackage{graphicx, ucs}

\usepackage{rotating}
\usepackage{float}
\usepackage{tikz}

\usepackage{algorithm, caption}
\usepackage[noend]{algpseudocode}
\usepackage{listings}

\usepackage{enumitem}

\usepackage[pagebackref]{hyperref}
\hypersetup{
	colorlinks = true,
	urlcolor = {blueGrotto},
	linkcolor = {myPurple},
	citecolor = {blueGrotto}
}
\usepackage[nameinlink]{cleveref}

\usepackage{multirow}
\usepackage{array}

\usepackage{chngcntr}

\counterwithin*{equation}{section}
\usepackage{chngcntr}
\usepackage{soul}
\usepackage{nicefrac}

\usepackage{fancyhdr}
\fancyhf{}
\pagestyle{fancy}

\fancyheadoffset{0pt}
\cfoot{\thepage}

\def\compactify{\itemsep=0pt \topsep=0pt \partopsep=0pt \parsep=0pt}
\let\latexusecounter=\usecounter

\definecolor{myC}{rgb}{0, 255, 255}
\definecolor{myY}{rgb}{204, 204, 0}
\definecolor{myM}{rgb}{255, 0, 255}
\definecolor{secinhead}{RGB}{249,196,95}
\definecolor{lgray}{gray}{0.8}


\newtheorem{theorem}{Theorem} 
\newtheorem*{theorem*}{Theorem} 
\newtheorem*{proposition*}{Proposition} 
\newtheorem{lemma}{Lemma}

\newtheorem{proposition}{Proposition}

\newtheorem{question}{Question}
\newtheorem{openquestion}{Open Question}

\newtheorem{assumption}{Assumption}

\newtheorem{definition}{Definition}
\newtheorem{remark}{Remark}

\renewcommand{\Pr}{\mathop{\bf Pr\/}}
\newcommand{\E}{\mathop{\bf E\/}}

\newcommand{\poly}{\textnormal{poly}}
\newcommand{\polylog}{\textnormal{polylog}}

\newcommand{\reals}{\mathbb R}

\newcommand{\eps}{\epsilon}


\newcommand{\calI}{\mathcal{I}}

\newcommand{\calL}{\mathcal{L}}

\newcommand{\calN}{\mathcal{N}}
\newcommand{\calO}{\mathcal{O}}
\newcommand{\calP}{\mathcal{P}}

\newcommand{\calR}{\mathcal{R}}

\newcommand{\calU}{\mathcal{U}}

\newcommand{\calW}{\mathcal{W}}
\newcommand{\calX}{\mathcal{X}}

\newcommand{\calZ}{\mathcal{Z}}



\def\<{\langle}
\def\>{\rangle}


\def\wh{\widehat}

\def\dtv{\mathrm{TV}}

\def\F{\mathrm{F}}

\usepackage{amsthm,amsmath,amssymb,amsfonts,amssymb,mathtools}
\usepackage{xcolor}
\usepackage{empheq}
\usepackage{dsfont}
\usepackage{mathrsfs}
\usepackage{graphicx}
\usepackage{enumitem}
\usepackage{subfig}

\captionsetup[figure]{font=small,labelfont=small,width=.9\linewidth}
%

\usepackage{tikz}
\usetikzlibrary{patterns}
\usetikzlibrary{arrows,shapes,automata,backgrounds,petri,positioning}
\usetikzlibrary{shadows}
\usetikzlibrary{calc}
\usetikzlibrary{spy}
\usepackage{pgf,pgfplots}
\usetikzlibrary{angles, quotes}
\usepackage{pgfmath,pgffor}
\pgfplotsset{compat=1.17}
\usepackage{framed}

\definecolor[named]{ACMBlue}{cmyk}{1,0.1,0,0.1}
\definecolor[named]{ACMYellow}{cmyk}{0,0.16,1,0}
\definecolor[named]{ACMOrange}{cmyk}{0,0.42,1,0.01}
\definecolor[named]{ACMRed}{cmyk}{0,0.90,0.86,0}
\definecolor[named]{ACMLightBlue}{cmyk}{0.49,0.01,0,0}
\definecolor[named]{ACMGreen}{cmyk}{0.20,0,1,0.19}
\definecolor[named]{ACMPurple}{cmyk}{0.55,1,0,0.15}
\definecolor[named]{ACMDarkBlue}{cmyk}{1,0.58,0,0.21}



\newcommand{\cost}{L}




\newcommand{\R}{\mathbb{R}}

\DeclareMathOperator{\var}{\mathbf{Var}}

\newcommand{\opt}{\mathrm{opt}}

\newcommand{\Ind}{\mathds{1}}
\newcommand{\1}{\Ind}

\title{Optimizing Solution-Samplers for Combinatorial Problems:
The Landscape of Policy-Gradient Methods}

\author{%
    \textbf{Constantine Caramanis}\footnote{
  \texttt{\color{magenta}constantine@utexas.edu}} \\
   University of Texas at Austin
    \and
   \textbf{Dimitris Fotakis}     \footnote{
   \texttt{\color{magenta}fotakis@cs.ntua.gr}}
  \\
NTUA   \and 
  \textbf{Alkis Kalavasis}    \footnote{
  \texttt{\color{magenta}alvertos.kalavasis@yale.edu}}  \\
Yale University
  \and
    \textbf{Vasilis Kontonis}    \footnote{
\texttt{\color{magenta}vkonton@gmail.com}}
 \\
    University of Texas at Austin
\and 
\textbf{Christos Tzamos}\footnote{
\texttt{\color{magenta}tzamos@wisc.edu}}
 \\
UOA \& University of Wisconsin-Madison 
}


\begin{document}

\maketitle

	\begin{abstract}
	\small
	 Deep Neural Networks and Reinforcement Learning methods have empirically shown great promise in tackling challenging combinatorial problems.  In those methods
a deep neural network is used as a solution generator which
is then trained by gradient-based methods (e.g., policy gradient) to successively obtain better solution distributions.  
In this work we introduce a novel theoretical framework for analyzing the effectiveness of such methods. We ask whether there exist generative models
that (i) are expressive enough to generate approximately optimal solutions; (ii) have a tractable, i.e, polynomial in the size of the input, number of parameters; (iii) 
their optimization landscape is benign in the sense that
it does not contain sub-optimal stationary points.
Our main contribution is a positive answer to this question.
Our result holds for a broad class of combinatorial problems
including Max- and Min-Cut, 
Max-$k$-CSP, Maximum-Weight-Bipartite-Matching, and the Traveling Salesman Problem.
As a byproduct of our analysis we introduce a novel regularization process over vanilla gradient descent
and provide theoretical and experimental evidence that it helps address vanishing-gradient issues and escape bad stationary points.

\end{abstract}
	
\section{Introduction}

Gradient descent has proven remarkably effective for diverse optimization problems in neural networks.  From the early days of neural networks, this has motivated their use for combinatorial optimization \cite{hopfield1985neural,smith1999neural,vinyals2015pointer,bengio-tsp}.  More recently, an approach by \cite{bengio-tsp}, where a neural network is used to generate (sample) solutions for the combinatorial problem.
The parameters of the neural network thus parameterize the space of distributions.  This allows one to perform gradient steps in this distribution space.  In several interesting settings, including the Traveling Salesman Problem, they have shown that this approach works remarkably well.  Given the widespread application but also the notorious difficulty of combinatorial optimization \cite{grotschel2012geometric, papadimitriou1998combinatorial, schrijver2003combinatorial, schrijver2005history, cook1995combinatorial}, approaches that provide a more general solution framework are appealing.

This is the point of departure of this paper.  We investigate whether gradient descent can succeed in a general setting that encompasses the problems studied in \cite{bengio-tsp}.  This requires a parameterization of distributions over solutions with a ``nice'' optimization landscape (intuitively, that gradient descent does not get stuck in local minima or points of vanishing gradient) and that has a polynomial number of parameters.  Satisfying both requirements simultaneously is non-trivial.  As we show precisely below, a simple lifting to the exponential-size probability simplex on all solutions guarantees convexity; and, on the other hand, {\em compressed} parameterizations with ``bad'' optimization landscapes are also easy to come by (we give a natural example for Max-Cut in \Cref{remark:just some properties}).  Hence, we seek to understand the parametric complexity of gradient-based methods, i.e., how many parameters suffice for a benign optimization landscape in the sense that it does not contain ``bad'' stationary points.

We thus theoretically investigate whether there exist solution generators with a tractable number of parameters that are also efficiently optimizable, i.e., gradient descent requires a small number of steps to reach a near-optimal solution.  We provide a positive answer under general assumptions and specialize our results for several classes of {\em hard and easy} combinatorial optimization problems, including Max-Cut and Min-Cut, Max-$k$-CSP, Maximum-Weighted-Bipartite-Matching and Traveling Salesman.  We remark that a key difference between (computationally) easy and hard problems is not the ability to find a compressed and efficiently optimizable generative model but rather the ability to efficiently draw samples from the parameterized distributions.

\subsection{Our Framework}
\label{sec:framework-results}

We introduce a theoretical framework for analyzing the effectiveness
of gradient-based methods on the optimization of solution generators in combinatorial optimization, inspired by \cite{bengio-tsp}.
    
Let $\calI$ be a collection of instances of a combinatorial problem with common solution space $S$
and $L(\cdot; I) : S \to \reals$ be the cost function associated with an instance $I \in \mathcal{I}$, i.e.,
$L(s; I)$ is the cost of solution $s$ given the instance $I$.
For example, for the Max-Cut problem the 
collection of instances $\mathcal{I}$ corresponds to all graphs with $n$ nodes, the solution space $S$ consists of all subsets of nodes, and the loss
$L(s;I)$ is equal to (minus) the weight of the cut $(s, [n] \setminus s)$ corresponding to the subset of nodes $s \in S$ (our goal is to minimize $L$).

\begin{definition}[Solution Cost Oracle]
\label{def:solution_cost_oracle}
For a given instance $I$ we assume that we have access to an oracle $\mathcal{O}(\cdot;I)$ 
to the cost of any given solution $s \in S$, i.e., 
$\mathcal{O}(s; I) = L(s; I)$.
\end{definition}

The above oracle is standard in combinatorial optimization and query-efficient algorithms are provided for various problems \cite{rubinstein2017computing,graur2019new,lee2021quantum,apers2022cut,plevrakis2022cut}.
We remark that the goal of this work is not to design algorithms that solve combinatorial problems using access to the solution cost oracle (as the aforementioned works do). This paper focuses on landscape design: the algorithm is \textbf{fixed}, namely (stochastic) gradient descent; the question is how to design a generative model that has a small number of parameters and the induced optimization landscape allows gradient-based methods to converge to the optimal solution without getting trapped at local minima or vanishing gradient points.

Let $\calR$ be some prior distribution over the instance space $\calI$ and $\mathcal{W}$ be the space of parameters of the model.
We now define the class of solution generators.
The solution generator $p(w)$ with \textbf{parameter} $w\in \mathcal{W}$ takes as \textbf{input} an instance $I$ and \textbf{generates a random solution} $s$ in $S$.  To distinguish between the output, the input, and the parameter of the solution generator, we use
the notation $p(\cdot; I; w)$ to denote the distribution
over solutions and $p(s;I;w)$ to denote the probability of an individual solution $s \in S$.
We denote by $\calP = \{ p(w) : w \in \mathcal{W} \}$ the above parametric class of solution generators.  For some parameter $w$, the loss corresponding to the solutions sampled 
by $p(\cdot ; I; w)$ is equal to 
\begin{align}
\label{eq:loss}
\mathcal{L}(w) = \E_{I \sim \calR} [\calL_I(w)]\,,~~ \calL_I(w) = \E_{s \sim p(\cdot ; I; w)}[L(s; I)]\,.
\end{align}
Our goal (which was the empirical focus of \cite{bengio-tsp}) is to optimize the parameter $w \in \mathcal W$ in order to find a sampler 
$p(\cdot ; I; w)$ 
whose loss $\calL(w)$ is close to the expected optimal value $\opt$:
\begin{align}
\label{eq:optimal_cost}
\opt = \E_{I \sim \mathcal{R}} \left[ \min_{s \in S} L(s;I) \right]
\,.
\end{align}
As we have already mentioned, we focus on gradient descent dynamics: the policy gradient method \cite{kakade2001natural} expresses the gradient of $\mathcal L$ as follows
\[
\nabla_w \mathcal{L}(w) = \E_{I \sim \calR} \E_{s \sim p(\cdot ; I;  w)}[L(s; I)  ~ \nabla_w \log p(s ; I; w) ]\,,
\]
and updates the parameter $w$ using the gradient descent update.  
Observe that a (stochastic) policy gradient update can be implemented 
using only access to a  solution cost oracle of \Cref{def:solution_cost_oracle}.

\paragraph{Solution Generators.}
In \cite{bengio-tsp} the authors used neural networks as parametric solution generators 
for the TSP problem.  They provided empirical evidence that optimizing the 
parameters of the neural network using the policy gradient method results to samplers 
that generate very good solutions for (Euclidean) TSP instances.
Parameterizing the solution generators using neural networks essentially 
\emph{compresses} the description of distributions over solutions
(the full parameterization would require assigning a parameter to every 
solution-instance pair $(s,I)$). Since for most combinatorial problems
the size of the solution space is exponentially large (compared to the description of
the instance), it is crucial that for such methods to succeed the parameterization
must be \emph{compressed} in the sense that  the description of the parameter space $\mathcal{W}$ 
is polynomial in the size of the description of the instance family $\calI$.
Apart from having a tractable number of parameters, it is important that the 
\emph{optimization objective} corresponding to the parametric class 
$\mathcal{P}$ can provably be optimized using some first-order method 
in polynomial (in the size of the input) iterations.

We collect these desiderata in the following definition. We denote by $[\calI]$ the description size of $\calI$, i.e., 
the number of bits required to identify any element of $\calI$. For instance, if $\calI$ is the space of unweighted graphs with at most $n$ nodes,
$[\calI] = O(n^2)$.
\begin{definition}[Complete, Compressed and Efficiently Optimizable Solution Generator]
\label{def:compressed-eff-opt}
Fix a prior $\calR$ over $\calI$, a family of solution generators $\calP = \{ p(w) : w \in \calW\}$, a loss function $\calL$ as in \Cref{eq:loss} and some $\eps > 0$.  
\begin{enumerate}
\item We say that $\calP$ is \textbf{complete} if 
there exists some $\overline{w} \in \mathcal{W}$ such that $\mathcal{L}(\overline{w}) \leq \mathrm{opt} + \varepsilon$, where $\opt$ is defined in \eqref{eq:optimal_cost}.
\item
We say that $\calP$ is
\textbf{compressed} if
the description size of the parameter space $\mathcal{W}$ 
is polynomial in $[\calI]$ and in $\log(1/\varepsilon)$.
\item 
We say that $\calP$ is 
\textbf{efficiently optimizable} if
there exists a first-order method applied on 
the objective $\calL$ such that
after $T = \poly([\mathcal{W}], 1/\varepsilon)$ many updates on the parameter vectors, finds an (at most) 
$\eps$-sub-optimal vector $\widehat{w}$, i.e.,
\(
\mathcal{L}(\widehat{w}) \leq 
\mathcal{L}(\overline{w}) + \eps \,.
\)
\end{enumerate}
\end{definition}
\begin{remark}
\label{remark:just some properties}
We remark that  constructing parametric families 
that are complete and compressed, 
complete and efficiently optimizable, or  
compressed and efficiently optimizable (i.e., satisfying any pair
of assumptions of \Cref{question:1} but not all 3)
is usually a much easier task.
Consider, for example, the Max-Cut problem on a fixed (unweighted) graph with $n$ nodes. Note that $\calI$ has description size $O(n^2)$.
Solutions of the Max-Cut for a graph with $n$ nodes 
are represented by vertices on the binary hypercube $\{\pm 1\}^n$
(coordinate $i$ dictates the side of the cut that we put node $i$).
One may consider the full parameterization of all distributions 
over the hypercube.
It is not hard to see that this is a \textbf{complete and efficiently optimizable} family (the optimization landscape corresponds to optimizing
a linear objective).  However, it \textbf{is not compressed}, 
since it requires $2^n$ parameters.
On the other extreme, considering a product 
distribution over coordinates, i.e.,
we set the value of node $i$ to be $+1$ with probability $p_i$
and $-1$ with $1-p_i$ gives a \textbf{complete and compressed} family. 
However, as we show in \Cref{proof:just some properties}, 
the landscape of this compressed parameterization suffers from highly sub-optimal local minima and therefore, it is \textbf{not efficiently optimizable}.
\end{remark}

Therefore, in this work we investigate whether it is possible 
to have all 3 desiderata 
of \Cref{def:compressed-eff-opt} \emph{at the same time}.
\begin{question}
\label{question:1}
Are there complete, compressed, and efficiently optimizable solution generators (i.e.,
satisfying \Cref{def:compressed-eff-opt}) for challenging combinatorial 
tasks?
\end{question}

\subsection{Our Results} 

\paragraph{Our Contributions.} Before we present our results formally, we summarize the contributions
of this work.

\begin{itemize}
    \item Our main contribution is a positive answer (\Cref{theorem:main}) to \Cref{question:1} under general assumptions that capture many combinatorial tasks. We identify a set of conditions (see \Cref{assumption:1}) that allow us to design a family of solution generators that are complete, compressed and efficiently optimizable. 
\item The conditions are motivated by obstacles that are important for any approach of this nature. This includes solutions that escape to infinity, and also parts of the landscape with vanishing gradient.  
See the discussion in \Cref{sec:main-proof} and \Cref{fig:landscape}.
\item We specialize our framework to several important combinatorial problems, some of which are NP-hard, and others tractable: Max-Cut, Min-Cut, Max-$k$-CSP, Maximum-Weight-Bipartite-Matching, and the Traveling Salesman Problem. 
\item Finally, we investigate experimentally the effect of the entropy regularizer and the fast/slow mixture scheme
that we introduced (see \Cref{sec:main-proof}) and provide evidence
that it leads to better solution generators.

\end{itemize}


We begin with the formal presentation of our assumptions 
on the feature mappings of the instances and solutions and on
the structure of cost function of the combinatorial problem.
\begin{assumption}[Structured Feature Mappings] 
\label{assumption:1}
Let $S$ be the solution space and $\calI$ be the instance space.
There exist feature mappings  $\psi_S: S \to X$, for the solutions, and, 
$\psi_\calI : \calI \to Z$,
for the instances, where $X, Z$ are Euclidean vector spaces of dimension $n_X$ and $n_Z$, such that:
\begin{enumerate}
    \item  \emph{(Bounded Feature Spaces)} 
    The feature and instance mappings are bounded, i.e., 
    there exist $D_S, D_\calI > 0$ such that
    $\|\psi_S(s)\|_2 \leq D_S$, for all $s \in S$
    and $\|\psi_\calI(I)\|_2 \leq D_\calI$, for all $I \in \calI$.
    
    \item \emph{(Bilinear Cost Oracle)} The cost of a solution $s$ under 
    instance $I$ can be expressed as a bilinear function of the corresponding
        feature vector $\psi_S(s)$ and instance vector $\psi_\calI(I)$,  
        i.e., the solution oracle can be expressed as $\calO(s, I) = \psi_\calI(I)^\top M \psi_S(s)$ for any $s \in S, I \in \calI$, for some matrix $M$ with  $\|M\|_\F \leq C$.
    
    \item \emph{(Variance Preserving Features)} There exists $\alpha > 0$ such that $\var_{s \sim U(S)}[v \cdot \psi_S(s)] \geq \alpha \|v\|_2^2$ for any $v \in X$, where $U(S)$ is the uniform distribution over the solution space $S$.
    
    \item \emph{(Bounded Dimensions/Diameters)} 
    The feature dimensions $n_X, n_Z$,
    and the diameter bounds $D_S, D_\calI, C$ are bounded above by a 
    polynomial of the description size of the instance space $\calI$.
    The variance lower bound $\alpha$ is bounded below by $1/\poly([\calI])$.
\end{enumerate}
\end{assumption}
\begin{remark}[Boundedness and Bilinear Cost Assumptions] 
We remark that Items 1, 4 are simply boundedness
assumptions for the corresponding feauture mappings and usually
follow easily assuming that we consider reasonable feature mappings.
At a high-level, the assumption that the solution is a 
bilinear function of the solution and instance features (Item 2) 
prescribes that ``good'' feature mappings should enable a simple (i.e., bilinear) expression for the cost function.  In the sequel we see that this 
is satisfied by natural feature mappings for important classes 
of combinatorial problems. 
\end{remark}
\begin{remark}[Variance Preservation Assumption]
In Item 3 (variance preservation) 
we require that the solution feature mapping has variance along every direction, i.e., the feature vectors corresponding to the solutions 
must be ``spread-out'' when the underlying solution generator
is the uniform distribution.  As we show, this assumption is crucial
so that the gradients of the resulting optimization objective are not-vanishing, allowing for its efficient optimization.
\end{remark}

We mention that various important combinatorial problems satisfy \Cref{assumption:1}. For instance, \Cref{assumption:1} is satisfied by Max-Cut, Min-Cut, Max-$k$-CSP, Maximum-Weight-Bipartite-Matching and Traveling Salesman. We refer the reader to the upcoming \Cref{sec:examples} for an explicit description of the structured feature mappings for these problems.
Having discussed \Cref{assumption:1}, we are ready to state our main abstract result which resolves \Cref{question:1}.

\begin{theorem}
\label{theorem:main}
    Consider a combinatorial problem with instance space $\calI$ that satisfies \Cref{assumption:1}.
For any prior $\calR$ over $\calI$ and $\eps > 0,$ there exists a family of solution generators $\calP = \{ p(w) : w \in \calW\}$ with parameter space $\calW$ that is complete, 
compressed and, efficiently optimizable.
\end{theorem}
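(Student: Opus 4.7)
The plan is to parameterize the solution generator as a mixture of a log-linear (Boltzmann) distribution built from the bilinear oracle and a small ``exploration'' component given by the uniform distribution $U(S)$, and to optimize an entropy-regularized version of $\calL$. Concretely, for a parameter matrix $W \in \reals^{n_Z \times n_X}$ restricted to a Frobenius ball of radius $R = \poly([\calI], \log(1/\eps))$, I would define
\[
q(s; I; W) \propto \exp\!\bigl(\psi_\calI(I)^\top W \psi_S(s)\bigr), \qquad p(s; I; W) = (1-\lambda)\, q(s; I; W) + \lambda\, U(S),
\]
with $\lambda = \eps/\poly([\calI])$, and minimize $\calL(W) - \tau\, \E_{I \sim \calR}[H(p(\cdot;I;W))]$ for a small $\tau > 0$. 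This two-scale construction is the ``fast/slow mixture'' alluded to in the introduction: $q$ is allowed to concentrate aggressively on promising solutions, while the uniform component and the entropy term prevent the landscape from developing bad stationary points and prevent the parameter $W$ from escaping to infinity.

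For completeness, I would take $\overline{W} = -\eta M$ with $\eta = \poly([\calI], 1/\eps)$ large enough that the Boltzmann component puts mass at least $1 - \eps/(2 C D_\calI D_S)$ on $\eps/2$-approximate minimizers of $L(\cdot;I)$; this is the standard Gibbs concentration computation, once Items 1--2 of \Cref{assumption:1} are used to bound the range of $L(s;I)$. The uniform mixture adds at most $\lambda\, C D_\calI D_S \leq \eps/2$ to the loss, so $\calL(\overline W) \leq \opt + \eps$. Compressedness is immediate from Item 4: $W$ has $n_X n_Z$ entries, each of magnitude at most $R$, so the description size is $\poly([\calI], \log(1/\eps))$.

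The heart of the proof -- and the step I expect to be the main obstacle -- is efficient optimizability. Using the bilinear-cost identity $L(s;I) = \psi_\calI(I)^\top M \psi_S(s)$ I would rewrite
\[
\calL_I(W) = \psi_\calI(I)^\top M\, \E_{s \sim p(\cdot;I;W)}[\psi_S(s)],
\]
so that the policy gradient admits the explicit covariance form
\[
\nabla_W \calL_I(W) = (1-\lambda)\, \psi_\calI(I)\, \bigl(\Sigma_q(I;W)\, M^\top \psi_\calI(I)\bigr)^\top,
\]
where $\Sigma_q(I;W)$ is the covariance matrix of $\psi_S(s)$ under $q(\cdot;I;W)$. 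The key lemma to prove is a Polyak--\L ojasiewicz-type lower bound relating the norm of the regularized gradient to the suboptimality $\calL(W) - \opt$. I would split the analysis into two regimes. When $\|W\|_F$ is small, $q$ is close in total variation to $U(S)$, so Item 3 of \Cref{assumption:1} (variance preservation) transfers to $q$ via a perturbation argument on the partition function and lower-bounds the smallest eigenvalue of $\Sigma_q$ by a constant multiple of $\alpha$; this rules out early vanishing-gradient pitfalls and extracts the suboptimality gap from the gradient via the direction $M^\top \psi_\calI(I)$. When $\|W\|_F$ is large, the entropy term contributes a gradient that pushes the distribution back toward uniform (equivalently, $W$ back toward the origin), so the regularized gradient cannot vanish unless $\calL$ is already close to $\opt$; this rules out the runaway suboptimal stationary points that plague, for example, the product-distribution parameterization of Max-Cut described in \Cref{remark:just some properties}. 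Combining these two regimes yields the desired PL inequality, and standard descent analysis for smooth functions -- smoothness of $\calL$ following from bounded diameters plus the Frobenius constraint -- then gives convergence of (stochastic) projected policy gradient in $\poly([\calI], 1/\eps)$ iterations, completing the proof of \Cref{theorem:main}.
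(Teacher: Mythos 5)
Your completeness and compression arguments match the paper's (Gibbs concentration for $\overline W = -\eta M$, parameter count from Item 4 of \Cref{assumption:1}), but the heart of your proof --- efficient optimizability --- has a genuine gap, and it sits exactly at the point the paper identifies as Obstacle II. Your mixture uses a \emph{fixed} uniform component $\lambda\, U(S)$, which is independent of $W$ and therefore contributes nothing to $\nabla_W$. As your own gradient formula shows, the entire gradient of the expected cost is $(1-\lambda)\,\psi_\calI(I)\bigl(\Sigma_q(I;W)\, M^\top\psi_\calI(I)\bigr)^\top$, governed solely by the covariance $\Sigma_q$ of the Boltzmann component; the same is true of the entropy regularizer, whose gradient is again a covariance under $q$ (cf.\ \Cref{lemma:regular}). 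Inside the Frobenius ball of radius $R=\poly([\calI],1/\eps)$ --- which must be at least $\|M\|_\F/\lambda$ for your completeness witness to lie in $\calW$ --- the component $q(W)$ can concentrate exponentially fast on a \emph{sub-optimal} solution (take $W=-\eta M'$ for a wrong matrix $M'$ and $\eta$ polynomially large). At such a point $\Sigma_q$ is exponentially small, so both the cost gradient and the entropy gradient vanish while $\calL(W)-\opt$ stays bounded away from zero. Your claimed PL inequality is therefore false for this construction, and the two-regime split does not cover this case: in the large-$\|W\|_\F$ regime the entropy term does \emph{not} push back toward uniform, because its gradient is itself damped by the vanishing covariance.

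The paper's construction differs in precisely the one place needed to repair this: the slow component is not $U(S)$ but $\phi(\rho^\star W)$, i.e., it \emph{depends on} $W$ through a small temperature rescaling. Its gradient contribution is $\beta^\star\rho^\star\var_{x\sim\phi(\cdot;\rho^\star w)}[(c+\lambda w)\cdot x]$ (\Cref{lemma:ensemble}), and because $\rho^\star$ is chosen small enough that $\phi(\rho^\star W)$ stays almost uniform over the whole parameter ball, Item 3 of \Cref{assumption:1} transfers to it (\Cref{lemma:variance almost uni}) and yields the polynomially-bounded-below correlation $\nabla\calL_\lambda(W)\cdot(W+M/\lambda)\geq\gamma\,(\calL_\lambda(W)-\calL_\lambda(-M/\lambda))$, i.e., quasar convexity (\Cref{prop:quasar}) rather than a PL inequality. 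Note also that quasar convexity only requires lower-bounding the quadratic form $v^\top\Sigma v$ along the single direction $v=c+\lambda w$, which is exactly what the variance bound gives, whereas a PL bound would require controlling $\|\Sigma_q v\|$ itself --- a strictly harder task here. If you replace your fixed uniform component by $\phi(\rho^\star W)$, replace the entropy of the mixture by the weighted sum of component entropies $(1-\beta^\star)H(W)+(\beta^\star/\rho^\star)H(\rho^\star W)$, and target quasar convexity instead of PL, your outline essentially becomes the paper's proof.
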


A sketch behind the design of the family $\calP$ can be found in \Cref{sec:main-proof} and \Cref{sec:details}.



    
    
    
    
\begin{remark}[Computational Barriers in Sampling]
We note that the families of generative models (a.k.a., solution generators) that we provide have polynomial parameter complexity and are optimizable in a small number of steps using gradient-based methods. Hence, in a small number of iterations, gradient-based methods converge to distributions whose mass is concentrated on nearly optimal solutions. This holds, as we show, even for challenging ($\mathrm{NP}$-hard) combinatorial problems. Our results do not, however, prove $\mathrm{P} = \mathrm{NP}$, as it may be computationally hard to \emph{sample} from our generative models. We remark that while such approaches are in theory hard, such models seem to perform remarkably well experimentally where sampling is based on Langevin dynamics techniques \cite{song2020improved,song2020score}. Though as our theory predicts, and simulations support, landscape problems seem to be a direct impediment even to obtain good approximate solutions. 
\end{remark}

\begin{remark}
[Neural Networks as Solution Samplers]
A natural question would be whether our results can be extended to the case where neural networks are (efficient) solution samplers, as in \cite{bengio-tsp}.
Unfortunately, a benign landscape result for neural network solution generators most likely cannot exist. It is well-known that end-to-end theoretical guarantees for training neural networks are out of reach since the corresponding optimization tasks are provably computationally intractable, see, e.g., \cite{chen2022hardness} and the references therein.
\end{remark}

Finally, we would like to mention an interesting aspect of \Cref{assumption:1}. Given a combinatorial problem, \Cref{assumption:1} essentially asks for the \emph{design} of feature mappings for the solutions and the instances that satisfy desiderata such as boundedness and variance preservation. Max-Cut, Min-Cut, TSP and Max-$k$-CSP and other problems satisfy \Cref{assumption:1} because we managed to design appropriate (problem-specific) feature mappings that satisfy the requirements of \Cref{assumption:1}. There are interesting combinatorial problems for which we do not know how to design such good feature mappings. For instance, the "natural" feature mapping for the Satisfiability problem (SAT) (similar to the one we used for Max-$k$-CSPs) would require feature dimension exponential in the size of the instance (we need all possible monomials of $n$ variables and degree at most $n$) and therefore, would violate Item 4 of \Cref{assumption:1}.



\subsection{Related Work}

\paragraph{Neural Combinatorial Optimization.} Tackling combinatorial optimization problems
constitutes one of the most fundamental tasks of theoretical computer science \cite{grotschel2012geometric, papadimitriou1998combinatorial, schrijver2003combinatorial, schrijver2005history, cook1995combinatorial} and various approaches have been studied
for these problems 
such as
local search methods, branch-and-bound algorithms and meta-heuristics such as genetic algorithms and simulated annealing. Starting from the seminal work of \cite{hopfield1985neural}, researchers apply neural networks \cite{smith1999neural,vinyals2015pointer,bengio-tsp} to solve combinatorial optimization tasks. 
In particular, researchers have explored the power of machine learning, reinforcement learning and deep learning methods for solving combinatorial optimization problems \cite{bengio-tsp,rl-app-comb, rl-app-tsp, rl-app-tsp2,bengio2021machine,mazyavkina2021reinforcement,nazari2018reinforcement,silver2016mastering,mnih2013playing,silver2017mastering,emami2018learning,kool2018attention,zhou2020graph,cappart2021combinatorial,ma2019combinatorial,gasse2019exact,kong2019new}.

The use of neural networks in combinatorial problems is extensive 
\cite{selsam2018learning,joshi2019efficient,gasse2019exact,yehuda2020s,mazyavkina2021reinforcement,bengio-tsp,khalil2017learning,yolcu2019learning,chen2019learning,yao2019experimental,kwon2020pomo,kwon2021matrix,delarue2020reinforcement,nandwani2020neural,toenshoff2021graph,amizadeh2018learning,karalias2020erdos,jegelka2022theory,schuetz2022combinatorial,angelini2023modern} and various papers aim to understand  the theoretical ability of neural networks to solve such problems \cite{hertrich2023provably,hertrich2023relu,gamarnik2023barriers}.
Our paper builds on the framework of the influential experimental 
work of \cite{bengio-tsp} to tackle combinatorial optimization problems such as TSP
using neural networks and reinforcement learning. \cite{kim2021learning} uses an entropy maximization scheme in order to generate diversified candidate solutions. This experimental heuristic is quite close to our theoretical idea for entropy regularization. In our work, entropy regularization allows us to design quasar-convex landscapes and the fast/slow mixing scheme to obtain diversification of solutions.
Among other related applied works,
\cite{kwon2020pomo,kim2022sym} study the use of Transformer architectures combined with the Reinforce algorithm employing symmetries (i.e.,  the existence of multiple optimal solutions of a CO problem) improving the generalization capability of Deep RL NCO and \cite{ma2021learning} studies Transformer architectures and aims to learn improvement heuristics for routing problems using RL. 

\paragraph{Gradient Descent Dynamics.} Our work provides theoretical understanding on the gradient-descent landscape arising in NCO problems. Similar questions regarding the dynamics of gradient descent have been studied in prior work concerning neural networks; for instance, \cite{abbe2020universality} and \cite{abbe2021power} fix the algorithm (SGD on neural networks) and aim to understand the power of this approach (which function classes can be learned). 
Various other works study gradient descent dynamics in neural networks. We refer to \cite{abbe2018provable,abbe2020universality,abbe2021staircase,malach2021connection,barak2022hidden,damian2022neural,abbe2022non,abbe2022merged,bietti2022learning,abbe2023sgd,abbe2021power,edelman2023pareto} (and the references therein) for a small sample of this line of research.

\section{Combinatorial Applications}
\label{sec:examples}
We now consider concrete combinatorial problems and show that
there exist appropriate and natural feature mappings for the solutions and instances that satisfy \Cref{assumption:1}; so \Cref{theorem:main} is applicable for any such combinatorial task. For a more detailed treatment, we refer to \Cref{sec:apps}.

\paragraph{Min-Cut and Max-Cut.}
Min-Cut (resp. Max-Cut) are central graph combinatorial problems where the task is to
split the nodes of the graph in two subsets
so that the number of edges from one subset to
the other (edges of the cut) is minimized (resp.
maximized).  Given a graph $G$ with $n$ nodes represented by
its Laplacian matrix $L_G = D - A$, 
where $D$ is the diagonal degree matrix and $A$ is the adjacency matrix of the graph, the goal 
in the Min-Cut (resp. Max-Cut) problem
is to find a solution vector $s \in \{\pm 1\}^n$
so that  $s^\top L_G s/4$ is minimized (resp. maximized) \cite{spielman2007spectral}.

We first show that there exist natural feature mappings so that 
the cost of every solution $s$ under any instance/graph $G$ is
a bilinear function of the feature vectors, see Item 2 of \Cref{assumption:1}.
We consider the correlation-based feature mapping
$\psi_S(s) = (s s^\top)^{\flat} \in \R^{n^2}$, where by $(\cdot)^\flat$ we denote the vectorization/flattening operation
and the Laplacian for the instance (graph),
$\psi_\calI(G) = (L_G)^{\flat} \in \R^{n^2}$.  
Then simply setting the matrix $M $ to be 
the identity $I \in \R^{n^2\times n^2} $ the cost of any
solution $s$ can be expressed as the bilinear
function $ \psi_\calI(G)^\top M \psi_S(s) =
(L_G^{\flat})^\top (s s^\top)^{\flat} = 
s^\top L_G s$.
We observe that for (unweighted) graphs with 
$n$ nodes the description size of the family of all instances $\mathcal{I}$ is roughly $O(n^2)$, and therefore the dimensions of the feature mappings $\psi_S, \psi_\calI$ are clearly polynomial in the description size of $\cal{I}$. Moreover, considering unweighted graphs, it holds that $\|\psi_\calI(G)\|_2, \|\psi_S(s)\|_2, \|M\|_\F \leq \poly(n) $.
Therefore, the constants $D_S, D_{\mathcal{I}}, C$ are polynomial
in the description size of the instance family.  

It remains to show that our solution feature mapping satisfies the 
variance preservation assumption, i.e., Item 3 in \Cref{assumption:1}. A uniformly random solution
vector $s \in \{\pm 1\}^n$ is sampled by setting each 
$s_i = 1$ with probability $1/2$ independently.  In that case, we have 
$\E[v \cdot x] = 0$ and therefore 
$
\var(v \cdot x) =
\E[(v \cdot x)^2] = \sum_{i\neq j} v_i v_j \E[x_i x_j] = \sum_{i} v_i^2 = \|v\|_2^2, $
since, by the independence of $x_i, x_j$, the cross-terms of the sum vanish. 
We observe that the same hold true for the Max-Cut problem and therefore,
structured feature mappings exist for Max-Cut as well (where $L(s;G) = -s^\top L_G s)$.
We shortly mention that there also exist structured feature mappings for Max-$k$-CSP. We refer to \Cref{theorem:k-csp} for further details.

\begin{remark}[Partial Instance Information/Instance Context]
\label{rem:partial-information-context}
We remark that \Cref{assumption:1} allows for the 
``instance'' $I$ to only contain partial information
about the actual cost function.
For example, consider the setting where each sampled
instance is an unweighted graph $G$ but the cost
oracle takes the form $\mathcal{O}(G, s)
= (L_G)^{\flat} M (s s^\top)^{\flat}$ for a matrix
$M_{ij} = a_i $ when $i = j$ and 
$M_{ij} = 0$ otherwise.  This cost function models having a \textbf{unknown weight function}, i.e.,
the weight of edge $i$ of $G$ is $a_i$ if edge $i$
exists in the observed instance $G$,
on the edges of the observed unweighted graph $G$, that the algorithm has to learn in order to be able to
find the minimum or maximum cut.
For simplicity, in what follows, we will continue 
referring to $I$
as the instance even though it may only contain
partial information about the cost
function of the underlying combinatorial problem.
\end{remark}
\paragraph{Maximum-Weight-Bipartite-Matching and TSP.}
Maximum-Weight-Bipartite-Matching (MWBP)  is another graph problem that, 
given a bipartite graph $G$ with $n$ nodes and $m$ edges, asks for the maximum-weight matching. The feature vector corresponding to a matching can be 
represented as a binary matrix 
$R\in \{0,1\}^{n \times n}$
with $\sum_{j} R_{ij} = 1$ for all $i$ and $\sum_{i} R_{ij} = 1$ for all $j$, i.e., $R$ is a permutation matrix.
Therefore, for a candidate matching $s$, we set $\psi_S(s)$ to be the  matrix $R$ defined above.
Moreover, the feature vector of the  graph is the (negative flattened) adjacency matrix $E^{\flat}$. The cost oracle is then $\cost(R;E) = \sum_{ij}  E_{ij} M_{ij} R_{ij}$ perhaps for an
unknown weight matrix $M_{ij}$ (see \Cref{rem:partial-information-context}).
For the Traveling Salesman Problem (TSP) the feature vector is again a 
matrix $R$ with the additional constraint that $R$ has to represent a single cycle (a tour over all cities). The cost function for TSP is again 
$\cost(R;E) = \sum_{ij} E_{ij} M_{ij} R_{ij}$.  One can check that those 
representations of the instance and solution satisfy the assumptions of Items 1 and 4.
Showing that the variance of those representations has a polynomial lower bound is more
subtle and we refer the reader to the Supplementary Material.

We shortly mention that the solution generators for Min-Cut and Maximum-Weight-Bipartite-Matching are also efficiently samplable.

\section{Optimization Landscape}
\label{sec:main-proof}

\begin{figure*}

\begin{center}
\includegraphics[width=0.3\textwidth]{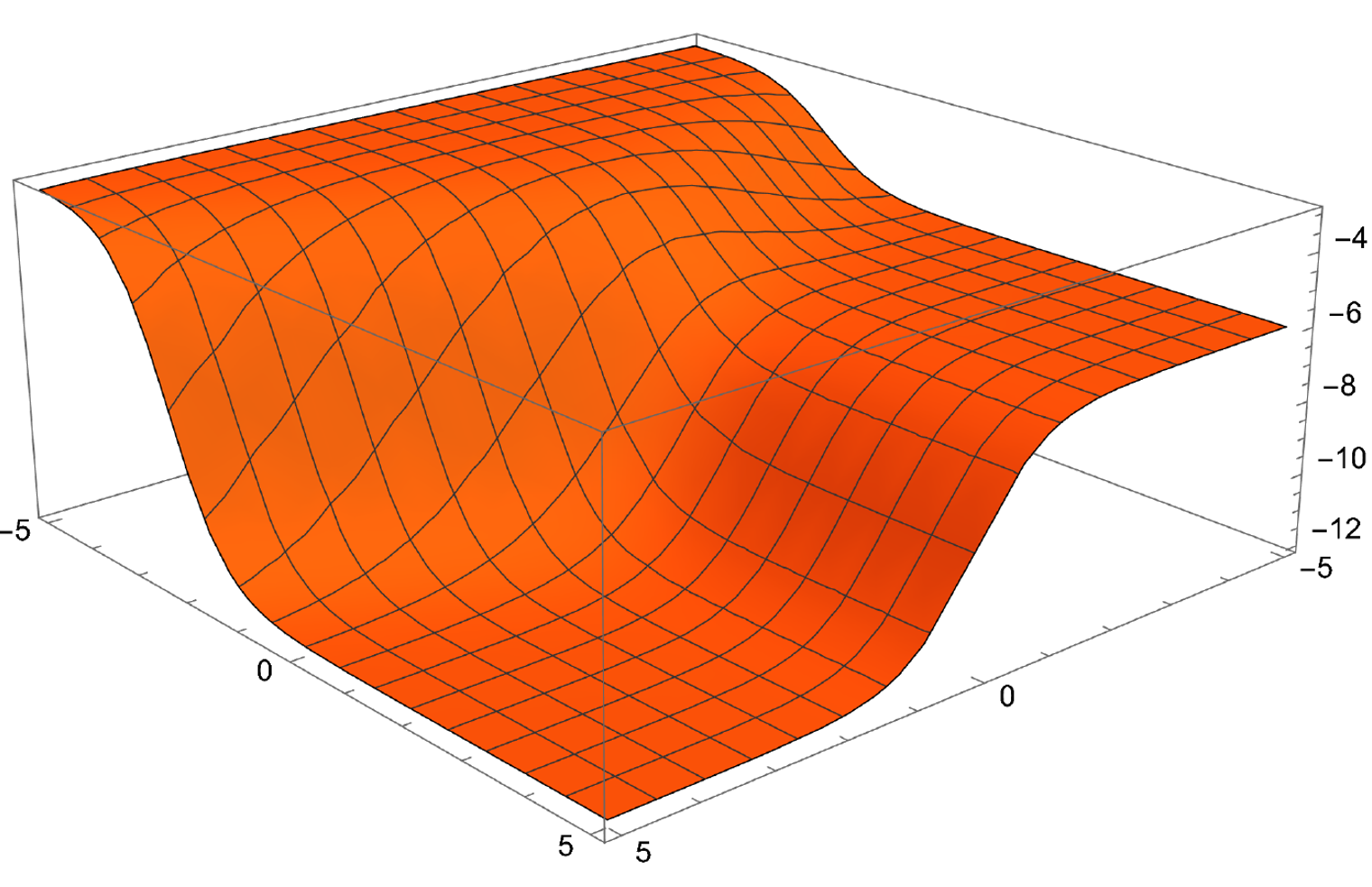}
\includegraphics[width=0.3\textwidth]{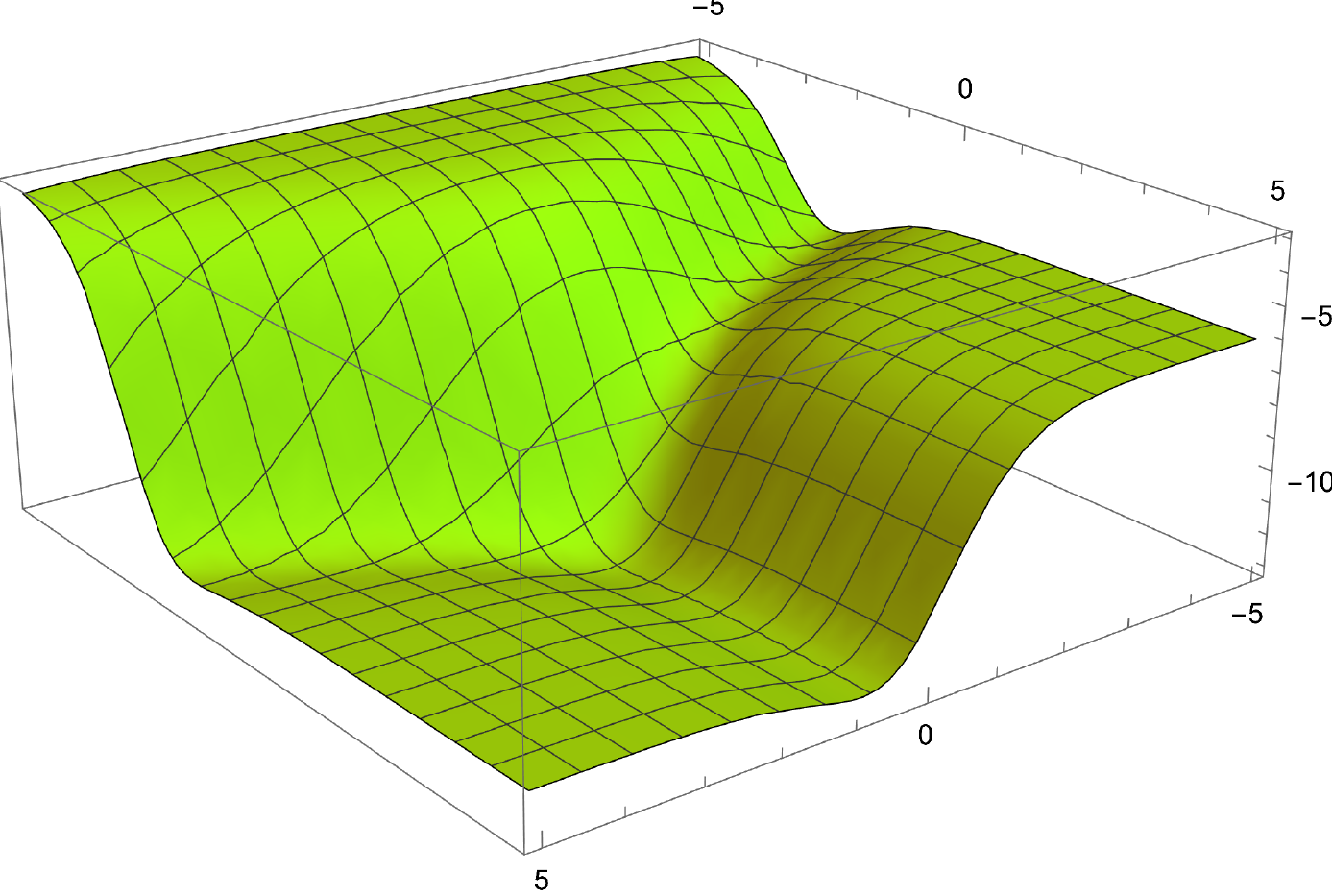}
\includegraphics[width=0.3\textwidth]{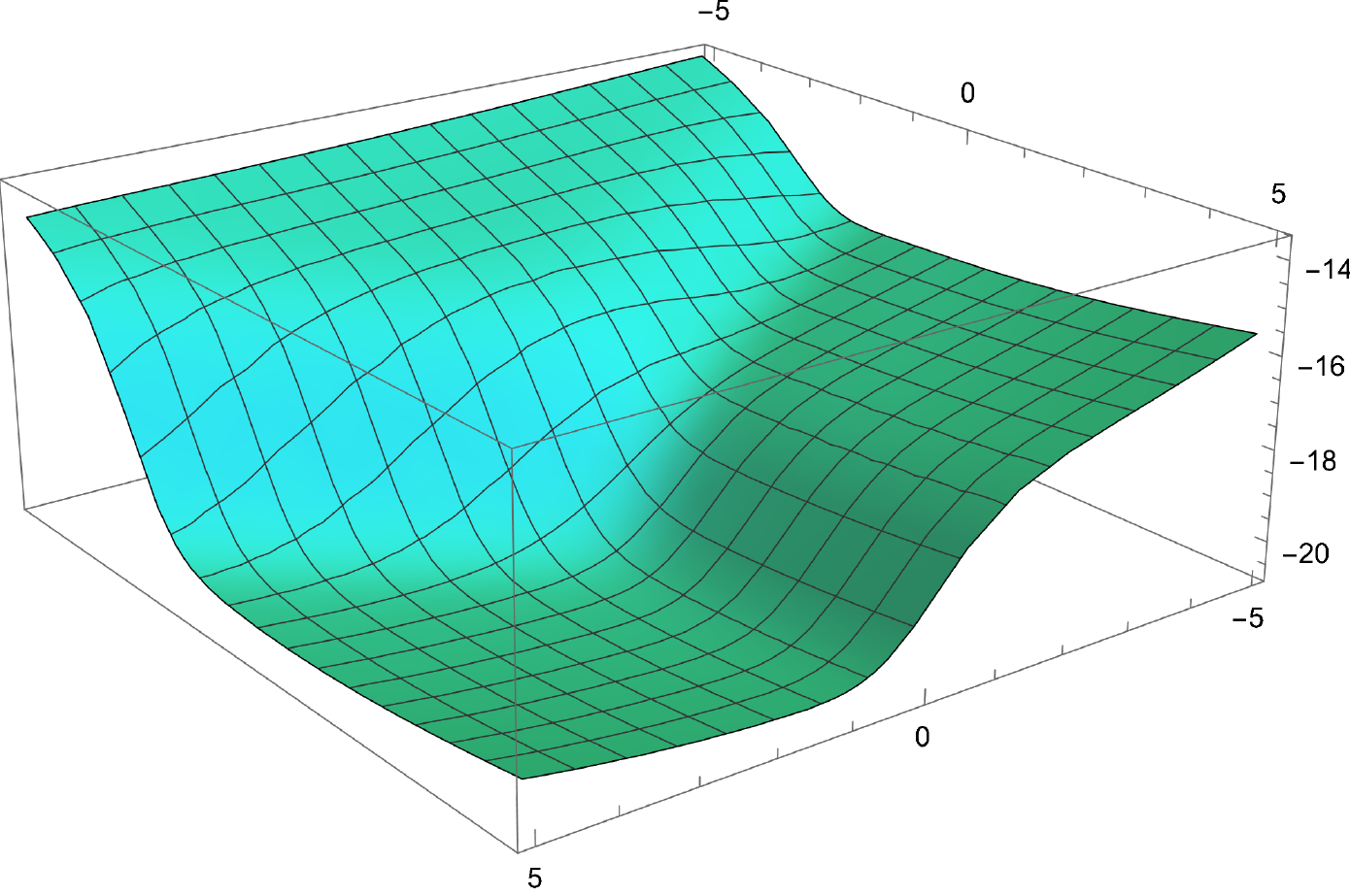}
\end{center}
    
  \caption{
  In the \textbf{left} plot, we show the landscape of the ``vanilla'' objective of \Cref{eq:vanilla} 
  for the feature domain $X = \{(1,0), (2,2), (0,2)\}$ and linear cost oracle  $c \cdot x $ for $c = (-3,-3)$.  We see that the ``vanilla'' objective is minimized 
  at the direction of $-c$, i.e., along the direction $\tau (1,1)$ for $\tau \to +\infty$.  We observe the two issues described in \Cref{sec:main-proof}, i.e.,
  that the true minimizer is a point at infinity, and 
  that gradients vanish so gradient descent may get trapped in sub-optimal solutions,
  (e.g., in the upper-right corner if initialized in the top corner).
  In the \textbf{middle} plot, we show the landscape of the entropy-regularized objective of \Cref{eq:regularized-obj} that makes the minimizer
  finite and brings it closer to the origin. Observe that even if a gradient iteration
  is initialized in the top corner it will eventually converge to the minimizer; however the rate of convergence may be very slow.
  The \textbf{right} plot corresponds to the
  loss objective where we
  combine a mixture of exponential families as solution generator, as in \Cref{eq:mixture}, and the entropy regularization approach. We observe that we are able to obtain a benign (quasar-convex) 
  landscape via the entropy regularization while the mixture-generator 
  guarantees non-vanishing gradients.
  }
  \label{fig:landscape}

\end{figure*}

\paragraph{Exponential Families as Solution Generators.}
A natural candidate to construct our family of solution generators is to consider the distribution that assigns to each solution $s \in S$ and instance $I \in \calI$ mass proportional to its score $\exp(-\tau L(s;I)) = \exp(-\tau \psi_\calI(I)^\top M \psi_S(s)) = \exp(-\tau z^\top M x)$ for some ``temperature'' parameter $\tau$, where $\psi_\calI$ and $\psi_S$ are the feature mappings promised to exist due to \Cref{assumption:1}, $z = \psi_\calI(I)$, and, $x = \psi_S(s)$. Note that as long as $\tau \to +\infty$, this distribution tends to concentrate on solutions that achieve small loss. 

\begin{remark}
To construct the above solution sampler one could artificially
query specific solutions to the cost oracle of \Cref{def:solution_cost_oracle} and try to learn the cost matrix $M$.
However, we remark that our goal (see \Cref{def:compressed-eff-opt}) is to show that 
we can train a parametric family via gradient-based methods so that
it generates (approximately) optimal solutions and not 
to simply learn the cost matrix $M$ via some other method and then use it to generate good solutions.
\end{remark}

\paragraph{Obstacle I: Minimizers at Infinity.}
One could naturally consider the parametric family $\phi(x;z;W) \propto \exp(z^\top W x)$ (note that with $W = -\tau M$, we recover the distribution of the previous paragraph) and try to perform gradient-based methods on the loss (recall that $L(x;z) = z^\top M x$)\footnote{We note that we overload the notation and assume that our distributions generate directly the featurizations $z$ (resp. $x$) of $I$ (resp. $s$).}
\begin{equation}
\label{eq:vanilla}
\calL(W) = 
\E_{z \sim \calR} \E_{x \sim \phi(\cdot; z; W)}[z^\top M x]\,.
\end{equation}
The question is whether gradient updates on the parameter $W$ eventually converge to a matrix $\overline{W}$ whose associated distribution $\phi(\overline{W})$ generates near-optimal solutions (note that the matrix $-\tau M$ with $\tau \to +\infty$ is such a solution). After computing the gradient of $\calL$, we observe that
\[
\nabla_W \calL(W) \cdot M = \var_{z \sim \calR, x \sim \phi(\cdot;z;W)}[z^\top M x] \geq 0\,,
\]
where the inner product between two matrices
$A \cdot B$ is the trace $\mathrm{Tr}(A^\top B) = \sum_{i,j} A_{ij} B_{ij}$.
This means that 
the gradient field of $\calL$
always has a contribution to the direction of $M$. Nevertheless the actual minimizer is at infinity, i.e., it corresponds to the point $\overline{W} = -\tau M$ when $\tau \to +\infty$. While the correlation with the optimal point is positive (which is encouraging), having such contribution to this direction is not a sufficient condition for actually reaching $\overline{W}$.
The objective has vanishing gradients at infinity 
and gradient descent may get trapped in sub-optimal stationary points, see 
the left plot in \Cref{fig:landscape}.

\paragraph{Solution I: Quasar Convexity via Entropy Regularization.}
Our plan is to try and make the objective landscape more benign by adding 
an entropy-regularizer. Instead of trying to make the objective convex 
(which may be too much to ask in the first place) \emph{we are able obtain 
a much better landscape with a finite global minimizer and a gradient 
field that guides gradient descent to the minimizer.}  Those properties are 
described by the so-called class of ``quasar-convex'' functions.
Quasar convexity (or weak quasi-convexity \cite{hardt2016gradient}) is a 
well-studied notion in optimization \cite{hardt2016gradient,hinder2020near,lee2016optimizing,zhou2017stochastic,hazan2015beyond} and can be considered as a high-dimensional generalization of unimodality. 
\begin{definition}
[Quasar Convexity \cite{hardt2016gradient,hinder2020near}]
\label{def:quasar}
Let $\gamma \in (0,1]$ and let 
$\overline{x}$ be a minimizer of the differentiable function 
$f : \reals^n \to \reals$. The function $f$ is \textbf{$\gamma$-quasar-convex} with respect to $\overline{x}$ on a domain $D \subseteq \reals^n$
if for all $x \in D$,~
$
\nabla f(x) \cdot ( x - \overline{x}) 
\geq \gamma(f(x) - f(\overline{x})).
$
\end{definition}
In the above definition, notice that the main property that we need to establish
is that the gradient field of our objective correlates positively with the 
direction $W-\overline{W}$, where $\overline{W}$ is its minimizer.
We denote by $H : \calW \to \reals$ the negative entropy of $\phi(W)$, i.e.,
\begin{equation}
    \label{eq:negentropy}
    H(W) = \E_{z \sim \calR} \E_{x \sim \phi(\cdot; z; W)} [\log \phi(x; z; W)]\,,
\end{equation}
and consider the \emph{regularized} objective
\begin{equation}
\label{eq:regularized-obj}
\calL_\lambda(W) = \calL(W) + \lambda H(W)\,,
\end{equation}
for some $\lambda > 0$.  
We show (follows from \Cref{lemma:regular}) that the gradient-field of the regularized objective indeed ``points'' towards a finite minimizer (the matrix $\overline{W} = -M/\lambda$):
\begin{align}
    \label{eq:reg-cor}
    \nabla_W \calL_\lambda(W) &\cdot (W + M/\lambda) = 
    \nonumber \\ 
    &\var [z^\top (W + M/\lambda) x] \geq 0 \,,
\end{align}
where the randomness is over $z \sim \calR, x \sim \phi(\cdot;z;W)$.
Observe that now the minimizer of $\calL_\lambda$ is the point $-M/\lambda$, which for $\lambda = \poly(\eps, \alpha, 1/C, 1/D_S, 1/D_\calI)$ (these are the parameters of \Cref{assumption:1}) is promised to yield a solution sampler that generates $\eps$-sub-optimal solutions (see also \Cref{prop:complete} and \Cref{app:complete}). 
Having the property of \Cref{eq:reg-cor} suffices for showing that a gradient
descent iteration (with an appropriately small step-size) will \emph{eventually} converge to the minimizer.



\paragraph{Obstacle II: Vanishing Gradients.}
While we have established that the gradient field of the regularized objective 
``points'' towards the right direction, the regularized objective still suffers 
from vanishing gradients, see the middle plot in \Cref{fig:landscape}.
In other words, $\gamma$ in the definition of quasar convexity 
(\Cref{def:quasar}) may be exponentially small, as it is proportional to the 
variance of the random variable $z^\top (W + M/\lambda) x$, see \Cref{eq:reg-cor}.
As we see in the middle plot of \Cref{fig:landscape}, the main issue is the
vanishing gradient when $W$ gets closer to the minimizer $-M/\lambda$ 
(towards the front-corner).  For simplicity, consider the variance 
along the direction of $W$, i.e., $\var[z^\top W x]$ and recall that $x$ is 
generated by the density $\exp(z^\top W x)/(\sum_{x \in X} \exp(z^\top W x))$.
When $\|W\|_2 \to +\infty$ we observe that the value $z^\top W x$ concentrates
exponentially fast to $\max_{x \in X} z^\top W x$ (think of the convergence of the 
soft-max to the max function). Therefore, the variance $\var[z^\top W x]$ may vanish
exponentially fast making the convergence of gradient descent slow.

\paragraph{Solution II: Non-Vanishing Gradients via Fast/Slow Mixture Generators.}
We propose a fix to the vanishing gradients issue by using a mixture of 
exponential families as a solution generator. 
We define the family of solution generators $\calP = \{p(W) : W \in \calW\}$ to be
\begin{equation}
\label{eq:mixture}
\calP = \left\{ (1-\beta^\star) \phi(W) + \beta^\star \phi(\rho^\star W ) : W \in \calW \right\}\,,
\end{equation}
for a (fixed) mixing parameter $\beta^\star$ and a (fixed) temperature parameter $\rho^\star$.
The main idea is to have the first component of the mixture to converge fast to 
the optimal solution (to $-M/\lambda$) while the second ``slow'' component that has
parameter $\rho^\star W$ stays closer to the uniform distribution over solutions that guarantees non-trivial variance (and therefore non-vanishing gradients).

More precisely, taking $\rho^\star$ to be sufficiently small, the distribution $\phi(\rho^\star W)$ is \emph{almost uniform} over the solution space $\psi_S(S)$. 
Therefore, in \Cref{eq:reg-cor}, the almost uniform distribution component 
of the mixture will add to the variance and allow us to show a lower bound. 
This is where Item 3 of \Cref{assumption:1} comes into play and gives us the desired non-trivial variance lower bound under the uniform distribution.
We view this fast/slow mixture technique as an interesting insight of our work: we use the ``fast'' component (the one with parameter $W$) to actually reach the optimal solution $-M/\lambda$ and and we use the ``slow'' component (the one with parameter $\rho^\star W$ that essentially generates random solutions) to preserve a non-trivial variance lower bound during optimization.

\section{A Complete, Compressed and Efficiently Optimizable Sampler}
\label{sec:details}
In this section, we discuss the main results that imply \Cref{theorem:main}:  the family $\calP$ of \Cref{eq:mixture} is complete, compressed and efficiently optimizable (for some choice of $\beta^\star, \rho^\star$ and $\calW$).

\paragraph{Completeness.}
First, we show that the family of solution generators of \Cref{eq:mixture} is complete. 
For the proof, we refer to \Cref{prop:complete} in \Cref{app:complete}.
At a high-level, we to pick $\beta^\star, \rho^\star$ to be of order $\poly(\eps, \alpha, 1/C,1/D_S,1/D_\calI)$. This yields that the matrix $\overline{W} = -M/\lambda$ is such that $\calL(\overline{W}) \leq \opt + \eps$, where $M$ is the matrix of Item 2 in \Cref{assumption:1} and $\lambda$ is $\poly(\eps/[\calI])$. To give some intuition about this choice of matrix, let us see how $\calL(\overline{W})$ behaves. By definition, we have that 
\[
\calL(\overline{W}) = \E_{z \sim \calR}
\E_{x \sim p(\cdot; z; \overline{W})}
\left[ z^\top M x \right]\,,
\]
where the distribution $p$ belongs to the family of \Cref{eq:mixture}, i.e., $p(\overline{W}) = (1-\beta^\star) \phi(\overline{W}) + \beta^\star \phi(\rho^\star W)$. Since the mixing weight $\beta^\star$ is small, we have that
$p(\overline{W})$ is approximately equal to $\phi(\overline{W})$. This means that our solution generator draws samples from the distribution whose mass at $x$ given instance $z$ is proportional to $\exp(-z^\top M x/\lambda)$ and, since $\lambda > 0$ is very small, the distribution concentrates to solutions $x$ that tend to minimize the objective $z^\top M x$. This is the reason why $\overline{W} = -M/\lambda$ is close to $\opt$ in the sense that $\calL(\overline{W}) \leq \opt + \eps$. 

\paragraph{Compression.} As a second step, we show (in \Cref{prop:compression}, see \Cref{app:compress}) that $\calP$ is a compressed family of solution generators. This result follows immediately from the structure of \Cref{eq:mixture} (observe that $W$ has $n_X ~ n_Z$ parameters)
and the boundedness of $\overline{W} = -M/\lambda$.

\paragraph{Efficiently Optimizable.} The proof of this result essentially corresponds to the discussion provided in \Cref{sec:main-proof}. 
Our main structural result shows that the landscape of the regularized objective 
with the fast/slow mixture solution-generator is quasar convex.
More precisely, we consider the following objective:
\begin{equation}
\label{eq:mixture-loss}    
\calL_\lambda(W) = \E_{z \sim \calR} 
\E_{x \sim p(\cdot; z; W)}[z^\top M x]
+ \lambda R(W)\,,
\end{equation}
where $p(W)$ belongs in the family $\calP$ of \Cref{eq:mixture} and $R$ is a weighted sum of two negative entropy regularizers (to be in accordance with the mixture structure of $\calP)$, i.e., $R(W) = (1-\beta^\star) H(W) + \beta^\star/\rho^\star H(\rho^\star W)$.
Our main structural results follows (for the proof, see \Cref{eq:proof quasar convex}).
\begin{proposition}
[Quasar Convexity]
\label{prop:quasar}
Consider $\eps > 0$ and a prior $\calR$ over $\calI$.
Assume that \Cref{assumption:1} holds.
The function $\calL_\lambda$ of \Cref{eq:mixture-loss} with domain $\calW$ is $\poly(\eps,\alpha, 1/C,1/D_S, 1/D_\calI)$-quasar convex with respect to $-M/\lambda$ on the domain $\calW$.
\end{proposition}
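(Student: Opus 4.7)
The plan is to verify the quasar-convex inequality
$\nabla \calL_\lambda(W) \cdot (W - \overline{W}) \geq \gamma (\calL_\lambda(W) - \calL_\lambda(\overline{W}))$
at $\overline{W} = -M/\lambda$ by producing matching upper and lower bounds on both sides that reduce to the common quantity $\E_{z \sim \calR} \| (W - \overline{W})^\top z \|_2^2$. First, I would decompose $\calL_\lambda = (1-\beta^\star) F_1 + \beta^\star F_2$, where
$F_1(W) = \E_{z}\E_{\phi(W)}[z^\top M x] + \lambda H(W)$ and
$F_2(W) = \E_{z}\E_{\phi(\rho^\star W)}[z^\top M x] + (\lambda/\rho^\star) H(\rho^\star W)$.
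A short exponential-family calculation gives the KL identity
$F_1(W) - F_1(\overline{W}) = \lambda \E_z \mathrm{KL}(\phi(\cdot;z;W)\,\|\,\phi(\cdot;z;\overline{W}))$; reparameterizing $\tilde{W} := \rho^\star W$ shows $F_2$ has the same structure as $F_1$ with regularization $\lambda/\rho^\star$, giving
$F_2(W) - F_2(\overline{W}) = (\lambda/\rho^\star) \E_z \mathrm{KL}(\phi(\cdot;z;\rho^\star W)\,\|\,\phi(\cdot;z;\rho^\star \overline{W}))$.
Crucially, both $F_1$ and $F_2$ (hence $\calL_\lambda$) are minimized at the \emph{same} point $\overline{W}$.

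For the directional gradient, the standard exponential-family identity $\nabla_W \E_{\phi(W)}[g(x)] = \E_z \cov_{\phi(\cdot;z;W)}(g(x), z x^\top)$ combined with the chain rule (and taking the trace inner product with $W - \overline{W}$) produces, in the spirit of \Cref{eq:reg-cor},
\begin{align*}
\nabla_W \calL_\lambda(W) \cdot (W - \overline{W})
&= \lambda(1-\beta^\star) \E_z \var_{\phi(\cdot;z;W)}(z^\top (W - \overline{W}) x) \\
&\quad + \lambda \beta^\star \rho^\star \E_z \var_{\phi(\cdot;z;\rho^\star W)}(z^\top (W - \overline{W}) x).
\end{align*}
Both summands are non-negative, so I lower-bound the LHS by keeping only the ``slow'' one. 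Choosing $\rho^\star$ of order $\poly(\eps,\alpha,1/C,1/D_S,1/D_\calI)$ small enough that $|\rho^\star z^\top W x| \leq \rho^\star D_\calI D_S \|W\|_\F = O(\alpha/D_S^2)$ uniformly over $W \in \calW$ and $z \in \supp(\calR)$ (possible by Items 1 and 4) makes $\phi(\cdot;z;\rho^\star W)$ pointwise within a factor $1 \pm O(\alpha/D_S^2)$ of the pushforward of $U(S)$ under $\psi_S$; applying Item 3 of \Cref{assumption:1} to the vector $v = (W - \overline{W})^\top z$ then yields
$\E_z \var_{\phi(\cdot;z;\rho^\star W)}(z^\top (W-\overline{W}) x) \geq (\alpha/2) \E_z \|(W-\overline{W})^\top z\|_2^2$.

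For the matching upper bound, the two KL identities above, combined with the second-order Taylor expansion $\mathrm{KL}(\phi(\theta)\|\phi(\theta')) = \tfrac{1}{2} \var_{\phi(\xi)}(z^\top (\theta-\theta') x)$ for some $\xi$ on the segment, and the deterministic estimate $\var(z^\top U x) \leq D_S^2 \|U^\top z\|_2^2$ (valid for \emph{any} distribution supported in $\{\|x\|\le D_S\}$ by Item 1) yield
$\calL_\lambda(W) - \calL_\lambda(\overline{W}) \leq \tfrac{\lambda D_S^2}{2} \E_z \|(W-\overline{W})^\top z\|_2^2$,
the extra $\rho^\star$ from the $F_2$-term being absorbed into the $\beta^\star$ weight. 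Dividing the lower bound by the upper bound cancels the common factor and produces $\gamma = \beta^\star \rho^\star \alpha / D_S^2 = \poly(\eps,\alpha,1/C,1/D_S,1/D_\calI)$.

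The delicate step is the uniform-approximation claim: $\rho^\star$ must be small enough that $\phi(\cdot; z; \rho^\star W)$ stays close to the pushforward-uniform distribution on $\psi_S(S)$ simultaneously for every $W \in \calW$, yet it must remain only polynomially small so that the prefactor $\beta^\star \rho^\star$ in $\gamma$ does not collapse. This is exactly what the fast/slow mixture design of \Cref{eq:mixture} decouples: the ``fast'' component is free to concentrate and deliver completeness, while the deliberately lukewarm ``slow'' component keeps the variance (and hence the gradient of the objective) non-vanishing throughout the entire admissible region $\calW$.
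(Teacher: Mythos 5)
Your proposal is correct, and it reaches the paper's bound $\gamma = \Theta(\beta^\star\rho^\star\alpha/D_S^2)$ by the same overall strategy --- lower-bound the correlation by the variance of the slow mixture component near uniformity, upper-bound the excess loss by a quadratic in the common quantity $\E_{z}\|(W-\overline{W})^\top z\|_2^2$, and divide --- but both key estimates are obtained by genuinely different lemmas. For the upper bound, the paper reduces to the per-instance vectorized loss and applies the mean value theorem to $\calL_\lambda^{\mathrm{vec}}$ together with the gradient-norm bound $\|\nabla\calL_\lambda^{\mathrm{vec}}(w')\|_2 \leq O(D^2)\|c+\lambda w'\|_2$ at an intermediate point of the segment (\Cref{lemma:bounded-gradient}); you instead use the exact exponential-family identity $F_i(W)-F_i(\overline{W})=\lambda_i\,\E_z \mathrm{KL}(\cdot\|\cdot)$ followed by the Bregman/Taylor form of the KL as half a variance at an intermediate natural parameter. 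This is a nice structural shortcut: it makes it immediate (rather than a consequence of the completeness analysis) that both mixture components are minimized at the common point $-M/\lambda$, and it explains the $\beta^\star/\rho^\star$ weighting in the regularizer $R$. For the variance lower bound, the paper's \Cref{lemma:variance almost uni} compares the first and second moments of $\phi(\cdot;\rho^\star w)$ to those of the uniform distribution via the mean value theorem; your pointwise density-ratio argument, transferred to variances via $\var_\mu[f]=\min_a \E_\mu[(f-a)^2]$, is more elementary and in fact only requires the log-density ratio to be $O(1)$ rather than $O(\alpha/D_S^2)$, so your choice of $\rho^\star$ is more conservative than necessary while still being $1/\poly$, matching the paper's $\rho^\star = \Theta(\alpha/(BD^3))$ in spirit. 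Finally, you handle the average over instances by bounding both sides of the quasar inequality by the same quadratic in $z$, whereas the paper proves per-instance quasar convexity and invokes \Cref{lemma:comb of quasar}; both are valid, and your bookkeeping of the factor $\lambda$ in the correlation identity is consistent with \Cref{lemma:ensemble}.
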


Since $\rho^\star$ is small (by \Cref{prop:complete}), $H(\rho^\star W)$ is essentially constant and close in value to the negative entropy of the uniform distribution. Hence, the effect of $R(W)$ during optimization is essentially the same as that of $H(W)$ (since $\beta^\star$ is close to 0).
We show that $\calL_\lambda$  is quasar convex with a non-trivial parameter $\gamma$ (see \Cref{prop:quasar}). We can then apply (in a black-box manner) the 
convergence results from \cite{hardt2016gradient} to optimize it using projected SGD. We show that SGD finds a weight matrix $\wh{W}$ such that the solution generator $p(\wh{W})$ generates solutions achieving actual loss $\calL$ close to that of the near optimal matrix $\overline{W} = -M/\lambda$, i.e., $\calL(\wh{W}) \leq \calL(\overline{W}) +\eps$. For further details, see \Cref{app:convergence}.

\section{Experimental Evaluation}
\label{sec:exp-regularizer}
In this section, we investigate experimentally the effect of our main theoretical contributions, the entropy regularizer (see \Cref{eq:negentropy}) and the fast/slow mixture scheme (see \Cref{eq:mixture}). We try to find the Max-Cut of a fixed graph $G$, 
i.e., the support of the prior $\mathcal{R}$ is a single graph.
Similarly to our theoretical results, our sampler is of the form $e^{\mathrm{score}(s;w)}$, where $s \in \{-1,1\}^n$ (here $n$ is the number of nodes in the graph) is a candidate solution of the Max-Cut problem. For the score function we use a simple linear layer (left plot of \Cref{fig:entropy-experiments}) and a 3-layer ReLU network (right plot of \Cref{fig:entropy-experiments}). 

In this work, we focus on instances where the number of nodes $n$ is small (say $n = 15$). In such instances, we can explicitly compute the density function and work with an \emph{exact} sampler. We generate 100 random $G(n,p)$ (Erdős–Rényi) graphs with $n=15$ nodes and 
$p=0.5$ and train solution generators using both the ''vanilla''
 loss $\mathcal{L}$ and
the entropy-regularized loss $\mathcal{L}_\lambda$ with the fast/slow mixture scheme.
We perform 600 iterations and, for the entropy regularization, we progressively decrease the regularization
weight, starting from 10, and dividing it by $2$ every 60 iterations. Out of the 100 trials we found that our proposed objective was always able to find the optimal cut
while the model trained with the vanilla loss was able to find
it for approximately $65\%$ of the graphs (for 65 out of 100 using the linear network and for 66 using the ReLU network).

Hence, our experiments demonstrate that while the unregularized objective is often ``stuck'' at sub-optimal solutions -- and this happens even for very small instances ($n = $15 nodes) -- 
of the Max-Cut problem, the objective motivated by our theoretical results is able to find the optimal solutions.
For further details, see \Cref{sec:experiments}.
We leave further experimental evaluation of our approach as future work.

\begin{figure}[ht!]

\begin{center}
    
\includegraphics[scale=0.5]{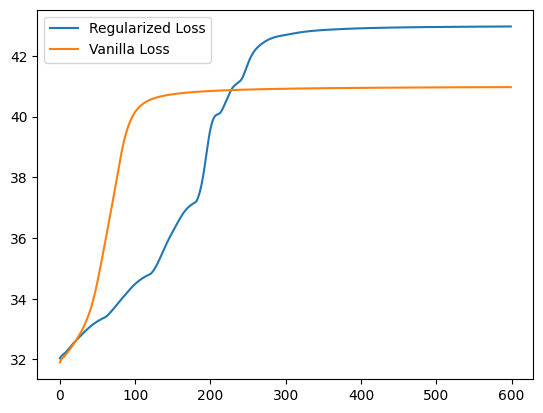}
\includegraphics[scale=0.5]{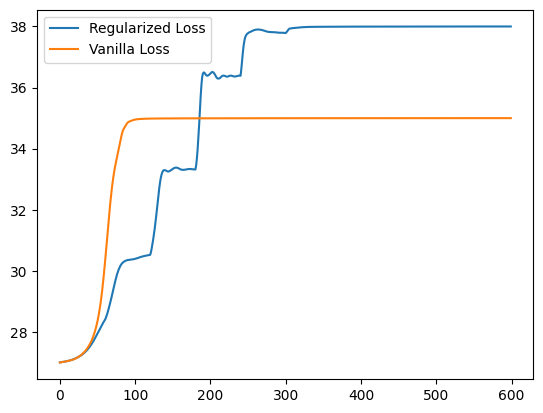}
\end{center}
\caption{
Plot of the Max-Cut value trajectory of the ``vanilla'' objective and entropy-regularized objective with the slow/fast mixture scheme.
We remark that we plot the value of the cut of each iteration
(and not the value of the regularized-loss).
On the horizontal axis we plot the number of iterations
and on the vertical axis we plot the 
achieved value of the cut.
Both graphs used were random $G(n, p)$ graphs generated 
with $n=15$ nodes and edge probability $p = 0.5$.
For the left plot we used a linear network (the same 
exponential family as the one used in our theoretical results).  
For the right plot we used a 
simple $3$-Layer ReLU network to generate the scores.
We observe that the ''vanilla'' loss gets stuck on
sub-optimal solutions. 
}
\label{fig:entropy-experiments}
\end{figure}

\section{Conclusion}

Neural networks have proven to be extraordinarily flexible, and their promise for combinatorial optimization appears to be significant. Yet as our work demonstrates, while gradient methods are powerful, without a favorable landscape, they are destined to fail. We show what it takes to design such a favorable optimization landscape. 

At the same time, our work raises various interesting research questions regarding algorithmic implications. An intriguing direction has to do with efficiently samplable generative models.
In this paper we have focused on the number of parameters that we have to optimize and on the optimization landscape of the corresponding objective, i.e., whether ``following'' its gradient field leads to 
optimal solutions. Apart from these properties it is important and interesting to have parametric
classes that can efficiently generate samples (in terms of computation). Under standard computational complexity assumptions, it is not possible to design solution generators which are both efficiently optimizable and samplable for challenging combinatorial problems.  An interesting direction is to relax our goal to generating approximately optimal solutions:
\begin{openquestion}
Are there complete, compressed, efficiently optimizable and samplable solution generators that obtain \textbf{non-trivial approximation guarantees} for challenging combinatorial tasks?
\end{openquestion}

\bibliography{clean2.bib}

\appendix

\newpage 
\section{Preliminaries and Notation}

This lemma is a useful tool for quasar convex functions.
\begin{lemma}
[\cite{hardt2016gradient}]
\label{lemma:comb of quasar}
Suppose that the functions $f_1,\ldots, f_n$ are individually $\gamma$-quasar convex in $X$
with respect to a common global minimum $\overline{x}$.
Then for non-negative weights $a_1, \ldots, a_n$, the linear combination $f = \sum_{i \in [n]} a_i f_i$ is also $\gamma$-quasar convex with respect to $\overline{x}$  
in $X$.
\end{lemma}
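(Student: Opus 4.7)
The plan is to unfold the definition of $\gamma$-quasar convexity component by component, scale each inequality by its non-negative weight $a_i$, sum, and invoke linearity of the gradient.

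First I would verify that $\overline{x}$ is indeed a global minimum of $f = \sum_{i \in [n]} a_i f_i$ on $X$, which is needed before \Cref{def:quasar} can even be applied to $f$. This is immediate: since $\overline{x}$ minimizes each $f_i$ on $X$, we have $f_i(x) - f_i(\overline{x}) \geq 0$ for every $x \in X$, and multiplying by $a_i \geq 0$ and summing over $i$ gives $f(x) - f(\overline{x}) \geq 0$, so $\overline{x}$ is a minimizer of $f$ as well.

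The core of the argument is then a single line. Using the quasar-convex inequality for each component, $\nabla f_i(x) \cdot (x - \overline{x}) \geq \gamma (f_i(x) - f_i(\overline{x}))$, I would multiply both sides by $a_i \geq 0$ — where the non-negativity is precisely what preserves the direction of the inequality — and sum over $i \in [n]$. By linearity of the gradient, the left-hand side becomes $\nabla f(x) \cdot (x - \overline{x})$, while the right-hand side collapses to $\gamma (f(x) - f(\overline{x}))$ by definition of $f$. This is exactly the quasar-convex inequality for $f$ with the same parameter $\gamma$.

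There is essentially no obstacle here; the statement is almost tautological once the definition is unfolded. The two hypotheses that get used are non-negativity of the $a_i$ (to preserve the inequality under scaling) and the common minimizer $\overline{x}$ (so the component inequalities share a single reference point and may be added). The parameter $\gamma$ is preserved exactly — rather than being degraded to a weighted average or a minimum — because it factors linearly out of both sides of each component inequality, so it is unaffected by the summation.
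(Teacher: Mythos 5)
Your proof is correct and is the standard argument; the paper itself does not prove this lemma but imports it from \cite{hardt2016gradient}, and the intended derivation is exactly what you wrote: verify that $\overline{x}$ remains a minimizer of the weighted sum, then multiply each component inequality by $a_i \geq 0$, sum, and use linearity of the gradient. Your explicit observation that $\gamma$ survives unchanged because it factors linearly out of both sides is the right way to see why no degradation to a minimum or weighted average occurs.
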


In the proofs, we use the following notation: for a matrix $W$ and vectors $x,z$, we let
\begin{equation}
    \label{eq:density}
    \phi(x;z;W) = \frac{\exp(z^\top W z)}{\sum_{y \in X} \exp(z^\top W y) }
\end{equation}
be a probability mass function over $X$ 
and we overload the notation as
\begin{equation}
    \phi(x;w) = \frac{\exp( w \cdot x )}{\sum_{y \in X} \exp(w \cdot y) }\,.
\end{equation}

\section{The Proof of \Cref{remark:just some properties}}
\label{proof:just some properties}
\begin{proof}
Let $x_1,\ldots,x_n$ be the variables of the Max-Cut problem of interest and $S = \{-1,1\}^n$ be the solution space. 
Consider $\calP$ to be the collection of product distributions over $S$, i.e., for any $p \in \calP$, it holds that, for any $s \in S$, $\Pr_{x \sim p}[x = s] = \prod_{i \in [n]} p_i^{\frac{1+s_i}{2}}(1-p_i)^{\frac{1-s_i}{2}}$. 
Let us consider the cube $[\eps, 1-\eps]^n$.
This family is complete since the $O(\eps)$-sub-optimal solution of $I$ belongs to $\calP$ and is compressed since the description size is $\poly(n,\log(1/\eps))$. We show that in this setting there exist bad stationary points. 
Let $L_G$ be the Laplacian matrix of the input graph.
For some product distribution $p \in \calP$, it holds that
\[
\calL(p) = -\E_{x \sim p(\cdot)}[x^\top L_G x] =
-(2p-1)^\top \overline{L}_G (2p-1)
\,, ~~~~
\nabla_p \calL(p) = -4 \overline{L}_G (2p-1)\,,
\]
where $\overline{L}_G$ is zero in the diagonal and equal to the Laplacian otherwise.
Let us consider a vertex of the cube $p \in [\eps, 1-\eps]^n$ which is highly and strictly sub-optimal, i.e., any single change of a node would strictly improve the number of edges in the cut and the score attained in $p$ is very large compared to $\min_{x \in S} -x^\top L_G x$. For any $i \in [n]$, we show that
\[
(\nabla \calL(p) \cdot e_i) ((2p-1) \cdot e_i) < 0\,.
\]
This means that if $p_i$ is large (i.e., $1-\eps$), then the $i$-th coordinate of the gradient of $\calL(p)$ should be negative since this would imply that the negative gradient would preserve $p_i$ to the right boundary. Similarly for the case where $p_i$ is small. This means that this point is a stationary point and is highly sub-optimal by assumption.

Let $P$ (resp. $N$) be the set of indices in $[n]$ where $p$ takes the value $1-\eps$ (resp. $\eps$).
For any $i \in [n]$, let $\calN(i)$ be its neighborhood in $G$. Let us consider $i \in P$. We have that $(2p-1) \cdot e_i > 0$ and so it suffices to show that
\[
(\overline{L}_G (2p-1)) \cdot e_i > 0\,,
\]
which corresponds to showing that
\[
\sum_{j \in \calN(i) \cap P} L_G(i,j)(1-2\eps)
+
\sum_{j \in \calN(i) \cap N} L_G(i,j)(2\eps-1) > 0\,,
\]
and so we would like to have
\[
\sum_{j \in P} L_G(i,j)
-
\sum_{j \in N} L_G(i,j) > 0\,.
\]
Note that this is true for any $i \in [n]$ since the current solution is a strict local optimum. The same holds if $i \in N$.
\end{proof}

\section{Completeness}
\label{app:complete}
\begin{proposition}
[Completeness]
\label{prop:complete}
Consider $\eps > 0$ and a prior $\calR$ over $\calI$. Assume that \Cref{assumption:1} holds.
There exist $\beta^\star, \rho^\star \in (0,1)$ and $\calW$ such that the family of solution generators $\calP$ of \Cref{eq:mixture} is complete.
\end{proposition}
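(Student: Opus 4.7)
The plan is to exhibit an explicit witness parameter $\overline{W} = -M/\lambda$ for a sufficiently small $\lambda>0$, together with small enough mixture weight $\beta^\star$ and an essentially free choice of $\rho^\star$, and then to verify that the induced mixture generator places almost all of its mass on near-optimal solutions for every instance.

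First I would analyze the ``fast'' component $\phi(\overline{W})$ alone. By the bilinear-cost structure of Item 2 of \Cref{assumption:1}, we have $\phi(x;z;\overline{W}) \propto \exp(-z^\top M x /\lambda)$, which is exactly the Gibbs measure at inverse temperature $1/\lambda$ for the per-instance cost $f_z(x) = z^\top M x$. A standard softmax-to-max estimate, obtained by writing $g_z(x) = f_z(x) - \min_{x' \in \psi_S(S)} f_z(x')$ and splitting $\E[g_z]$ into contributions from $g_z(x) \leq \delta$ and $g_z(x) > \delta$, gives
\[
\E_{x \sim \phi(\cdot;z;\overline{W})}[f_z(x)] - \min_{x} f_z(x) \;\leq\; \delta + |\psi_S(S)| \cdot R \cdot e^{-\delta/\lambda},
\]
where $R \leq D_S D_\calI C$ is the range of $f_z$ by Items 1 and 2 of \Cref{assumption:1}. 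Taking $\delta = \eps/4$ and $\lambda = \poly(\eps, 1/[\calI])$ small enough (so that the exponential term beats the $|\psi_S(S)|$ factor, which is at most $2^{\poly([\calI])}$ for every combinatorial problem covered in \Cref{sec:examples}), the right-hand side is at most $\eps/2$. Taking expectation over $z \sim \calR$ then yields $\E_{z}\E_{\phi(\overline{W})}[f_z(x)] \leq \opt + \eps/2$.

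Second, I would absorb the ``slow'' component into the error budget. Since $|f_z(x)| \leq D_S D_\calI C$ uniformly, the expected cost under $\phi(\rho^\star \overline{W})$ and $\opt$ itself both lie in $[-D_S D_\calI C, D_S D_\calI C]$, so by the mixture definition in \Cref{eq:mixture} and the triangle inequality,
\[
\calL(\overline{W}) - \opt \;\leq\; (1-\beta^\star)\frac{\eps}{2} + \beta^\star \cdot 2 D_S D_\calI C.
\]
Choosing $\beta^\star = \eps/(4 D_S D_\calI C) = \poly(\eps,1/C,1/D_S,1/D_\calI)$ forces the second term to be at most $\eps/2$, giving $\calL(\overline{W}) \leq \opt + \eps$ as required. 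The parameter $\rho^\star$ does not enter this bound and can therefore be set to whatever value the subsequent quasar-convexity and non-vanishing-gradient arguments in \Cref{sec:main-proof} demand.

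Finally, for the parameter domain $\calW$: since $\|\overline{W}\|_\F = \|M\|_\F/\lambda \leq C/\lambda$ is bounded by a polynomial in $[\calI]$ and $1/\eps$, it suffices to take $\calW$ to be the Frobenius ball in $\reals^{n_Z \times n_X}$ of that radius, which by Item 4 of \Cref{assumption:1} has description size polynomial in $[\calI]$ and $\log(1/\eps)$, ready to feed into \Cref{prop:compression}. The only delicate point in the whole argument is the softmax-to-max bound, whose quality depends on the cardinality $|\psi_S(S)|$ entering the exponential tail; the argument goes through precisely because every concrete problem instantiating \Cref{assumption:1} has solution space of cardinality at most $2^{\poly([\calI])}$, so $\log|\psi_S(S)| = \poly([\calI])$ and the required $1/\lambda$ remains polynomially bounded.
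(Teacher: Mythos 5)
Your proposal is correct and takes essentially the same route as the paper: the same witness $\overline{W}=-M/\lambda$, the same low-temperature concentration of the Gibbs measure on near-minimizers (your threshold split of $\E[g_z]$ is just a cleaner packaging of the paper's level-set argument with clusters $C_1,C_2,\dots$ at resolution $\eps$), and the same choice of $\beta^\star=\poly(\eps,1/C,1/D_S,1/D_\calI)$ to absorb the slow component. Your direct bound of the slow component by the cost range $2D_SD_\calI C$, together with the observation that $\rho^\star$ plays no role in completeness, is in fact tidier than the paper's detour through the almost-uniform variance lemmas at this point.
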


\begin{proof}
Assume that $\calO(s, I) = \psi_\calI(I)^\top M \psi_S(s)$
and let $z = \psi_\calI(I)$ and $x = \psi_S(s)$.
Moreover, let $\alpha, C, D_S, D_\calI$ be the parameters promised by \Cref{assumption:1}.
Let us consider the family $\calP = \{p(W) : W \in \calW\}$ with
\[
p(x;z;W) = (1-\beta^\star) \frac{e^{z^\top W x}}{\sum_{y \in X}e^{z^\top W y}} + \beta^\star \frac{e^{z^\top \rho^\star W  x}}{\sum_{y \in X}e^{z^\top \rho^\star W y}}\,,
\]
where the mixing weight $\beta^\star \in (0,1)$
and the inverse temperate $\rho^\star$
are to be decided.
Recall that 
\[
\calL(W) = \E_{z \sim \calR} \E_{x \sim p(\cdot; z; W)}[L(x;z)] = \E_{z \sim \calR} \E_{x \sim p(\cdot ; z; W)}[z^\top M x]\,.
\]
Let us pick the parameter matrix $W = -M/\lambda$.
Let us now fix a $z \in \psi_\calI(\calI)$.
For the given matrix $M$, we can consider the finite set of values $V$ obtained by the quadratic forms $\{z^\top M x\}_{x \in \psi_S(S)}$. We further cluster these values so that they have distance at least $\eps$ between each other.
We consider the level sets $C_i$ where $C_1$ is the subset of $S$ with minimum value $v_1 (= v_1(z)) \in V$, $C_2$ is the subset with the second smallest $v_2 (= v_2(z)) \in V$, etc. 
For fixed $z \in \psi_\calI(\calI)$,
we have that
\[
\E_{x \sim p(\cdot; z; -M/\lambda)}[z^\top M x]
=
(1-\beta^\star) \E_{x \sim \phi(\cdot; z; -M/\lambda)}[z^\top M  x]
+
\beta^\star \E_{x \sim \phi(\cdot; z; -\rho^\star M/\lambda)}[z^\top M x]\,,
\]
where $\phi$ comes from \eqref{eq:density}.
We note that
\[
\Pr_{x \sim \phi(\cdot; z; -M/\lambda)}[z^\top M x \in C_i] =
\frac{|C_i| e^{-v_i/\lambda}}{ \sum_{j} |C_j| e^{-v_j/\lambda}}\,.
\]
We claim that, by letting $\lambda \to 0$, the above measure concentrates uniformly on $C_1$. The worst case scenario is when $|C_2| = |S|-|C_1|$ and $v_2 = v_1 + \eps$. Then we have that
\[
\Pr_{x \sim \phi(\cdot; z; -M/\lambda)}
[z^\top M x \in C_2] = \frac{|C_2|/|C_1| e^{(-v_2+v_1)/\lambda}}{1 + |C_2|/|C_1| e^{(-v_2+v_1)/\lambda}} \leq \delta\,,
\]
when $1/\lambda > \log(|\psi_S(S)|/\delta)/\eps$, since in the worst case $|C_2|/|C_1| = \Omega(|\psi_S(S)|)$. Using this choice of $\lambda$ and taking expectation over $z$, we get that
\[
\E_{z \sim \calR} \E_{x \sim p(\cdot; z; -M/\lambda)}
[z^\top M x]
\leq 
(1-\beta^\star) \E_{z \sim \calR} \left[(1-\delta) \min_{x \in \psi_S(S)}L(x;z) + \delta v_2(z) \right]
+ \beta^\star \E_{x \sim \phi(\cdot; z; -\rho^\star M/\lambda)}[z^\top M x]\,.
\]
First, we remark that by taking $\rho^\star = \poly(\alpha, 1/C, 1/D_S, 1/D_\calI)$, the last term in the right-hand side of the above expression
can be replaced by the expected score of an almost-uniform solution (see \Cref{lemma:correlation} and \Cref{lemma:variance almost uni}), which is at most $\poly(D_S, D_\calI, C) 2^{-|\psi_S(S)|}$ (and which is essentially negligible).
Finally, one can pick $\beta^\star, \delta = \poly(\eps, 1/C, 1/D_S, 1/D_\calI)$ so that
\[
\calL(-M/\lambda)
=
\E_{z \sim \calR}\E_{x \sim p(\cdot; z; -M/\lambda)}[z^\top M  x]
\leq \E_{z \sim \calR}\left[\min_{x \in \psi_S(S)} L(x;z)\right] + \eps\,.
\]
This implies that $\calP$ is complete by letting $\overline{W} = -M/\lambda \in \calW$. This means that one can take $\calW$ be a ball centered at $0$ with radius (of $\eps$-sub-optimality) to be of order at least $B = \|M\|_\F/\lambda.$
\end{proof}


\section{Compression}
\label{app:compress}
\begin{proposition}
[Compression]
\label{prop:compression}
Consider $\eps > 0$ and a prior $\calR$ over $\calI$.
Assume that \Cref{assumption:1} holds. 
There exist $\beta^\star, \rho^\star \in (0,1)$ and $\calW$  such that
the family of solution generators $\calP$ of \Cref{eq:mixture} is compressed.
\end{proposition}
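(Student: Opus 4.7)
The plan is to verify directly that the parameter space $\calW$ furnished by \Cref{prop:complete} admits a description of size $\poly([\calI], \log(1/\eps))$. I will proceed in three steps: count the real parameters indexing $\calP$, bound their magnitude, and bound the bit precision needed per parameter so that discretization only perturbs $\calL$ by at most $\eps$.

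First, the mixture weight $\beta^\star \in (0,1)$ and the temperature scaling $\rho^\star \in (0,1)$ are \emph{fixed} constants chosen once and for all by \Cref{prop:complete} and are independent of the element of $\calP$, so the only parameter indexing $\calP$ is the matrix $W \in \reals^{n_Z \times n_X}$. By Item 4 of \Cref{assumption:1}, $n_X, n_Z \leq \poly([\calI])$, so $W$ has $\poly([\calI])$ real entries.

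Second, from the proof of \Cref{prop:complete} one may take $\calW$ to be the Frobenius ball of radius $B = \|M\|_\F / \lambda$. Here $\|M\|_\F \leq C \leq \poly([\calI])$ by \Cref{assumption:1}, while the choice $1/\lambda > \log(|\psi_S(S)|/\delta)/\eps$ used in that proof, combined with $\log|\psi_S(S)| \leq \poly([\calI])$ and $\delta = \poly(\eps, 1/C, 1/D_S, 1/D_\calI)$, gives $1/\lambda \leq \poly([\calI], 1/\eps)$. Hence $B \leq \poly([\calI], 1/\eps)$, and every entry of $W$ lies in an interval of length $O(B)$.

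Third, I will argue that rounding each entry of $W$ to a grid of resolution $\delta_0 = 1/\poly([\calI], 1/\eps)$ perturbs $\calL(W)$ by at most $\eps/2$. By Cauchy-Schwarz, $|z^\top (W-W') x| \leq \|W-W'\|_\F D_S D_\calI$, so a Frobenius perturbation of size $\delta_0 \sqrt{n_X n_Z}$ shifts every logit of $\phi(\cdot;z;W)$ by at most $\delta_0 \sqrt{n_X n_Z} D_S D_\calI$. A standard softmax-Lipschitz estimate then bounds the total-variation distance between $\phi(\cdot;z;W)$ and $\phi(\cdot;z;W')$ by $\poly(\delta_0, [\calI], 1/\eps)$, and since $|z^\top M x| \leq C D_S D_\calI \leq \poly([\calI])$, the induced change in $\calL$ is also $\poly(\delta_0, [\calI], 1/\eps)$. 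Picking $\delta_0$ an appropriately small $1/\poly([\calI], 1/\eps)$ makes this at most $\eps/2$, and each entry then requires $O(\log(B/\delta_0)) = \poly(\log [\calI], \log(1/\eps))$ bits. Multiplied by the $\poly([\calI])$ entries this yields a total description size of $\poly([\calI], \log(1/\eps))$, as required.

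The only minor technicality is the softmax-Lipschitz bound in step three, but because all logits lie in the polynomially bounded range $[-B D_S D_\calI,\, B D_S D_\calI]$, this reduces to a routine estimate and poses no essential obstacle.
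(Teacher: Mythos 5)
Your proposal is correct and follows essentially the same route as the paper's proof: count the $n_X n_Z = \poly([\calI])$ entries of $W$, and bound the radius of $\calW$ by $B = \|M\|_\F/\lambda \leq C/\lambda = \poly([\calI], 1/\eps)$ so that each entry needs only $\poly(\log[\calI], \log(1/\eps))$ bits. The only difference is that you make explicit the discretization/perturbation argument (rounding to a $\delta_0$-grid changes $\calL$ by at most $\eps/2$) that the paper leaves implicit, which is a welcome but not essentially different addition.
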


\begin{proof}
We have that the bit complexity to represent the mixing weight $\beta^\star$ is $\polylog(D_S, D_\calI, C, 1/\eps)$ and the description size of $\calW$ is polynomial in $[\calI]$ and in $\log(1/\eps)$. This follows from \Cref{assumption:1} since the feature dimensions $n_X$ and $n_Z$ are $\poly([\calI])$ and $\calW$ is a ball centered at $0$ with radius $O(B)$, where $B = \|M\|_\F/\lambda \leq C/\lambda$, which are also $\poly([\calI]/\eps)$. 
\end{proof}

\section{Efficiently Optimizable }
\label{sec:quasar}

\begin{proposition}
[Efficiently Optimizable]
\label{prop:eff opt}
Consider $\eps > 0$ and a prior $\calR$ over $\calI$.
Assume that \Cref{assumption:1} holds.
There exist $\beta^\star, \rho^\star \in (0,1)$ and $\calW$ such that family of solution generators $\calP$ of \Cref{eq:mixture} is efficiently optimizable using Projected SGD, where the projection set is $\calW$.
\end{proposition}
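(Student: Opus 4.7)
The plan is to combine the quasar-convexity landscape result of \Cref{prop:quasar} with a standard convergence theorem for projected SGD on quasar-convex functions, and then translate a bound on the regularized objective $\calL_\lambda$ into a bound on the unregularized loss $\calL$ via the completeness guarantee of \Cref{prop:complete}. Specifically, fix $\beta^\star, \rho^\star, \lambda$ and $\calW$ as in \Cref{prop:complete,prop:compression} (so that $\overline{W} = -M/\lambda \in \calW$ and $\calL(\overline{W}) \le \opt + \eps$), and run projected SGD on the regularized objective $\calL_\lambda$ of \Cref{eq:mixture-loss} with projection set $\calW$.

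The first step is to produce an unbiased, bounded-variance stochastic gradient of $\calL_\lambda$ using only the cost oracle of \Cref{def:solution_cost_oracle}. For the cost term, the REINFORCE/policy-gradient identity gives $\nabla_W \E_{x \sim p(\cdot; z; W)}[z^\top M x] = \E_{x \sim p(\cdot; z; W)}[ (z^\top M x) \nabla_W \log p(x;z;W) ]$, which we estimate by sampling $z \sim \calR$ and $x \sim p(\cdot; z; W)$ and calling the oracle once. For the entropy term one uses the analogous log-derivative trick; since $\beta^\star, \rho^\star$ are fixed, the regularizer $R(W) = (1-\beta^\star) H(W) + (\beta^\star/\rho^\star) H(\rho^\star W)$ has an analytic, $\poly([\calI])$-bounded stochastic gradient. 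The key boundedness facts we use are: (i) by Item~1 of \Cref{assumption:1} and the bound $\|M\|_\F \le C$, the pointwise cost $|z^\top M x|$ is $O(C D_S D_\calI)$; (ii) $\|\nabla_W \log p(x;z;W)\|_\F$ is $O(D_S D_\calI)$; (iii) $\calW$ is a Euclidean ball of radius $B = \|M\|_\F/\lambda = \poly([\calI]/\eps)$ by the proof of \Cref{prop:complete}. Putting (i)--(iii) together, the stochastic gradient has squared-norm bounded by a polynomial in $[\calI]/\eps$.

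The second step is to invoke the convergence guarantee for projected SGD on $\gamma$-quasar-convex functions (Theorem in \cite{hardt2016gradient}, also refined in \cite{hinder2020near}), which, given a $\gamma$-quasar-convex objective on a convex domain of diameter $B$ with stochastic gradients of squared-norm bounded by $G^2$, returns in $T = O(B^2 G^2/(\gamma^2 \eps_1^2))$ steps an iterate $\wh W \in \calW$ with $\E[\calL_\lambda(\wh W)] - \calL_\lambda(\overline{W}) \le \eps_1$. By \Cref{prop:quasar}, $\gamma = \poly(\eps, \alpha, 1/C, 1/D_S, 1/D_\calI) = 1/\poly([\calI]/\eps)$, so plugging in our bounds on $B, G$ and choosing $\eps_1$ sufficiently small, we obtain $T = \poly([\calW], 1/\eps) = \poly([\calI], 1/\eps)$ iterations to drive $\calL_\lambda(\wh W) \le \calL_\lambda(\overline{W}) + \eps_1$.

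The final step is to pass from $\calL_\lambda$ back to $\calL$. Since $|R(W)|$ is bounded by $\log|\psi_S(S)| \le \poly([\calI])$ uniformly on $\calW$ (entropies of distributions on a finite support of polynomial size are polynomially bounded), and since $\lambda = \poly(\eps/[\calI])$ from the proof of \Cref{prop:complete}, we have $|\calL(W) - \calL_\lambda(W)| = \lambda |R(W)| \le \eps$ for every $W \in \calW$, provided $\lambda$ is chosen small enough. Combining with completeness yields $\calL(\wh W) \le \calL_\lambda(\wh W) + \eps \le \calL_\lambda(\overline{W}) + \eps_1 + \eps \le \calL(\overline{W}) + 2\eps + \eps_1 \le \opt + O(\eps)$, which is the desired $\eps$-suboptimality after rescaling $\eps$. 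The main obstacle is verifying that every polynomial factor along the way (the quasar-convexity parameter $\gamma$, the diameter $B$, the gradient-variance bound, and the regularization error $\lambda|R|$) is genuinely polynomial in $[\calI]$ and $1/\eps$; this is where the bounded-dimensions/diameters clause (Item~4 of \Cref{assumption:1}) does the heavy lifting.
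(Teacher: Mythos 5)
Your proposal follows the paper's proof in its essential structure: establish quasar convexity of the regularized objective $\calL_\lambda$ (\Cref{prop:quasar}), check that the domain diameter, the stochastic-gradient bounds, and the quasar-convexity parameter $\gamma$ are all polynomial in $[\calI]$ and $1/\eps$, and invoke the projected-SGD convergence guarantee of \cite{hardt2016gradient}. The one place you genuinely diverge is the final translation from $\calL_\lambda$ back to $\calL$: you argue that $|\calL(W)-\calL_\lambda(W)| = \lambda|R(W)|$ can be made at most $\eps$ uniformly on $\calW$ by shrinking $\lambda$, whereas the paper instead bounds $\calL(W_t)-\calL(-M/\lambda)$ via the mean value theorem by $\poly(D_S,D_\calI,C)\,\|W_t+M/\lambda\|_\F^2$ and then observes that this quantity is controlled by the correlation $\nabla\calL_\lambda(W_t)\cdot(W_t+M/\lambda)$, i.e., it reuses \Cref{lemma:correlation} as a potential for the unregularized loss. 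Your route is cleaner and more standard, but be careful with one constant: $R(W)=(1-\beta^\star)H(W)+(\beta^\star/\rho^\star)H(\rho^\star W)$ carries the factor $\beta^\star/\rho^\star$, which need not be at most one, so the correct uniform bound is $|R(W)|\le (1+\beta^\star/\rho^\star)\log|\psi_S(S)|$ rather than $\log|\psi_S(S)|$; this is still $\poly([\calI],1/\eps)$, so your conclusion survives, but only after noting that completeness imposes only an upper bound on $\lambda$ (so it may be decreased further) and that $B=C/\lambda$ and hence $\gamma$ remain polynomial after this decrease. Finally, the paper applies the weakly-smooth version of the convergence lemma (after proving a $\poly(D_S,D_\calI,C)$ smoothness bound for $\calL_\lambda$), giving the error $\max\bigl\{\Gamma R^2/(\gamma^2 T),\, R\sqrt{V}/(\gamma\sqrt{T})\bigr\}$, while you use the bounded-gradient form; both are available here with polynomial constants, so this difference is immaterial.
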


The proof of this proposition is essentially decomposed into two parts: first, we show that the entropy-reularized loss of \Cref{eq:reg-loss} is quasar convex and then apply the projected SGD algorithm to $\calL_\lambda$.

Recall that $H(W) = \E_{z \sim \calR} \E_{x \sim \phi(\cdot;z;W)}[\log \phi(x;z;W)]$.
Let $R$ be a weighted sum (to be in accordance with the mixture structure of $\calP)$ of negative entropy regularizers
\begin{equation}
    \label{eq:regularize}
    R(W) = (1-\beta^\star) H(W) + \frac{\beta^\star}{\rho^\star} H(\rho^\star W)\,,
\end{equation}
where $\beta^\star, \rho^\star$ are the fixed parameters of $\calP$ (recall \Cref{eq:mixture}).
We define the regularized loss
\begin{equation}
    \label{eq:reg-loss}
    \calL_\lambda(W) = \calL(W) + \lambda R(W)\,,
\end{equation}
where
\[
\calL(W) = \E_{z \sim \calR} \E_{x \sim p(\cdot; z; W)}[L(z;x)]\,,~~ p(W) \in \calP\,.
\]
\subsection{Quasar Convexity of the Regularized Loss}
\label{eq:proof quasar convex}
In this section, we show that $\calL_\lambda$ of \Cref{eq:reg-loss} is quasar convex. We restate \Cref{prop:quasar}.
\begin{proposition}[Quasar Convexity]
Assume that \Cref{assumption:1} holds.
The function $\calL_\lambda$ of \Cref{eq:reg-loss} with domain $\calW$ is $\poly(C,D_S, D_\calI,1/\eps,1/\alpha)$-quasar convex with respect to $-M/\lambda$ on the domain $\calW$.
\end{proposition}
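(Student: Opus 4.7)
The plan is to establish quasar convexity pointwise in the instance featurization $z=\psi_\calI(I)$ and then lift to the expectation over $\calR$ via \Cref{lemma:comb of quasar}. Fix $z$ in the support of $\psi_\calI(\calR)$ and decompose the integrand of $\calL_\lambda$ as $f_z(W) = (1-\beta^\star) A_z(W) + \beta^\star B_z(W)$, where
\[
A_z(W) = \E_{x \sim \phi(\cdot;z;W)}[z^\top M x] + \lambda H_z(W),
\quad
B_z(W) = \E_{x \sim \phi(\cdot;z;\rho^\star W)}[z^\top M x] + \tfrac{\lambda}{\rho^\star} H_z(\rho^\star W),
\]
and $H_z$ is the $z$-wise negative entropy. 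Both summands are entropy-regularized expectations over the same exponential family (with natural parameter $\eta = W^\top z$ for $A_z$, and, after the change of variables $V=\rho^\star W$ and $\lambda' = \lambda/\rho^\star$, for $B_z$), so a standard convex-duality argument shows that their common minimizer is the $z$-independent point $\overline{W} = -M/\lambda$. Because $\overline{W}$ does not depend on $z$, \Cref{lemma:comb of quasar} (applied with $\calR$ as the non-negative mixing measure) will let me inherit the pointwise quasar-convexity constant for the full objective $\calL_\lambda = \E_z[f_z]$.

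The first step is the gradient identity. Using the score-function identity together with the exponential-family facts $\nabla_\eta A(\eta) = \E_\phi[x]$ and $\nabla^2_\eta A(\eta) = \Cov_\phi[x]$ (where $A(\eta)$ denotes the log-partition), the mixture analogue of \Cref{eq:reg-cor} reads
\[
\nabla_W f_z(W) \cdot (W + M/\lambda) = \lambda(1-\beta^\star)\Var_{\phi(\cdot;z;W)}\!\bigl[z^\top(W+M/\lambda)x\bigr] + \lambda \beta^\star \rho^\star \Var_{\phi(\cdot;z;\rho^\star W)}\!\bigl[z^\top(W+M/\lambda)x\bigr].
\]
The extra $\rho^\star$ factor on the second term comes from the chain rule when differentiating through $V=\rho^\star W$. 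In parallel I would upper bound the sub-optimality gap: $A_z(W) - A_z(\overline{W})$ equals $\lambda$ times the Bregman divergence of $A$ evaluated at natural parameters $(W^\top z, -M^\top z/\lambda)$, and the analogous formula for $B_z$ carries a $\rho^\star$ scaling of the displacement. By Item 1 of \Cref{assumption:1} the Hessian $\nabla^2_\eta A = \Cov_\phi[x]$ has operator norm at most $D_S^2$, so Taylor's theorem applied to $A$ summed across the two mixture components yields
\[
f_z(W) - f_z(\overline{W}) \leq \tfrac{\lambda D_S^2}{2}\bigl[(1-\beta^\star) + \beta^\star \rho^\star\bigr] \cdot \|(W+M/\lambda)^\top z\|_2^2 \leq \tfrac{\lambda D_S^2}{2}\cdot \|(W+M/\lambda)^\top z\|_2^2.
\]

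The last step is the lower bound on the slow-component variance. Dropping the non-negative $\phi(W)$ term from the gradient identity, it suffices to control $\Var_{\phi(\rho^\star W)}[z^\top(W+M/\lambda)x]$ from below. For the polynomial choice of $\rho^\star$ prescribed by \Cref{prop:complete}, the distribution $\phi(\cdot;z;\rho^\star W)$ is within small total-variation distance of the uniform distribution on $\psi_S(S)$ \emph{uniformly} over $W\in\calW$ and $z$ in the support of $\psi_\calI(\calR)$. Item 3 of \Cref{assumption:1} then gives $\Var_{U(S)}[z^\top(W+M/\lambda)x]\geq \alpha \|(W+M/\lambda)^\top z\|_2^2$, and the variance-perturbation bounds \Cref{lemma:correlation} and \Cref{lemma:variance almost uni} transfer this estimate to $\Var_{\phi(\rho^\star W)}[\cdot]\geq (\alpha/2) \|(W+M/\lambda)^\top z\|_2^2$. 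Combining with the gap upper bound yields $\nabla_W f_z(W) \cdot (W+M/\lambda) \geq \gamma\bigl(f_z(W) - f_z(\overline{W})\bigr)$ with $\gamma = \Omega(\beta^\star\rho^\star \alpha/D_S^2)$, which, given the polynomial choices of $\beta^\star,\rho^\star$ in \Cref{prop:complete}, is $\poly(\eps, \alpha, 1/C, 1/D_S, 1/D_\calI)$. The main obstacle I expect is the quantitative ``almost-uniform'' step: one must ensure that taking $\rho^\star$ small enough to keep the TV-to-uniform error below $\alpha/2$ is compatible with the diameter $B = \|M\|_\F/\lambda$ of $\calW$, i.e.\ that the product $\rho^\star B D_S D_\calI$ remains polynomially small, which is exactly what pins down the polynomial dependence of $\rho^\star$ on $(D_S, D_\calI, C, 1/\alpha, 1/\eps)$.
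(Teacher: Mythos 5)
Your proposal is correct and follows essentially the same architecture as the paper's proof: reduce to a fixed instance featurization $z$ via \Cref{lemma:comb of quasar}, express the correlation $\nabla_W f_z(W)\cdot(W+M/\lambda)$ as a non-negative combination of the two component variances, lower-bound the slow component's variance by transferring Item~3 of \Cref{assumption:1} from the uniform to the almost-uniform measure $\phi(\rho^\star W)$, and upper-bound the sub-optimality gap by a quadratic in $\|(W+M/\lambda)^\top z\|_2$. The only local difference is that you obtain the gap upper bound from smoothness of the log-partition function (Bregman divergence with Hessian $=$ covariance, operator norm at most $D_S^2$), whereas the paper uses the mean value theorem together with a separate bounded-gradient lemma (\Cref{lemma:bounded-gradient}); the two yield the same quadratic estimate, and your route has the minor added benefit of directly certifying that $-M/\lambda$ is a global minimizer of each component.
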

\begin{proof}
We can write the loss $\calL_\lambda$ as
\[
\calL_\lambda(W) = \E_{z \sim \calR}[\calL_{\lambda,z}(W)] = 
\E_{z \sim \calR}\left[ \calL_z(W) + \lambda R_z(W)\right]\,,
\]
where the mappings $\calL_z$ and $R_z$ are instance-specific (i.e., we have fixed $z$). We can make use of \Cref{lemma:comb of quasar}, which states that linear combinations of quasar convex (with the same minimizer) remain quasar convex. Hence, since the functions $\calL_{\lambda,z}$ have the same minimizer $-M/\lambda$, it suffices to show quasar convexity for a particular fixed instance mapping, i.e., it sufffices to show that
the function
\[
\calL_{\lambda,z}(W) = \calL_z(W) + \lambda R_z(W)
\]
is quasar convex. 
Recall that $W$ is a matrix of dimension $n_Z \times n_X$.
To deal with the function $\calL_{\lambda,z}$, we  consider the simpler function that maps vectors instead of matrices to real numbers. For some vector $c$, let $\calL_\lambda^\mathrm{vec} : \reals^{n_X} \to \reals$ be 
\begin{equation}
\label{eq:vectorized loss}
    \calL_\lambda^{\mathrm{vec}}(w)
=
\E_{x \sim p(\cdot ; w)}[c \cdot x] + \lambda R^\mathrm{vec} (w)\,,
\end{equation}
where for any 
vector $w \in \reals^{n_X}$, we define the probability distribution $\phi(\cdot; w)$ over the solution space $X = \psi_S(S)$ with probability mass function
\[
\phi(x; w) = 
\frac{e^{w \cdot x}}{\sum_{y \in X} e^{w \cdot y}}\,.
\]
We then define
\begin{equation*}
\label{eq:mixture2}
p(\cdot ; w) = (1-\beta^\star) \phi(\cdot ; w) + \beta^\star \phi(\cdot; \rho^\star w )\,,
\end{equation*}
and $R^\mathrm{vec}(w) = (1-\beta^\star) H(w) + \frac{\beta^\star}{\rho^\star} H(\rho^\star w)$ (this is a essentially a weighted sum of regularizers, needed to simplify the proof) with $H(w) = \E_{x \sim \phi(\cdot; w)}\log \phi(x,w)$. These quantities are essentially the fixed-instance analogues of Equations \eqref{eq:mixture} and \eqref{eq:negentropy}. The crucial observation is that by taking $c = z^\top M$ and applying the chain rule we have that
\begin{equation}
    \nabla_W \calL_{\lambda,z}(W)
    =
    z \cdot \left[\nabla_w \calL_\lambda^\mathrm{vec}(z^\top W) \right]^\top\,.
\end{equation}
This means that the gradient of the fixed-instance objective $\calL_{\lambda,z}$ is a matrix of dimension $n_Z \times n_X$ that is equal to the outer product of the instance featurization $z$ and the gradient of the simpler function $\calL_w^\mathrm{vec}$ evaluated at $z^\top W$.
Let us now return on showing that $\calL_{\lambda,z}$ is quasar convex. To this end, we observe that
\[
\nabla_W \calL_{\lambda,z}(W)
\cdot
\left(W + \frac{M}{\lambda}\right)
=
\nabla_w \calL_\lambda^\mathrm{vec}(z^\top W)
\cdot 
\left(z^\top W + z^\top \frac{M}{\lambda}\right)\,.
\]
This means that, since $z$ is fixed, it suffices to show that the function $\calL_\lambda^\mathrm{vec}$ is quasar convex. We provide the next key proposition that deals with issue. This result is one the main technical aspects of this work and its proof can be found in \Cref{app:quasar}.

In the following, intuitively $\calX$ is the post-featurization instance space and $\calZ$ is the parameter space.
\begin{proposition}
\label{prop:quasar-vec}
Consider $\eps, \lambda > 0$.
Let $\|c\|_2 \leq C_1$.
Let $\mathcal Z$ be an open ball centered at 0 with diameter $2C_1/\lambda$. Let $\mathcal X$ be a space of diameter $D$ and let $\var_{x \sim U(\calX)}[v \cdot x] \geq \alpha \|v\|_2^2$ for any $v \in \mathcal Z$. 
The function $\calL_\lambda^\mathrm{vec}(w) = \E_{x \sim p(\cdot; w)}[c \cdot x] + \lambda R^\mathrm{vec}(w)$ is $\poly(1/C_1,1/D,\eps,\alpha)$-quasar convex with respect to $-c/\lambda$ on $\mathcal Z$.
\end{proposition}
We can apply the above result with $c = z^\top M, w = z^\top W, D = D_S$ and $C_1 = D_\calI C$.
These give that the quasar convexity parameter $\gamma$ is of order $\gamma = \poly(\eps, \alpha,1/C,1/D_S, 1/D_\calI)$. 
Since we have that $\calL_\lambda^\mathrm{vec}(z^\top W) = \calL_{\lambda,z}(W)$, we get that
\[
\nabla_W \calL_{\lambda,z}(W) 
\cdot 
(W + M/\lambda)
\geq \gamma (\calL_{\lambda,z}(W) - \calL_{\lambda,z}(-M/\lambda))\,.
\]
This implies that $\calL_{\lambda,z}$ is $\gamma$-quasar convex with respect to the minimizer $-M/\lambda$ and completes the proof using \Cref{lemma:comb of quasar}.
\end{proof}

\subsection{The Proof of \Cref{prop:quasar-vec}}
\label{app:quasar}
Let us consider $\calL_\lambda^\mathrm{vec}$ to be a real-valued differentiable function defined on $\mathcal Z$.
Let $w, -c/\lambda \in \mathcal Z$ and let $L$ be the line segment between them with $L \in \mathcal Z$. The mean value theorem implies that there exists $w' \in L$ such that 
\[
\calL_\lambda^\mathrm{vec}(w)
-
\calL_\lambda^\mathrm{vec}(-c/\lambda)
= \nabla_w \calL_\lambda^\mathrm{vec}(w') \cdot (w + c/\lambda)
\leq \|\nabla \calL_\lambda^\mathrm{vec}(w')\|_2 \|w + c/\lambda\|_2\,.
\]
Now we have that $\calL^\mathrm{vec}_\lambda$ has bounded gradient (see \Cref{lemma:bounded-gradient}) and so we get that
\[
\|\nabla_w \calL_\lambda^\mathrm{vec}(w')\|_2
\leq D^2 \|c + \lambda w'\|_2
= D^2 \lambda \|w' + c/\lambda\|_2 \leq D^2 \lambda \|w + c/\lambda\|_2\,,
\]
since $w' \in L$. This implies that
\[
\calL_\lambda^\mathrm{vec}(w)
-
\calL_\lambda^\mathrm{vec}(-c/\lambda)
\leq D^2 \lambda \|w + c/\lambda\|_2^2
\leq \frac{1}{\gamma}
\nabla \calL_\lambda^\mathrm{vec}(w) \cdot (w + c/\lambda)\,,
\]
where $1/ \gamma = \frac{\poly(C_1, D)}{\eps^3 \alpha^2}$. The last inequality is an application of the correlation lower bound (see \Cref{lemma:correlation}).

In the above proof, we used two key lemmas: a bound for the norm of the gradient and a lower bound for the correlation. In the upcoming subsections, we prove these two results.

\subsubsection{Bounded Gradient Lemma and Proof}
\begin{lemma}
[Bounded Gradient Norm of $\calL_\lambda^\mathrm{vec}$]
\label{lemma:bounded-gradient}
Consider $\eps, \lambda > 0$.
Let $\mathcal Z$ be the domain of $\calL_\lambda^\mathrm{vec}$ of \eqref{eq:vectorized loss}.
Let $\calX$ be a space of diameter $D$.
For any $w \in \mathcal Z$,
it holds that
\[
\| \nabla_w \calL_\lambda^\mathrm{vec}(w) \|_2 \leq O(D^2) \| c + \lambda w\|_2\,.
\]
\end{lemma}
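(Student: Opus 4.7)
The plan is to compute $\nabla_w \calL_\lambda^\mathrm{vec}(w)$ explicitly using standard exponential family identities, observe that the particular weighting of the entropy regularizer makes the gradient factor as a sum of covariance matrices acting on the common vector $c + \lambda w$, and then bound the covariance operator norms by $D^2$.

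Concretely, let $Z(w) = \sum_{y \in \calX} e^{w\cdot y}$, $\mu(w) = \E_{x\sim\phi(\cdot;w)}[x]$ and $\Sigma(w) = \Cov_{x\sim\phi(\cdot;w)}[x]$. The classical log-partition identities give $\nabla_w \log Z(w) = \mu(w)$ and the Jacobian $\nabla_w \mu(w) = \Sigma(w)$, which is symmetric. Hence $\nabla_w \E_{x\sim\phi(\cdot;w)}[c\cdot x] = \Sigma(w)c$, and writing $H(w) = w\cdot \mu(w) - \log Z(w)$ yields $\nabla_w H(w) = \Sigma(w) w$ (the two $\mu(w)$ terms cancel). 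Applying the chain rule to the rescaled components, $\nabla_w \E_{x\sim\phi(\cdot;\rho^\star w)}[c\cdot x] = \rho^\star \Sigma(\rho^\star w)c$ and $\nabla_w H(\rho^\star w) = (\rho^\star)^2 \Sigma(\rho^\star w)w$.

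Summing these contributions weighted as in $p(\cdot;w)$ and $R^\mathrm{vec}$, the factor $1/\rho^\star$ in front of $H(\rho^\star w)$ cancels precisely one chain-rule power of $\rho^\star$, producing the clean factorization
\begin{equation*}
\nabla_w \calL_\lambda^\mathrm{vec}(w) = (1-\beta^\star)\,\Sigma(w)\,(c+\lambda w) + \beta^\star \rho^\star\,\Sigma(\rho^\star w)\,(c+\lambda w).
\end{equation*}
Since $\calX$ has diameter $D$, for every unit vector $v$ the random variable $v\cdot x$ has range at most $D$ under any $\phi(\cdot;w)$, so its variance is at most $D^2/4$; thus $\|\Sigma(w)\|_\mathrm{op}, \|\Sigma(\rho^\star w)\|_\mathrm{op} \leq D^2/4$. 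The triangle inequality then delivers
\begin{equation*}
\|\nabla_w \calL_\lambda^\mathrm{vec}(w)\|_2 \leq \bigl[(1-\beta^\star) + \beta^\star \rho^\star\bigr]\,\tfrac{D^2}{4}\,\|c+\lambda w\|_2 = O(D^2)\,\|c+\lambda w\|_2.
\end{equation*}

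The main subtlety I expect is keeping the chain-rule bookkeeping straight and verifying that the weights $(1-\beta^\star)$ and $\beta^\star/\rho^\star$ appearing in $R^\mathrm{vec}$ are \emph{exactly} the ones needed to align both summands with the common factor $c+\lambda w$. This alignment is not cosmetic: it is what makes the minimizer live at the finite point $-c/\lambda$ (where the gradient vanishes because the factor $c+\lambda w$ does) and it is precisely what enables the correlation lower bound used in \Cref{prop:quasar-vec}.
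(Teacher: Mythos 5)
Your proposal is correct and follows essentially the same route as the paper: the paper likewise computes $\nabla_w \calL_\lambda^\mathrm{vec}(w) = (1-\beta^\star) G_w + \beta^\star \rho^\star G_{\rho^\star w}$ with $G_w$ and $G_{\rho^\star w}$ being exactly the covariance operators $\Sigma(w)$ and $\Sigma(\rho^\star w)$ applied to the common vector $c+\lambda w$ (the same cancellation from the $\beta^\star/\rho^\star$ weighting of $H(\rho^\star w)$), and then bounds each term by $O(D^2)\|c+\lambda w\|_2$. Your use of the range bound $\var(v\cdot x)\le D^2/4$ in place of the paper's direct Cauchy--Schwarz estimate $\|G_w\|_2\le 2D^2\|c+\lambda w\|_2$ is only a cosmetic (slightly sharper) variant of the same final step.
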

\begin{proof}
We have that
\[
\nabla_w \calL_\lambda^\mathrm{vec}(w) 
= (1-\beta^\star) G_{w} + \beta^\star \rho^\star G_{\rho^\star w}\,,
\]
where
\[
G_w = 
\E_{x \sim \phi(\cdot ; w)}[((c + \lambda w) \cdot x) x]
- \E_{x \sim \phi(\cdot ; w)}[(c + \lambda w) \cdot x] \E_{x \sim \phi(\cdot ; w)}[x]\,,
\]
and 
\[
G_{\rho^\star w} = 
\E_{x \sim \phi(\cdot ; \rho^\star w)}[((c + \lambda w) \cdot x) x]
- \E_{x \sim \phi(\cdot ; \rho^\star w)}[(c + \lambda w) \cdot x] \E_{x \sim \phi(\cdot ; \rho^\star w)}[x]\,.
\]
Note that since $x \in \mathcal X$, it holds that $\|x\|_2 \leq D$. 
Hence
\[
\| G_w \|_2
=
\sup_{v : \|v\|_2 = 1} | v \cdot G_w | \leq 2 D^2 \| c + \lambda w\|_2\,.
\]
Moreover, we have that
\[
\| G_{\rho^\star w} \|_2 \leq 2 D^2 \| c + \lambda w\|_2\,.
\]
This means that
\[
\| \nabla_w \calL_\lambda^\mathrm{vec}(w) \|_2\leq 2 (1-\beta^\star) \| c + \lambda w\|_2 D^2 + 2\beta^\star \rho^\star \| c + \lambda w\|_2 D^2 = O(D^2) \| c + \lambda w\|_2\,.
\]
\end{proof}


\subsubsection{Correlation Lower Bound Lemma and Proof}
The following lemma is the second ingredient in order to show \Cref{prop:quasar-vec}. 

\begin{lemma}
[Correlation Lower Bound for $\calL_\lambda^\mathrm{vec}$]
\label{lemma:var}
\label{lemma:correlation}
Let $\lambda > 0$.
Let $\|c\|_2 \leq C_1$.
Let $\mathcal Z$ be an open ball centered at 0 with diameter 
$B = 2 C_1/\lambda$. Let $\mathcal X$ be a space of diameter $D$.
Assume that $w \in \mathcal Z$ and $\var_{x \sim U(\calX)}[(c + \lambda w) \cdot x] \geq \alpha \|c+\lambda w\|_2^2$ for some $\alpha > 0$. Then, for any $\beta^\star \in (0,1)$, there exists $\rho^\star > 0$ such that it holds that
\[
\nabla_w \calL_\lambda^\mathrm{vec}(w) \cdot (c + \lambda w) = \Omega \left( \beta^\star \alpha^2/(B D^3) \| c + \lambda w\|_2^2 \right)\,,
\]
where $\calL_\lambda^\mathrm{vec}$ is the regularized loss of \Cref{prop:quasar-vec}, $\rho^\star$ is the scale in the second component of the mixture of \eqref{eq:mixture} and $\beta^\star \in (0,1)$ is the mixture weight.
\end{lemma}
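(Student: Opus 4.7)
The plan is to write $\nabla_w \calL_\lambda^{\mathrm{vec}}(w)$ in closed form, take the inner product with $c + \lambda w$, observe that the result is a non-negative combination of two variances, and then lower bound the ``slow'' variance by reducing to the uniform-distribution assumption. Let $\Sigma(u) \coloneqq \Cov_{x \sim \phi(\cdot; u)}(x)$ denote the covariance of the exponential family with parameter $u$. Using the standard identities $\nabla_u \log Z(u) = \E_{\phi(\cdot;u)}[x]$ and $\nabla_u \E_{\phi(\cdot;u)}[f(x)] = \Cov_{\phi(\cdot;u)}(f(x), x)$, together with $H(u) = u \cdot \E_{\phi(\cdot;u)}[x] - \log Z(u)$ so that $\nabla_u H(u) = \Sigma(u)\, u$, a direct computation (with the chain rule applied to the $\rho^\star w$ piece) gives
\begin{equation*}
\nabla_w \calL_\lambda^{\mathrm{vec}}(w) = (1-\beta^\star)\, \Sigma(w)\,(c + \lambda w) + \beta^\star \rho^\star\, \Sigma(\rho^\star w)\,(c + \lambda w)\,.
\end{equation*}
The prefactor $\beta^\star/\rho^\star$ chosen in front of $H(\rho^\star w)$ inside $R^{\mathrm{vec}}$ is precisely what aligns both components with the common direction $c + \lambda w$, rather than a spurious $c + \lambda \rho^\star w$.

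Taking the inner product of the above display with $c + \lambda w$ yields
\begin{equation*}
\nabla_w \calL_\lambda^{\mathrm{vec}}(w) \cdot (c + \lambda w) = (1-\beta^\star)\, \var_{\phi(\cdot;w)}\!\big[(c+\lambda w)\cdot x\big] + \beta^\star \rho^\star\, \var_{\phi(\cdot;\rho^\star w)}\!\big[(c+\lambda w)\cdot x\big]\,,
\end{equation*}
and both summands are non-negative. The first term can collapse exponentially fast when $\|w\|_2$ is large (this is exactly the vanishing-gradient phenomenon motivating the mixture), so I would drop it and lower bound only the ``slow'' term, for which the variance-preservation hypothesis of \Cref{prop:quasar-vec} can be brought to bear through a perturbation around the uniform distribution.

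Set $\eta \coloneqq \rho^\star B D$. Since $\|w\|_2 \le B/2$ and $\|x\|_2 \le D$, we have $|(\rho^\star w)\cdot x| \le \eta/2$ for every $x \in \calX$, and a direct comparison of densities yields $\phi(x;\rho^\star w)/U(\calX)(x) \in [e^{-\eta}, e^{\eta}] \subseteq [1 - O(\eta), 1 + O(\eta)]$. Writing $f(x) = (c+\lambda w)\cdot x$, with $|f(x)| \le \|c+\lambda w\|_2\, D$, this pointwise density bound transfers to the first two moments: $|\E_{\phi(\cdot;\rho^\star w)}[f^k] - \E_{U(\calX)}[f^k]| = O(\eta)\,\|c+\lambda w\|_2^k\, D^k$ for $k \in \{1,2\}$. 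Combining the moment bounds and invoking the assumed variance lower bound $\var_{U(\calX)}[f] \ge \alpha\,\|c+\lambda w\|_2^2$ gives
\begin{equation*}
\var_{\phi(\cdot;\rho^\star w)}[f] \ge \var_{U(\calX)}[f] - O(\eta D^2)\,\|c+\lambda w\|_2^2 \ge \big(\alpha - O(\eta D^2)\big)\,\|c+\lambda w\|_2^2\,.
\end{equation*}
Choosing $\rho^\star = \Theta\!\big(\alpha/(B D^3)\big)$ (independent of $\beta^\star$) forces $\eta D^2 \le \alpha/2$, so the slow-component variance is at least $(\alpha/2)\,\|c+\lambda w\|_2^2$; multiplying by the $\beta^\star \rho^\star$ prefactor produces the claimed lower bound $\Omega\!\big(\beta^\star \alpha^2/(B D^3)\big)\,\|c+\lambda w\|_2^2$.

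The main obstacle is this final quantitative perturbation step: the assumption only gives a variance bound \emph{under the uniform distribution}, and this bound must absorb a multiplicative correction of order $D^2$ coming from the worst-case bound $|f| \le \|c+\lambda w\|_2 D$ on the first and second moments. This is what forces the scale $\rho^\star = \Theta(\alpha/(B D^3))$ and, via the $\beta^\star \rho^\star$ prefactor, dictates the exponents in the statement. A subtlety worth noting is that the diameter of $\calZ$ must be controlled by the full budget $B = 2C_1/\lambda$ (rather than $\|c\|_2/\lambda$), because the perturbation argument needs to be uniform in $w \in \calZ$; this is exactly why the quasar-convexity constant inherits a factor of $1/B$ in the final bound.
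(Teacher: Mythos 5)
Your proposal is correct and follows the same overall route as the paper: you derive the identity
\begin{equation*}
\nabla_w \calL_\lambda^{\mathrm{vec}}(w) \cdot (c + \lambda w) = (1-\beta^\star)\, \var_{\phi(\cdot;w)}\bigl[(c+\lambda w)\cdot x\bigr] + \beta^\star \rho^\star\, \var_{\phi(\cdot;\rho^\star w)}\bigl[(c+\lambda w)\cdot x\bigr]
\end{equation*}
(this is exactly \Cref{lemma:ensemble}, and your covariance-matrix bookkeeping, including the role of the $\beta^\star/\rho^\star$ prefactor on $H(\rho^\star w)$, matches the paper's computation), you discard the non-negative fast component, and you lower bound the slow component by comparison with the uniform distribution, arriving at the same choice $\rho^\star = \Theta(\alpha/(BD^3))$ and the same final bound. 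The one place where your argument genuinely diverges is the proof of the variance-comparison step (the paper's \Cref{lemma:variance almost uni}): the paper bounds the drift of the first and second moments via the mean value theorem applied to $\rho \mapsto \E_{\phi(\cdot;\rho w)}[\cdot]$, computing the derivatives as covariances and bounding them by $2\|w\|_2 D^2$ and $2\|w\|_2 D^3$, whereas you use the pointwise density-ratio bound $\phi(x;\rho^\star w)/U(x) \in [e^{-\eta}, e^{\eta}]$ with $\eta = \rho^\star B D$ and transfer it to the moments directly. Both yield perturbations of the same order ($O(\rho^\star B D^2)$ for the mean, $O(\rho^\star B D^3)$ for the second moment), so the two techniques are quantitatively interchangeable here; yours is somewhat more elementary and makes the ``almost uniform'' intuition explicit, while the paper's derivative-based version is the one that generalizes most naturally to the other moment comparisons it reuses elsewhere.
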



First, in \Cref{lemma:regular} and \Cref{lemma:ensemble}, we give a formula for the desired correlation $\nabla_w \calL_\lambda^\mathrm{vec}(w) \cdot (c + \lambda w)$ and, then we can provide a proof for \Cref{lemma:var} by lower bounding this formula.

\begin{lemma}
[Correlation with Regularization]
\label{lemma:regular}
Consider the function $g(w) = \E_{x \sim \phi(\cdot;w)}[c \cdot x] + \lambda H(w)$, where $H$ is the negative entropy regularizer. Then it holds that
\[
\nabla_w g(w) \cdot ( c + \lambda w) 
= \var_{x \sim \phi(\cdot; w)}[(c + 
\lambda w) \cdot x]\,.
\]
\end{lemma}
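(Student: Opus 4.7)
The plan is to compute $\nabla_w g(w)$ in closed form using standard log-partition identities, and then recognize the resulting inner product with $c+\lambda w$ as a variance. Write $Z(w)=\sum_{y\in X} e^{w\cdot y}$, so that $\log\phi(x;w)=w\cdot x-\log Z(w)$ and the classical identity $\nabla_w\log Z(w)=\E_{x\sim\phi(\cdot;w)}[x]=:\mu(w)$ holds. As an immediate consequence of the log-derivative trick, $\nabla_w\phi(x;w)=\phi(x;w)(x-\mu(w))$.

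For the linear part, I would exchange the (finite) sum and the gradient to obtain
$$\nabla_w \E_{x\sim\phi(\cdot;w)}[c\cdot x] \;=\; \sum_{x\in X}(c\cdot x)\,\phi(x;w)(x-\mu(w)) \;=\; \Cov_{x\sim\phi(\cdot;w)}(c\cdot x,\,x)\;=\;\Sigma(w)\,c,$$
where $\Sigma(w):=\Cov_{x\sim\phi(\cdot;w)}(x)$ is the (symmetric) covariance matrix of $x$ under $\phi(\cdot;w)$. For the negative-entropy regularizer, I would first rewrite $H(w)=w\cdot\mu(w)-\log Z(w)$. Differentiating, the product rule plus $\nabla_w\log Z=\mu$ produces the cancellation $\nabla_w H(w)=\mu(w)+J_\mu(w)^\top w-\mu(w)=J_\mu(w)^\top w$, where $J_\mu$ is the Jacobian of $\mu$. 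A second application of the log-derivative trick gives $\partial_{w_j}\mu_i(w)=\E[x_ix_j]-\mu_i\mu_j=\Sigma(w)_{ij}$, so $J_\mu(w)=\Sigma(w)$ is symmetric and $\nabla_w H(w)=\Sigma(w)\,w$.

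Combining the two pieces yields $\nabla_w g(w)=\Sigma(w)(c+\lambda w)$, and taking the inner product with $c+\lambda w$ gives the quadratic form
$$\nabla_w g(w)\cdot (c+\lambda w)\;=\;(c+\lambda w)^\top \Sigma(w)(c+\lambda w)\;=\;\var_{x\sim\phi(\cdot;w)}[(c+\lambda w)\cdot x],$$
which is exactly the claimed identity. The only subtlety is the entropy computation—one must notice the cancellation of the $\mu(w)$ terms so that only $J_\mu(w)^\top w$ remains, and then identify $J_\mu$ with $\Sigma$ via a second use of the log-derivative identity. Everything else is routine exponential-family algebra, and the finiteness of $X$ makes all sum/gradient swaps trivial.
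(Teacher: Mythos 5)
Your proof is correct and follows essentially the same route as the paper: both compute the gradient via the log-derivative trick, exploit the cancellation of the mean terms in the entropy gradient, and identify the resulting inner product as the variance of $(c+\lambda w)\cdot x$. The only cosmetic difference is that you package the computation in terms of the covariance matrix $\Sigma(w)$ and the Jacobian $J_\mu=\Sigma$, whereas the paper keeps the expectations explicit; the underlying algebra is identical.
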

\begin{proof}
Let us consider the following objective function:
\[
g(w) = \E_{x \sim \phi(\cdot; w)}[c \cdot x] + \lambda H(w)\,,
~~~ \phi(x;w) = \frac{\exp(w \cdot x)}{\sum_{y \in \calX} \exp(w \cdot y) }\,,
\]
where $H$ is the negative entropy regularizer, i.e.,
\[
H(w) = \E_{x \sim \phi(\cdot; w)}\left[\log \phi(x;w) \right]
=
\E_{x \sim \phi(\cdot ;w)}[w \cdot x]
- \log\left(\sum_{y \in \calX} e^{w \cdot y}\right)\,.
\]
The gradient of $g$ with respect to $w \in \calW$ is equal to
\[
\nabla_w g(w) 
= \E_{x \sim \phi(\cdot ; w)}[ (c \cdot x) \nabla_w \log \phi(x;w)]
+ \lambda \nabla_w H(w)\,.
\]
It holds that
\[
\nabla_w \log \phi(x; w) = \nabla_w \left ( w \cdot x - \log \sum_{y \in \calX} e^{w \cdot  y} \right)
=
x - \E_{x \sim \phi(\cdot; w)}[x]\,,
\]
and
\[
\nabla H(w) =
\E_{x \sim \phi(\cdot; w)}[x] + \E_{x \sim \phi(\cdot;w)}[ (w \cdot x) \nabla_w \log \phi(x;w)] - \E_{x \sim \phi(\cdot;w)}[x]
= 
\E_{x \sim \phi(\cdot;w)}[ (w \cdot x) \nabla_w \log \phi(x; w)]\,.
\]
So, we get that
\[
\nabla_w g(w) = \E_{x \sim \phi(\cdot ; w)}[((c + \lambda w) \cdot x) x]
- \E_{x \sim \phi(\cdot ; w)}[(c + \lambda w) \cdot x] \E_{x \sim \phi(\cdot ; w)}[x]\,.
\]
Note that
\[
\nabla_w g(w) \cdot ( c + \lambda w) 
= \var_{x \sim \phi(\cdot; w)}[(c + 
\lambda w) \cdot x]\,.
\]
\end{proof}

\begin{lemma}
[Gradient with Regularization and Mixing]
\label{lemma:ensemble}
For any $\eps > 0$, 
for the family of solution generators $\calP = \{p(\cdot ; w) = (1-\beta^\star) \phi(\cdot ; w) + \beta^\star \phi(\cdot; \rho^\star w ) : w \in \calW\}$ and the objective $\calL_\lambda^\mathrm{vec}$ of \Cref{eq:vectorized loss}, it holds that
\[
\nabla_w \calL_\lambda^\mathrm{vec}(w) \cdot (c + \lambda w) 
= (1-\beta^\star) \var_{x \sim \phi(\cdot; w)}[(c + \lambda w) \cdot x]
+ \beta^\star \rho^\star \var_{x \sim \phi(\cdot; \rho^\star w)}[(c + \lambda w) \cdot x]\,,
\]
for any $\lambda > 0$.
\end{lemma}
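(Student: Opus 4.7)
The plan is to decompose $\calL_\lambda^\mathrm{vec}$ according to the two components of the mixture and reduce each piece to the single-component computation already carried out in \Cref{lemma:regular}. Specifically, using linearity of expectation on the definition of $p(\cdot;w)$ and the definition of $R^\mathrm{vec}$, I would write
\begin{equation*}
\calL_\lambda^\mathrm{vec}(w) = (1-\beta^\star)\underbrace{\Bigl[\E_{x \sim \phi(\cdot;w)}[c \cdot x] + \lambda H(w)\Bigr]}_{g_1(w)} + \beta^\star\underbrace{\Bigl[\E_{x \sim \phi(\cdot;\rho^\star w)}[c \cdot x] + \tfrac{\lambda}{\rho^\star} H(\rho^\star w)\Bigr]}_{g_2(w)}\,.
\end{equation*}
For $g_1$, \Cref{lemma:regular} applies verbatim and gives $\nabla g_1(w)\cdot(c+\lambda w) = \var_{x \sim \phi(\cdot;w)}[(c+\lambda w)\cdot x]$. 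The main (and essentially only) nontrivial point is handling $g_2$, where the parameter of the exponential family is rescaled to $\rho^\star w$ but the direction we test against is still $c+\lambda w$; the carefully chosen regularization weight $\beta^\star/\rho^\star$ in front of $H(\rho^\star w)$ is exactly what makes this work out cleanly.

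For $g_2$, the natural move is a change of variables $u = \rho^\star w$ and the chain rule. Let $F(u) = \E_{x \sim \phi(\cdot;u)}[c\cdot x] + \tfrac{\lambda}{\rho^\star} H(u)$, so that $g_2(w) = F(\rho^\star w)$ and hence $\nabla_w g_2(w) = \rho^\star\,\nabla_u F(u)\big|_{u=\rho^\star w}$. Reproducing the computation of \Cref{lemma:regular} with the regularization constant $\lambda/\rho^\star$ in place of $\lambda$ gives
\begin{equation*}
\nabla_u F(u) = \E_{x \sim \phi(\cdot;u)}\!\Bigl[\bigl((c + \tfrac{\lambda}{\rho^\star}u)\cdot x\bigr)\bigl(x - \E_{x \sim \phi(\cdot;u)}[x]\bigr)\Bigr]\,.
\end{equation*}
Substituting $u=\rho^\star w$ replaces $c+\tfrac{\lambda}{\rho^\star}u$ by the same direction $c+\lambda w$ that appears in the first piece—this is the key cancellation driven by the choice of regularizer weights.

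Now taking the inner product with $c+\lambda w$ and using the identity $\E[(v\cdot x)(x-\E x)]\cdot v = \var(v\cdot x)$, I get
\begin{equation*}
\nabla_w g_2(w)\cdot(c+\lambda w) = \rho^\star\,\var_{x \sim \phi(\cdot;\rho^\star w)}\!\bigl[(c+\lambda w)\cdot x\bigr]\,.
\end{equation*}
Combining the two pieces with weights $1-\beta^\star$ and $\beta^\star$ yields the claimed identity. I do not expect any real obstacle here: the only subtle step is confirming that the chain-rule factor of $\rho^\star$ and the $1/\rho^\star$ in the regularizer interact exactly so that both variances are measured against the \emph{same} vector $c+\lambda w$, which is precisely why the regularizer in \Cref{eq:regularize} was defined with the weights $(1-\beta^\star)$ and $\beta^\star/\rho^\star$ in the first place.
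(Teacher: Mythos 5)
Your proof is correct and follows essentially the same route as the paper: both decompose the objective into the two mixture components and reduce each to the computation of \Cref{lemma:regular}, with the chain-rule factor $\rho^\star$ cancelling against the $1/\rho^\star$ in the regularizer weight so that both variances are taken against $c+\lambda w$. The only cosmetic difference is that you package the slow component as \Cref{lemma:regular} applied at $u=\rho^\star w$ with regularization constant $\lambda/\rho^\star$, whereas the paper recomputes $\nabla_w \E_{x\sim\phi(\cdot;\rho^\star w)}[c\cdot x]$ and $\nabla_w H(\rho^\star w)$ explicitly.
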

\begin{proof}
Let us first consider the scaled parameter $\rho w \in \calW$ for some $\rho > 0$. Then it holds that
\[
\nabla_w \E_{x \sim \phi(\cdot; \rho w)}[c \cdot x]
=
\E_{x \sim \phi(\cdot; \rho w)}
\left[(c \cdot x) \left(\rho x - \E_{x \sim \phi(\cdot; \rho w)}[\rho x]\right)\right]
= \rho 
\E_{x \sim \phi(\cdot; \rho w)}
\left[ (c \cdot x) \left(x - \E_{x \sim \phi(\cdot; \rho w)}[x] \right)\right]\,.
\]
Moreover, the negative entropy regularizer at $\rho w$ is
\[
H (\rho w) = 
\E_{x \sim \phi(\cdot; \rho w)} [(\rho w) \cdot x] - \log \sum_{y \in \calX} e^{(\rho w) \cdot y}\,.
\]
It holds that
\[
\nabla_w H(\rho w) 
= \E_{x \sim \phi(\cdot; \rho w)}[(\rho w \cdot x) \nabla_w \log \phi(x ; \rho w) ]
=
\rho^2 \left(\E_{x \sim \phi(\cdot; \rho w)}[(w \cdot x) x] - \E_{x \sim \phi(\cdot; \rho w)}[w \cdot x]\E_{x \sim \phi(\cdot; \rho w)}[x] \right)\,.
\]
We consider the objective function $\calL_\lambda^\mathrm{vec}$ to be defined as follows: first, we take
\[
p(\cdot; w) = (1-\beta^\star) \phi(\cdot; w)
+ \beta^\star \phi(\cdot; \rho^\star w)\,,
\]
i.e., $p(\cdot; w)$ is the mixture of the probability measures $\phi(\cdot; w)$ and $\phi(\cdot; \rho^\star w)$ with weights $1-\beta^\star$ and $\beta^\star$ respectively for some scale $\rho^\star > 0$. Moreover, we take $R^\mathrm{vec}(w) = (1-\beta^\star) H(w) + \frac{\beta^\star}{\rho^\star} H(\rho^\star w)$. Then we define our regularized loss $\calL_\lambda^\mathrm{vec}$ to be
\[
\calL_\lambda^\mathrm{vec}(w) = \E_{x \sim p(\cdot; w)}[c \cdot x] + \lambda R^\mathrm{vec}(w)\,.
\]
Using \Cref{lemma:regular} and the above calculations,
we have that
\begin{align*}
\nabla_w \calL_\lambda^\mathrm{vec}(w) 
&= 
(1-\beta^\star) \nabla_w \E_{x \sim \phi(\cdot; w)}[c \cdot x] +  \lambda(1-\beta^\star) \nabla_w H(w) + \beta^\star \nabla \E_{x \sim \phi(\cdot; \rho^\star x)}[c \cdot x]
+ \lambda \frac{\beta^\star}{\rho^\star} \nabla_w H(\rho^\star w)
\\
& = (1-\beta^\star) \left(\E_{x \sim \phi(\cdot ; w)}[((c + \lambda w) \cdot x) x]
- \E_{x \sim \phi(\cdot ; w)}[(c + \lambda w) \cdot x] \E_{x \sim \phi(\cdot ; w)}[x]\right) + \\
& + \beta^\star \rho^\star \left(\E_{x \sim \phi(\cdot ; \rho^\star w)}[((c + \lambda w) \cdot x) x]
- \E_{x \sim \phi(\cdot ; \rho^\star w)}[(c + \lambda w) \cdot x] \E_{x \sim \phi(\cdot ; \rho^\star w)}[x]\right) \,.
\end{align*}
The above calculations yield
\begin{align*}
\nabla_w \calL_\lambda^\mathrm{vec}(w) \cdot (c +  \lambda w) 
& = (1-\beta^\star) \var_{x \sim \phi(\cdot; w)}[(c + \lambda w) \cdot x]
+ \beta^\star \rho^\star \var_{x \sim \phi(\cdot; \rho^\star w)}[(c + \lambda w) \cdot x]\,,
\end{align*}
and this concludes the proof.
\end{proof}

The above correlation being positive intuitively means that performing gradient descent to $\calL_\lambda^\mathrm{vec}$ gives that the parameter $w$ converges to $-c/\lambda$, the point that achieves completeness for that objective.

However, to obtain fast convergence, we need to show that the above correlation is non-trivial.
This means that our goal in order to prove \Cref{lemma:correlation} is to provide a lower bound for the above quantity, i.e., it suffices to give a non-trivial lower bound for the variance of the random variable
$(c + \lambda w) \cdot x$ with respect to the probability measure $\phi(\cdot ; \rho^\star w)$. 
It is important to note that in the above statement we did not fix the value of $\rho^\star$.
We can now make use of \Cref{lemma:variance almost uni}.
Intuitively, by taking the scale parameter appearing in the mixture $\rho^\star$ to be sufficiently small, we can manage to provide a lower bound for the variance of $(c + \lambda w) \cdot x$ with respect to the almost uniform measure, i.e, the second summand of the above right-hand side expression has significant contribution.
We remark that $\rho$ corresponds to the inverse temperature parameter. Hence, our previous analysis essentially implies that policy gradient on combinatorial optimization potentially works if the variance $\var_{x \sim \phi(\cdot; \rho w)}[(c + \lambda w) \cdot x]$ is non-vanishing at high temperatures $1/\rho$.


\begin{proof}
[The proof of \Cref{lemma:correlation}]
Recall that 
\[
\phi(x ; w) = \frac{e^{w \cdot x}}{\sum_{y \in \calX} e^{w \cdot y}}\,.
\]
Let also $D$ be the diameter of $\calX$ and $B$ the diameter of $\calZ$. 
Recall from \Cref{lemma:ensemble} that we have that
\[
\nabla_w \calL_\lambda^\mathrm{vec}(w) \cdot (c + \lambda w)
= (1-\beta^\star) \var_{x \sim p(\cdot; w)}[(c + \lambda w) \cdot x]
+ \beta^\star \rho^\star \var_{x \sim p(\cdot; \rho^\star w)}[(c + \lambda w) \cdot x]\,,
\]
where the scale parameter $\rho^\star > 0$ is to be decided.
This means that
\[
\nabla_w \calL_\lambda^\mathrm{vec}(w) \cdot (c + \lambda w)
\geq \beta^\star \rho^\star \var_{x \sim \phi(\cdot; \rho^\star w)}[(c +  \lambda w) \cdot x]\,.
\]
Our goal is now to apply \Cref{lemma:variance almost uni} in order to lower bound the above variance. Applying \Cref{lemma:variance almost uni} for $\mu \gets \phi(\cdot; \rho^\star w), c \gets c + \lambda w \in \calZ$ and, so for some absolute constant $C_0$, we can pick
\[
\rho^\star = C_0 \frac{\alpha}{B D^3}\,.
\]
Thus, we have that
\[
\var_{x \sim \phi(\cdot; \rho^\star w)}[(c + \lambda w) \cdot x]
\geq
\Omega(\alpha \| c + \lambda w\|_2^2)\,.
\]
This implies the desired result since
\[
\nabla_w \calL_\lambda^\mathrm{vec}(w) \cdot (c + \lambda w)
\geq C_0 \beta^\star \alpha^2/(B D^3) \|c + \lambda w\|_2^2\,. 
\]
\end{proof}

\subsection{Convergence for Quasar Convex Functions}
\label{app:convergence}

The fact that $\calL_\lambda$ is quasar convex with respect to $-M/\lambda$ implies that projected SGD converges to that point in a small number if steps and hence the family $\calP$ is efficiently optimizable. The analysis is standard (see e.g., \cite{hardt2016gradient}). For completeness a proof can be found in \Cref{app:convergence}.
\begin{proposition}
[Convergence]
\label{prop:convergence}
Consider $\eps > 0$ and a prior $\calR$ over $\calI$. Assume that \Cref{assumption:1} holds with parameters $C,D_S,D_\calI,\alpha$.
Let $W_1,\ldots,W_T$ be the updates of the SGD algorithm with projection set $\calW$ performed on $\calL_\lambda$ of \Cref{eq:reg-loss} with appropriate step size and parameter $\lambda$. Then, for the non-regularized objective $\calL$, it holds that
\[
\E_{t \sim U([T])}[\calL(W_t)] \leq \calL(-M/\lambda) + \eps\,,
\]
when $T \geq \poly(1/\eps, 1/\alpha,C,D_S,D_\calI, \|W_0 + M/\lambda\|_\F)$.

\end{proposition}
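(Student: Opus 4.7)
The plan is to combine the quasar convexity of $\calL_\lambda$ established in \Cref{prop:quasar} with the standard convergence guarantee for projected SGD on quasar-convex functions, and then translate the resulting bound on $\calL_\lambda$ back to the unregularized objective $\calL$ by controlling the regularization gap.

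First, I would invoke \Cref{prop:quasar}, which gives that $\calL_\lambda$ is $\gamma$-quasar convex with respect to $\overline W = -M/\lambda$ on $\calW$, where $\gamma = \poly(\eps, \alpha, 1/C, 1/D_S, 1/D_\calI)$. To apply the Hardt--Ma--Recht type convergence theorem from \cite{hardt2016gradient} (or its refinement in \cite{hinder2020near}), I need (i) a uniform bound on the stochastic gradient norm, and (ii) a bound on the initial distance $\|W_0 - \overline W\|_\F$. For (i), observe that a stochastic gradient obtained by sampling $I \sim \calR$ and $s \sim p(\cdot; I; W)$ is an unbiased estimator of $\nabla_W \calL_\lambda(W)$ whose Frobenius norm can be bounded, using \Cref{lemma:bounded-gradient} applied fiberwise, by $G = \poly(C, D_S, D_\calI, 1/\lambda)$ since $\calW$ is a ball of radius $O(\|M\|_\F/\lambda)$. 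Quantity (ii) is precisely the parameter appearing in the statement.

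Second, the standard analysis for $\gamma$-quasar convex functions with bounded stochastic gradient and appropriately tuned step size $\eta = \Theta(\gamma \|W_0 - \overline W\|_\F / (G \sqrt T))$ yields
\[
\E_{t \sim U([T])}\bigl[\calL_\lambda(W_t) - \calL_\lambda(\overline W)\bigr] \;\leq\; \frac{G \, \|W_0 - \overline W\|_\F}{\gamma \sqrt{T}}\,.
\]
Thus $T = \poly(G, 1/\gamma, \|W_0 - \overline W\|_\F, 1/\eps')$ iterations suffice to drive the left-hand side below any target accuracy $\eps'$ on the regularized loss.

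Third, and this is the main subtlety, I must transfer the bound from $\calL_\lambda$ to $\calL$. Using $\calL = \calL_\lambda - \lambda R$ we have
\[
\calL(W_t) - \calL(\overline W) \;=\; \bigl[\calL_\lambda(W_t) - \calL_\lambda(\overline W)\bigr] + \lambda \bigl[R(\overline W) - R(W_t)\bigr]\,.
\]
Since $R$ is a convex combination of negative entropies of distributions supported on the finite set $\psi_S(S)$, we have $|R(W)| \leq \log|S| \leq \poly([\calI])$ uniformly, hence $\lambda \, |R(W_t) - R(\overline W)| \leq \lambda \cdot \poly([\calI])$. Taking $\lambda$ at the scale already fixed in \Cref{prop:complete}, namely $\lambda = \eps/\poly([\calI])$, makes this additive slack at most $\eps/2$. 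Setting $\eps' = \eps/2$ in the SGD bound and summing the two contributions gives the claimed inequality with $T$ polynomial in all the listed parameters.

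The main obstacle is bookkeeping the dependence on $\lambda$: it simultaneously shrinks the quasar-convexity parameter $\gamma$, inflates the diameter of $\calW$ and hence the gradient bound $G$, and governs the regularization gap. The key point is that \Cref{prop:complete} fixes $\lambda$ as a single polynomial in $\eps$ and $1/[\calI]$, so each of $G$, $1/\gamma$, and $\|W_0 - \overline W\|_\F$ is polynomially bounded in the same quantities, and the final iteration count $T$ remains polynomial as required.
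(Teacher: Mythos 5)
Your overall strategy matches the paper's: invoke \Cref{prop:quasar} for quasar convexity, feed it into the black-box convergence guarantee of \cite{hardt2016gradient}, and then transfer the bound from $\calL_\lambda$ to $\calL$. The paper instantiates the Hardt--Ma--Recht theorem in its weakly-smooth form (it proves a separate lemma that $\calL_\lambda$ is $\poly(D_S,D_\calI,C)$-smooth and tracks the variance $V$ of the stochastic gradient), whereas you use the bounded-stochastic-gradient form; both are standard and give a polynomial iteration count. The genuine divergence is in the transfer step. The paper does \emph{not} decompose $\calL=\calL_\lambda-\lambda R$; instead it applies the mean value theorem to $\calL$ together with the bounded-gradient lemma to get $\calL(W_t)-\calL(-M/\lambda)\lesssim D_\calI^2 C D_S^2\,\|W_t+M/\lambda\|_\F^2$, and then uses the correlation lower bound (\Cref{lemma:correlation}) to dominate this by $\frac{1}{\gamma}\nabla\calL_\lambda(W_t)\cdot(W_t+M/\lambda)$, so that the same potential driving the SGD analysis directly controls the unregularized suboptimality. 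Your route is more elementary, but it contains one inaccuracy: $R(W)=(1-\beta^\star)H(W)+\frac{\beta^\star}{\rho^\star}H(\rho^\star W)$ is \emph{not} a convex combination of negative entropies, because the weight $\beta^\star/\rho^\star$ is typically much larger than $1$ (indeed $\rho^\star=\Theta(\alpha\lambda/(C_1 D^3))$ scales with $\lambda$), so the bound $|R(W)|\le\log|S|$ is false; one only gets $|R(W)|\le\bigl(1-\beta^\star+\beta^\star/\rho^\star\bigr)\log|S|$. Your conclusion survives, since $\lambda\cdot\frac{\beta^\star}{\rho^\star}\log|S|=O\bigl(\beta^\star C_1 D^3\alpha^{-1}\log|S|\bigr)$ is independent of $\lambda$ and is driven below $\eps/2$ by the smallness of $\beta^\star$ (which must then be taken $\poly(\eps,\alpha,1/C,1/D_S,1/D_\calI,1/[\calI])$), but the justification as written needs this correction; the additive slack is controlled jointly by $\lambda$ \emph{and} $\beta^\star$, not by $\lambda$ alone.
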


Our next goal is to use \Cref{prop:quasar} and show that standard projected SGD on the objective $\calL_\lambda$ converges in a polynomial number of steps.
The intuition behind this result is that since 
the correlation between $\nabla \calL_\lambda(W)$ and the direction
$M + \lambda W$ is positive and non-trivial, the gradient field drives the optimization method towards the point 
$-M/\lambda$. 
\begin{proof}
Consider the sequence of matrices $W_1, \ldots, W_t, \ldots, W_T$ generated by applying PSGD on $\calL_\lambda$ with step size $\eta$ (to be decided) and initial parameter vector $W_0 \in \calW$.
We have that $\calL_\lambda$ is $\gamma$-quasar convex and is also $O(\Gamma)$-weakly smooth\footnote{As mentioned in \cite{hardt2016gradient}, a function $f$ is $\Gamma$-weakly smooth if for any point $\theta$, $\| \nabla f(\theta)\|^2 \leq \Gamma(f(\theta) - f(\theta^\star))$. Moreover, a function $f$ that is $\Gamma$-smooth (in the sense $\|\nabla^2 f\| \leq \Gamma)$, is also $O(\Gamma)$-weakly smooth.} since we now show that it is $\Gamma$-smooth. 

\begin{lemma}
$\calL_\lambda$ is $\poly(D_S, D_\calI, C)$-smooth.
\end{lemma}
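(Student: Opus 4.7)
The plan is to decompose $\calL_\lambda(W) = \E_{z \sim \calR}[\calL_{\lambda,z}(W)]$ into an average of instance-specific objectives and to bound the operator norm of the Hessian of each $\calL_{\lambda,z}$ uniformly in $z$. Since the spectral norm is preserved under expectation, such a uniform bound transfers directly to $\calL_\lambda$. The key observation is that every piece of $\calL_{\lambda,z}$ is encoded by the log-partition function $A_z(W) := \log \sum_{x \in X} e^{z^\top W x}$ of the exponential family $\phi(\cdot;z;W)$: one has $\E_{\phi(\cdot;z;W)}[z^\top M x] = \langle M, \nabla_W A_z(W)\rangle$ and the negative entropy equals $H(z;W) = \langle W, \nabla A_z(W)\rangle - A_z(W)$. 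Hence the entire smoothness analysis reduces to controlling the second and third derivatives of $A_z$.

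The central tool is that higher derivatives of a log-partition function are cumulants of the sufficient statistic, which here is the matrix $z x^\top$ satisfying $\|z x^\top\|_\F \leq D_\calI D_S$. Standard bounds on second and third cumulants of bounded random variables give $\|\nabla^2 A_z(W)\|_{\mathrm{op}} \leq O((D_\calI D_S)^2)$ and $\|\nabla^3 A_z(W)\|_{\mathrm{op}} \leq O((D_\calI D_S)^3)$ uniformly in $W$. Differentiating the data term twice yields a Hessian of the form $\nabla^3 A_z(W) \cdot M$, with operator norm at most $\|M\|_\F \cdot O((D_\calI D_S)^3) \leq C \cdot \poly(D_S, D_\calI)$. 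Differentiating $H(z;W)$ twice yields $\nabla^2 A_z(W) + \nabla^3 A_z(W) \cdot W$ (the two $\nabla A_z$ contributions cancel), so $\|\nabla^2[\lambda H(z;W)]\|_{\mathrm{op}} \leq \lambda \cdot O((D_\calI D_S)^2) + \lambda \|W\|_\F \cdot O((D_\calI D_S)^3)$; using $\|W\|_\F \leq B = O(C/\lambda)$ from \Cref{prop:complete}, the $\lambda B = O(C)$ factor absorbs the $1/\lambda$ scaling and leaves a bound of $\poly(D_S, D_\calI, C)$.

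The final step is to handle the mixture structure of $p$ and of $R$. For the ``slow'' components $\E_{\phi(\cdot;z;\rho^\star W)}[z^\top M x]$ and $\frac{1}{\rho^\star} H(z;\rho^\star W)$, the chain rule produces an extra $(\rho^\star)^2$ from each second differentiation; combined with the explicit weights $\beta^\star$ and $\beta^\star/\rho^\star$, and using $\beta^\star, \rho^\star \in (0,1)$, these contributions are no larger than the ``fast'' component terms. Summing and taking expectation over $z \sim \calR$ yields $\|\nabla^2 \calL_\lambda\|_{\mathrm{op}} \leq \poly(D_S, D_\calI, C)$ as required. The main obstacle I expect is the bookkeeping of tensor contractions and chain-rule factors when differentiating the product $\langle W, \nabla A_z(W)\rangle$ inside $H(z;W)$ and the rescaled entropy $\frac{1}{\rho^\star} H(\rho^\star W)$; once the cumulant identities are in place the bounds reduce to boundedness of the feature spaces and of $\calW$.
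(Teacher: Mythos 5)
Your proof is correct and follows essentially the same route as the paper's: decompose $\calL_\lambda$ as an average of instance-specific objectives, reduce to bounding second derivatives of exponential-family moment functionals, and conclude from boundedness of the features ($\|x\|_2 \le D_S$, $\|z\|_2 \le D_\calI$), of $M$, and of the domain $\calW$ (which is what lets the radius $\|W\|_\F \le C/\lambda$ absorb the $1/\lambda$ against the $\lambda$ weighting the entropy term). The only real difference is presentational: the paper differentiates its explicit covariance formula for $\nabla_w \calL_\lambda^{\mathrm{vec}}$ once more and leaves the resulting bounds as a ``standard computation,'' whereas you organize the identical computation through the log-partition function $A_z$ and its second and third cumulants, which cleanly supplies the detail the paper elides.
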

\begin{proof}
We have that
\[
\| \nabla \calL_\lambda(W)\|_\F^2
= \| \E_{z \sim \calR}[\nabla \calL_{\lambda,z}(W)] \|_\F^2
\leq \E_{z \sim \calR} \|z\|_2^2 \| \nabla_w \calL_\lambda^\mathrm{vec}(z^\top W) \|_\F^2\,.
\]
It suffices to show that $\calL_\lambda^\mathrm{vec}$ is smooth. Recall that
\begin{align*}
\nabla_w \calL_\lambda^\mathrm{vec}(w) &=
    (1-\beta^\star) \left(\E_{x \sim \phi(\cdot ; w)}[((c + \lambda w) \cdot x) x]
- \E_{x \sim \phi(\cdot ; w)}[(c + \lambda w) \cdot x] \E_{x \sim \phi(\cdot ; w)}[x]\right) + \\
& + \beta^\star \rho^\star \left(\E_{x \sim \phi(\cdot ; \rho^\star w)}[((c + \lambda w) \cdot x) x]
- \E_{x \sim \phi(\cdot ; \rho^\star w)}[(c + \lambda w) \cdot x] \E_{x \sim \phi(\cdot ; \rho^\star w)}[x]\right) \,.
\end{align*}
This means that
\begin{align*}
\|\nabla_w^2 \calL_\lambda^\mathrm{vec}(w)\|^2_\F & \leq 
(1-\beta^\star) (A_1 + A_2) + \beta^\star \rho^\star(A_3 + A_4)\,,
\end{align*}
where
\[
A_1 = 
    \left\| \nabla_w\E_{x \sim \phi(\cdot ; w)}[((c + \lambda w) \cdot x) x] \right\|_\F^2\,,~~
A_2 = \left\| \nabla_w \E_{x \sim \phi(\cdot ; w)}[(c + \lambda w) \cdot x] \E_{x \sim \phi(\cdot ; w)}[x] \right\|_\F^2\,,
\]
\[
A_3 = \left\| \nabla_w \E_{x \sim \phi(\cdot ; \rho^\star w)}[((c + \lambda w) \cdot x) x] \right\|_\F^2\,,~~
A_4 = \left\| \nabla_w \E_{x \sim \phi(\cdot ; \rho^\star w)}[(c + \lambda w) \cdot x] \E_{x \sim \phi(\cdot ; \rho^\star w)}[x] \right\|_\F^2\,.
\]
Standard computation of these values yields that, since $D_S$ and $D_\calI$ are bounds to $x$ and $z$ respectively, we have that $\calL_\lambda$ is smooth with parameter $\poly(D_S, D_\calI, C)$.
\end{proof}
Let $V$ be the variance of the unbiased estimator used for $\nabla_W \calL_\lambda(W)$. We can apply the next result of \cite{hardt2016gradient}.
\begin{lemma}
[\cite{hardt2016gradient}]
Suppose the objective function $f$ is $\gamma$-weakly quasi convex and $\Gamma$-weakly smooth, and let $r(\cdot)$ be an unbiased estimator for $\nabla f(\theta)$ with variance $V$. Moreover, suppose
the global minimum $\overline{\theta}$ belongs to $\calW$, and the initial point $\theta_0$ satisfies $\|\theta_0 - \overline{\theta}\|_2 \leq R$. Then
projected stochastic gradient descent with a proper learning rate returns $\theta_T$ in $T$ iterations with
expected error 
\[
\E_{t \sim U([T])} f(\theta_t) - f(\overline{\theta}) \leq \max \left\{ 
\frac{\Gamma R^2}{\gamma^2 T}, \frac{R \sqrt{V}}{\gamma \sqrt{T}}
\right\}\,.
\]
\end{lemma}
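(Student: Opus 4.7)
The plan is to apply the Hardt-et-al.\ projected-SGD convergence lemma stated at the end of \Cref{app:convergence} to the regularized objective $\calL_\lambda$, and then translate the resulting bound from $\calL_\lambda$ back to the unregularized objective $\calL$ by controlling the entropy regularizer. All of this has to be done with a single choice of $\lambda$ that simultaneously makes quasar convexity nontrivial and makes the $\lambda R$ slack small.

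First I would collect the three ingredients required by the SGD lemma when applied to $\calL_\lambda$ over the projection ball $\calW$: (i) a quasar-convexity parameter $\gamma$, (ii) a weak-smoothness parameter $\Gamma$, and (iii) a variance bound $V$ on the policy-gradient estimator. Item (i) is immediate from \Cref{prop:quasar}, which gives $\gamma=\poly(\eps,\alpha,1/C,1/D_S,1/D_\calI)$ with respect to the minimizer $\overline W:=-M/\lambda$. For item (ii) I would differentiate the expression for $\nabla_w \calL_\lambda^{\mathrm{vec}}$ appearing in \Cref{lemma:bounded-gradient} once more in $w$, observing that each term is again bilinear in variables of bounded norm, and then lift to the matrix variable through the chain-rule identity $\nabla_W \calL_{\lambda,z}(W)=z\,[\nabla_w\calL_\lambda^{\mathrm{vec}}(z^\top W)]^\top$; this yields $\Gamma=\poly(C,D_S,D_\calI)$ (this is the short in-appendix Lemma I would expand). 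For item (iii) I would bound the policy-gradient sample $L(s;I)\,\nabla_W\log p(s;I;W)$ using $|L(s;I)|\le CD_SD_\calI$ and $\|\nabla_W\log p\|_F\le O(D_SD_\calI)$, so $V=\poly(C,D_S,D_\calI)$.

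Since $\overline W\in\calW$ (the ball has radius $\|M\|_F/\lambda$ by the choice in \Cref{prop:complete}), the Hardt-et-al.\ lemma then yields, for an arbitrary initialization $W_0\in\calW$ and appropriate step size,
\[
\E_{t\sim U([T])}[\calL_\lambda(W_t)]-\calL_\lambda(\overline W)\;\le\; \max\!\left\{\frac{\Gamma R^2}{\gamma^2 T},\;\frac{R\sqrt V}{\gamma\sqrt T}\right\},
\]
where $R=\|W_0-\overline W\|_F=\|W_0+M/\lambda\|_F$. Choosing $T=\poly(\Gamma,V,1/\gamma,R,1/\eps)$ makes the right-hand side at most $\eps/2$. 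To convert this into a bound on the unregularized objective, I would write $\calL_\lambda=\calL+\lambda R$ and obtain
\[
\E[\calL(W_t)]-\calL(\overline W)=\E[\calL_\lambda(W_t)]-\calL_\lambda(\overline W)+\lambda\big(R(\overline W)-\E[R(W_t)]\big).
\]
Each negative-entropy term satisfies $|H(W)|\le\log|S|\le\poly([\calI])$, hence $|R(\overline W)-R(W_t)|\le\poly([\calI])$. Picking $\lambda=\eps/\poly([\calI])$ (consistently with the $\lambda$ fixed in \Cref{prop:complete}) makes this additive slack at most $\eps/2$, and combining yields $\E[\calL(W_t)]\le \calL(\overline W)+\eps$.

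The main obstacle I anticipate is not any single inequality but the simultaneous bookkeeping of all polynomial factors. The radius $R=\|W_0+M/\lambda\|_F$ scales like $1/\lambda$, the quasar constant $\gamma$ and smoothness $\Gamma$ from \Cref{prop:quasar} and step (ii) must be kept at $1/\poly([\calI]/\eps)$ and $\poly([\calI])$ respectively \emph{for the chosen} $\lambda=\eps/\poly([\calI])$, and $\lambda$ must also match the one used to obtain completeness in \Cref{prop:complete}. The main job is to verify that with this single choice of $\lambda$ all of $\gamma,\Gamma,V,R$ remain at most $\poly([\calI],1/\eps,1/\alpha,C,D_S,D_\calI,\|W_0+M/\lambda\|_F)$, so that the resulting iteration count $T$ inherits the polynomial dependence claimed in the proposition.
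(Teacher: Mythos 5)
Your proposal does not prove the stated lemma; it proves a different (downstream) statement. The lemma in question is the generic convergence guarantee for projected SGD on a $\gamma$-weakly quasi-convex, $\Gamma$-weakly smooth objective with a variance-$V$ stochastic gradient oracle --- a self-contained statement about an abstract $f$, $\overline{\theta}$, $\calW$, which the paper imports from \cite{hardt2016gradient} and uses as a black box. Your opening sentence already reveals the circularity: you ``apply the Hardt-et-al.\ projected-SGD convergence lemma,'' i.e., you invoke the very statement you were asked to prove, and the remainder of your argument (collecting $\gamma$ from \Cref{prop:quasar}, $\Gamma$ from a smoothness computation, $V$ from the policy-gradient estimator, and then translating the bound from $\calL_\lambda$ back to $\calL$) is a proof sketch of \Cref{prop:convergence}, not of the lemma itself. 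What is missing is the actual SGD analysis: starting from the update $\theta_{t+1}=\Pi_{\calW}(\theta_t-\eta\, r(\theta_t))$, use non-expansiveness of the projection onto $\calW \ni \overline{\theta}$ to get $\|\theta_{t+1}-\overline{\theta}\|_2^2\le\|\theta_t-\overline{\theta}\|_2^2-2\eta\, r(\theta_t)\cdot(\theta_t-\overline{\theta})+\eta^2\|r(\theta_t)\|_2^2$; take expectations and apply quasar convexity to lower bound $\E[\nabla f(\theta_t)\cdot(\theta_t-\overline{\theta})]\ge\gamma\,(\E[ f(\theta_t)]-f(\overline{\theta}))$ and weak smoothness plus the variance bound to upper bound $\E\|r(\theta_t)\|_2^2\le\Gamma\,(\E [f(\theta_t)]-f(\overline{\theta}))+V$; then telescope over $t$ and tune the step size $\eta$ to obtain the claimed $\max\{\Gamma R^2/(\gamma^2T),\,R\sqrt V/(\gamma\sqrt T)\}$ rate. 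None of these steps appear in your write-up.

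To be fair to what you did write: as a sketch of \Cref{prop:convergence} it tracks the paper's appendix argument quite closely (establish smoothness of $\calL_\lambda$, invoke the Hardt et al.\ bound as a black box, then transfer back to the unregularized $\calL$), and your transfer step via $|H(W)|\le\log|S|$ is arguably cleaner than the paper's mean-value-theorem argument for that last comparison. But that is a proof of the wrong statement; for the lemma as posed you have supplied no proof at all, and since the paper itself only cites it, the correct deliverable here was either a from-scratch derivation along the lines above or an explicit acknowledgment that the result is being imported.
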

We apply the above result to $\calL_\lambda$ in order to find matrices $W_1,...,W_T$ that achieve good loss on average compared to $-M/\lambda$. Moreover, using a batch SGD update, we can take $V$ to be also polynomial in the crucial parameters of the problem.
We note that one can adapt the above convergence proof and show that the actual loss $\calL$ (and not the loss $\calL_\lambda)$ are close after sufficiently many iterations (as indicated by the above lemma). 
We know that the Frobenius norm of the gradient of $\calL(W)$ is at most of order $O(D_\calI^2 C D_S^2)$. We can apply the mean value theorem in high dimensions (by taking $\calW$ to be an open ball of radius $O(B)$) and this yields that the difference between the values of $\calL(W_T)$ and 
$\calL(-M/\lambda)$ is at most $D_\calI^2 C D_S^2 \|W_t + M/\lambda\|_\F^2$. However, the right-hand side is upper bounded by the correlation between $\nabla \calL_\lambda(W_t)$ and $W_t + M/\lambda$. Hence, we can still use this correlation as a potential in order to minimize $\calL$.
This implies that 
the desired convergence guarantee holds as long as $T \geq \poly(1/\eps, 1/\alpha, C, D_S, D_\calI, \|W_0 + M/\lambda\|_\F)$\,.
\end{proof}

\section{Deferred Proofs: Variance under Almost Uniform Distributions}
\label{sec:almost unif}

This section is a technical section 
that states some properties of exponential families. We use some standard notation, such as $w$ and $x$, for the statements and the proofs but we underline that these symbols do not correspond to the notation in the main body of the paper.

We consider the parameter space $\Theta$ and for any 
parameter $w \in \Theta$, we define the probability distribution $\phi(\cdot; w)$ over a space $\calX$ with density
\[
\phi(x; w) = 
\frac{e^{w \cdot x}}{\sum_{y \in \calX} e^{w \cdot y}}\,.
\]
In this section, our goal is to relate the variance of $c \cdot x$ under the measure $\phi(\cdot; 0)$ (uniform case) and $\phi(\cdot; \rho^\star w)$ for some $w \in \calW$ and some sufficiently small $\rho^\star$ (almost uniform case).
The main result of this section follows.
\begin{proposition}
[Variance Lower Bound Under Almost Uniform Distributions]
\label{lemma:variance almost uni}

Assume that the variance of $c \cdot x$ under the uniform distribution over $\calX$, whose diameter is $D$, is lower bounded by $\alpha \|c\|^2_2$.
Moreover assume that $w \in \Theta$ with $\|w\|_2 \leq B$.
Then, setting $\rho^\star = O(\alpha/(B D^3))$, it holds that $\var_{x \sim \phi(\cdot; \rho^\star w)}[c \cdot x] = \Omega(\alpha \|c\|_2^2)$.
\end{proposition}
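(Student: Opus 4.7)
The plan is to show that when $\rho^\star$ is small, the exponential-family density $\phi(\cdot; \rho^\star w)$ is a multiplicative perturbation of the uniform distribution $\phi(\cdot; 0) = \mu_0$, and that this perturbation is small enough that the variance of the linear functional $c \cdot x$ is preserved up to a constant factor. The whole argument reduces to a first-order Taylor expansion of the exponential, combined with the hypothesized variance lower bound under $\mu_0$.

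First, I would write $\phi(x; \rho^\star w) = \phi(x;0) \cdot f(x)/\E_{\mu_0}[f]$ with $f(x) = e^{\rho^\star w \cdot x}$, and note that $|\rho^\star w \cdot x| \leq \rho^\star B D$. Assuming $\rho^\star BD \leq 1/2$ (which will be ensured by the final choice of $\rho^\star$), a Taylor expansion of $e^t$ gives $f(x) = 1 + \rho^\star w \cdot x + O((\rho^\star BD)^2)$, $\E_{\mu_0}[f] \in [1/2, 2]$, and therefore the pointwise bound
\[
|\phi(x; \rho^\star w) - \phi(x; 0)| \;\leq\; \phi(x;0)\cdot O(\rho^\star BD).
\]
Summing against any bounded function $g$ with $|g| \leq M$ yields $|\E_{\mu_\rho}[g] - \E_{\mu_0}[g]| \leq O(\rho^\star BD)\cdot M$. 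Applied to $g(x) = (c \cdot x)^2$ (with $M = \|c\|_2^2 D^2$) and $g(x) = c \cdot x$ (with $M = \|c\|_2 D$), this controls both the second moment and the mean.

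Next, I would combine these two estimates into a variance estimate. Writing
\[
\var_{\mu_\rho}(c\cdot x) = \E_{\mu_\rho}[(c\cdot x)^2] - (\E_{\mu_\rho}[c \cdot x])^2,
\]
and expanding $(\E_{\mu_\rho}[c\cdot x])^2$ around $\E_{\mu_0}[c\cdot x]$, together with the trivial bound $|\E_{\mu_0}[c\cdot x]| \leq \|c\|_2 D$, I get
\[
\var_{\mu_\rho}(c \cdot x) \;\geq\; \var_{\mu_0}(c\cdot x) - O(\rho^\star B D^3)\cdot \|c\|_2^2.
\]
Using the hypothesis $\var_{\mu_0}(c\cdot x) \geq \alpha \|c\|_2^2$, I would then choose $\rho^\star = c_0 \alpha/(BD^3)$ for a small enough absolute constant $c_0$ so that the error term is at most $\alpha \|c\|_2^2/2$, yielding $\var_{\mu_\rho}(c \cdot x) \geq (\alpha/2) \|c\|_2^2 = \Omega(\alpha \|c\|_2^2)$.

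The main obstacle is not a single hard step but rather being careful with the constants: one has to simultaneously ensure that $\rho^\star BD$ is small enough for the Taylor remainder to be negligible, that the normalizer $\E_{\mu_0}[f]$ stays bounded away from $0$, and that the cross-term $2\E_{\mu_0}[c\cdot x]\cdot (\E_{\mu_\rho}[c \cdot x] - \E_{\mu_0}[c\cdot x])$ in the expansion of the squared mean does not dominate the $\alpha \|c\|_2^2$ lower bound (this is precisely the source of the $D^3$ rather than $D$ in the denominator of $\rho^\star$). Once the dependence is $\rho^\star = O(\alpha/(BD^3))$, all error terms are absorbed and the claimed variance lower bound follows.
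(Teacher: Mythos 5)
Your proposal is correct and follows essentially the same route as the paper: both arguments bound the perturbation of the first and second moments of $c \cdot x$ when passing from the uniform measure $\phi(\cdot;0)$ to the tilted measure $\phi(\cdot;\rho^\star w)$, obtain an error of order $\rho^\star B D^3 \|c\|_2^2$ in the variance, and then choose $\rho^\star = O(\alpha/(BD^3))$ to absorb it into the assumed lower bound $\alpha\|c\|_2^2$. The only difference is technical: you derive the moment perturbation bounds from a pointwise Taylor expansion of the density ratio $e^{\rho^\star w \cdot x}/\E[e^{\rho^\star w\cdot x}]$, whereas the paper applies the mean value theorem to $\rho \mapsto \E_{x\sim\phi(\cdot;\rho w)}[v\cdot x]$ and $\rho \mapsto \E_{x\sim\phi(\cdot;\rho w)}[(v\cdot x)^2]$, bounding the resulting covariance derivatives by $2\|w\|_2 D^2$ and $2\|w\|_2 D^3$ respectively; both yield the same error scaling.
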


We first provide a general abstract lemma that relates the variance of the uniform distribution $U$ over $\calX$ to the variance of an almost uniform probability measure $\mu$. For simplicity, we denote the uniform distribution over $\calX$ with $U = U(\calX)$.
\begin{lemma}
\label{lemma:var-tv}
Let $w \in \Theta$ and $x \in \calX$ with $\|x\|_2 \leq D$.
Consider the uniform probability measure $U$ over $\calX$ and let $\mu$ over $\calX$ be such that there exist $\eps_1, \eps_2 > 0$ with:
\begin{itemize}
    \item $\left\| \E_{x \sim U} [x] - \E_{x \sim \mu}[ x] \right\|_2 \leq \eps_1$, and,
    \item  $w^\top \E_{x \sim \mu} [x x^\top] w \geq w^\top \E_{x \sim U} [x x^\top] w - \eps_2 \|w\|_2^2$.
\end{itemize}
Then it holds that $\var_{x \sim \mu}[w \cdot x] \geq \var_{x \sim U}[w \cdot x] - 3 \max\{\eps_1^2, \eps_1 D, \eps_2\} \|w\|_2^2$.
\end{lemma}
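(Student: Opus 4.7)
The plan is to write each variance as a second-moment minus a squared first-moment and compare the two expressions termwise using the two hypotheses, with the only real work being the control of the cross term produced by the mean shift.

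Concretely, I would start by using the identity
\[
\var_{x \sim \nu}[w \cdot x] = w^\top \E_{x \sim \nu}[xx^\top] w - \bigl(w \cdot \E_{x \sim \nu}[x]\bigr)^2
\]
for $\nu \in \{U, \mu\}$, so that
\[
\var_\mu[w \cdot x] - \var_U[w \cdot x] = w^\top\bigl(\E_\mu[xx^\top] - \E_U[xx^\top]\bigr)w + \bigl((w \cdot \E_U[x])^2 - (w \cdot \E_\mu[x])^2\bigr).
\]
The second hypothesis gives a one-line lower bound $-\eps_2 \|w\|_2^2$ on the first summand, so the entire task reduces to lower bounding the mean-squared-difference term.

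For that term, I would set $u = \E_U[x]$ and $m = \E_\mu[x]$, use the first hypothesis $\|u - m\|_2 \leq \eps_1$ together with $\|u\|_2, \|m\|_2 \leq D$ (from the assumption $\|x\|_2 \leq D$), and expand $(w\cdot m)^2 = (w\cdot u + w\cdot(m - u))^2$ to get
\[
(w \cdot u)^2 - (w \cdot m)^2 = -2(w \cdot u)(w \cdot (m - u)) - (w \cdot (m - u))^2.
\]
Applying Cauchy--Schwarz to each piece yields $|2(w\cdot u)(w\cdot(m-u))| \leq 2\eps_1 D \|w\|_2^2$ and $(w\cdot(m-u))^2 \leq \eps_1^2 \|w\|_2^2$, so the mean-squared-difference term is at least $-(\eps_1^2 + 2\eps_1 D)\|w\|_2^2$. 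Combining with the second-moment bound gives
\[
\var_\mu[w \cdot x] - \var_U[w \cdot x] \geq -(\eps_2 + 2\eps_1 D + \eps_1^2)\|w\|_2^2 \geq -3\max\{\eps_1^2, \eps_1 D, \eps_2\}\|w\|_2^2,
\]
which is precisely the desired conclusion.

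There is no real obstacle here: the only subtle point is picking the ``right'' expansion of the mean-squared-difference so that the bound degrades at the correct rates in $\eps_1, D, \eps_2$ separately (in particular, producing both the $\eps_1 D$ and $\eps_1^2$ terms that appear under the $\max$), rather than using the cruder factorization $(w\cdot u)^2 - (w\cdot m)^2 = (w\cdot(u-m))(w\cdot(u+m))$ which would only produce the $\eps_1 D$ term. Once that expansion is chosen, the rest is Cauchy--Schwarz and direct substitution.
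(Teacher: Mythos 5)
Your proof is correct and follows essentially the same route as the paper's own argument: split each variance into second moment minus squared first moment, use the second hypothesis to control the second-moment difference, and expand the shifted mean with Cauchy--Schwarz to control the cross term. One small remark: your final sum $\eps_2 + 2\eps_1 D + \eps_1^2$ is only bounded by $4\max\{\eps_1^2,\eps_1 D,\eps_2\}$ in general rather than $3\max\{\eps_1^2,\eps_1 D,\eps_2\}$ (the paper's proof has the identical slip, and the constant is immaterial downstream), and in fact the ``cruder'' factorization $(w\cdot(u-m))\,(w\cdot(u+m))$ that you set aside avoids the $\eps_1^2$ term altogether and is the one that actually delivers the stated constant $3$.
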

\begin{proof}
We have that
\[
\var_{x \sim \mu}[w \cdot x] =
\E_{x \sim \mu} \left[ (w \cdot x)^2 \right] - \left(\E_{x \sim \mu}[ w \cdot x]\right)^2\,. 
\]
We first deal with upper-bounding the square of the first moment.
Note that
\[
w \cdot \left( \E_{x \sim \mu}[x] - \E_{x \sim U} [x]\right)
\leq
\| w \|_2  \left\| \E_{x \sim \mu}[x] - \E_{x \sim U} [x] \right \|_2 
\leq \eps_1 \| w \|_2\,.
\]
Let us take $\eps > 0$ (with $\eps < \eps_1)$ for simplicity to be such that
$\left( \E_{x \sim \mu} [w \cdot x] \right)^2 = \left( \E_{ x \sim U} [w\cdot x] + \eps \|w\|_2 \right)^2$. This means that
\begin{align*}
\left( \E_{x \sim \mu} [w \cdot x] \right)^2 
& \leq 
\left(\E_{x \sim U} [w \cdot x] \right)^2 
+ 2 \eps \|w\|_2 \left|\E_{x \sim U} [w \cdot x] \right| + \eps^2 \|w\|_2^2\\
& \leq 
\left(\E_{x \sim U} [w \cdot x] \right)^2
+ 2 \eps D \|w\|_2^2 + \eps^2 \|w\|_2^2\,.
\end{align*}
Next we lower-bound the second moment. It holds that
\[
\E_{x \sim \mu} \left[(w \cdot x)^2\right] = w^\top \E_{ x \sim \mu} \left[ x x^\top \right] w \geq \E_{x \sim U} [(w \cdot x)^2] - \eps_2 \| w\|_2^2\,,
\]
for some $\eps_2 > 0$.
This means that
\[
\var_{x \sim \mu} (w \cdot x)
\geq 
\E_{x \sim U} \left[(w \cdot x)^2 \right] - \eps_2 \| w\|_2^2
- 
\left(\E_{x \sim U} [w \cdot x] \right)^2
- 2 \eps D \|w\|_2^2 - \eps^2 \|w\|_2^2\,.
\]
Hence,
\[
\var_{x \sim \mu} [w \cdot x]
\geq
\var_{x \sim U} [w \cdot x] - 3 \max\{\eps_2, \eps_1^2, \eps_1 D\} \|w \|_2^2\,.
\]
\end{proof}

Our next goal is to relate $\phi(\cdot; \rho^\star w)$ with the uniform measure $\phi(\cdot; 0)$.
According to the above general lemma, we have to relate the first and second moments of $\phi(\cdot ; \rho^\star w)$ with the ones of the uniform distribution $U = \phi(\cdot; 0)$.

\begin{proof}[The Proof of \Cref{lemma:variance almost uni}]
Our goal is to apply \Cref{lemma:var-tv}. First,
let us set
\[
f_v(\rho) = \E_{x \sim \phi(\cdot; \rho w)}[v \cdot x]\,,
\]
for any unit vector $v \in \Theta$.
Then it holds that
\[
\left\| \E_{x \sim \phi(\cdot; 0)} [x] - \E_{x \sim \phi(\cdot; \rho^\star w)} [x] \right\|_2 
=
\sup_{v : \|v \|_2 = 1}
|f_v(0) - f_v(\rho^\star)|\,.
\]
Using the mean value theorem in $[0, \rho^\star]$ for any unit vector $v$, we have that there exists a $\xi = \xi_v \in (0, \rho^\star)$ such that
\[
|f_v(0) - f_v(\rho^\star)| =
\rho^\star ~ |f_v'(\xi)|\,.
\]
It suffices to upper bound $f_v'(\xi)$ for any unit vector $v$ and $\xi \in (0, \rho^\star)$. Let us compute $f_v'$. We have that
\[
\frac{d f_v}{d\rho}
=
\int_S (v \cdot x) \frac{d}{d\rho} \frac{e^{\rho (w x)}}{\int_S e^{\rho (w y)} dy} dx
=
\E_{x \sim \phi(\cdot; \rho w)}[(v \cdot x)(w \cdot x)]
-\E_{x \sim \phi(\cdot; \rho w)}[v \cdot x] \E_{x \sim \phi(\cdot; \rho w)}[w \cdot x]\,.
\]
Since $x \in \calX$, we have that
\[
\sup_{v : \|v\|_2 = 1} \sup_{\xi \in (0,\rho^\star)}|f_v'(\xi)| \leq 2 \|w\|_2 D^2\,.
\]
This gives that
\[
\left\| \E_{x \sim \phi(\cdot; 0)} [x] - \E_{x \sim \phi(\cdot; \rho^\star w)} [x] \right\|_2 
\leq 2\rho^\star \|w\|_2 D^2\,.
\]
We then continue with controlling the second moment: it suffices  to find $\eps_2$ such that for any $v \in \Theta$, it holds
\[
 v^\top \E_{x \sim \phi(\cdot; \rho^\star w)} [x x^\top] v \geq  v^\top \E_{x \sim \phi(\cdot; 0)} [x  x^\top] v  - \eps_2 \|v\|_2^2\,.
\]
Let us set $g_v(\rho) = \E_{x \sim \phi(\cdot; \rho w)}[(v \cdot x)^2]$ for any vector $v \in \Theta$. We have that
\[
|g_v(0) - g_v(\rho^\star)|
=
\rho^\star |g_v'(\xi)|\,,
\]
where $\xi \in (0, \rho^\star)$. It holds that
\[
\left| \frac{d g_v}{d\rho} \right|
=
\left| \E_{x \sim \phi(\cdot; \rho w)}[(v \cdot x)^2(w \cdot x)]
-\E_{x \sim \phi(\cdot; \rho w)}[(v \cdot x)^2] \E_{x \sim \phi(\cdot; \rho w)}[w \cdot x] \right| \leq 2 \|v\|^2 \|w\|_2 D^3\,.
\]
This gives that for any $v \in \Theta$, it holds
\[
 v^\top \E_{x \sim \phi(\cdot; \rho^\star w)} [x x^\top] v \geq  v^\top \E_{x \sim \phi(\cdot; 0)} [x  x^\top] v  - 2 \rho^\star \|w \|_2 D^3 \|v\|_2^2\,.
\]
Note that the above holds for $v = c$ too.
\Cref{lemma:var-tv} gives us that
\[
\var_{x \sim \phi(\cdot; \rho^\star w)} [c \cdot x]
\geq
\var_{x \sim \phi(\cdot; 0)} [c \cdot x] - 3 \max\{\eps_2, \eps_1^2, \eps_1 D\} \|c \|_2^2\,,
\]
where $\eps_1 = 2\rho^\star B D^2$ and $\eps_2 = 2\rho^\star B D^3$. This implies that by picking
\[
\rho^\star = C_0 \frac{\alpha}{B D^3}
\]
for some universal constant $C_0$, we get that 
\[
\var_{x \sim \phi(\cdot; \rho^\star w)} [c \cdot x]
= \Omega(\alpha \|c\|_2^2)\,.
\]
\end{proof}

In this section, we considered $\rho^\star$ as indicated by the above \Cref{lemma:variance almost uni}.

\section{Applications to Combinatorial Problems}
\label{sec:apps}
In this section we provide a series of combinatorial applications of our theoretical framework (\Cref{theorem:main}). In particular, for each one of the following combinatorial problems (that provably satisfy \Cref{assumption:1}), it suffices to specify the feature mappings $\psi_S, \psi_\calI$ and compute the parameters $C,D_S,D_\calI,\alpha$.

\subsection{Maximum Cut, Maximum Flow and Max-$k$-CSPs}


We first provide a general lemma  for the variance of ''linear tensors'' under the uniform measure. 
\begin{lemma}
[Variance Lower Bound Under Uniform]
\label{lemma:var-lower-bound-k}
Let $n,k \in \mathbb N$.
For any $w \in \reals^{\binom{n}{k}}$, it holds that
\[
\var_{x \sim U(\{-1,1\}^n)}[w \cdot x^{\otimes k}] = \sum_{\emptyset \neq S \subseteq [n] : |S| \leq k} w_S^2\,.
\]
\end{lemma}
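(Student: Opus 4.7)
The plan is to reduce the claim to the Parseval identity on the Boolean cube $\{-1,1\}^n$. Interpret $w \cdot x^{\otimes k}$ as a multilinear polynomial of degree at most $k$: group every monomial $x_{i_1}\cdots x_{i_k}$ according to its support, so one may write $w \cdot x^{\otimes k} = \sum_{S \subseteq [n],\, |S|\leq k} w_S\, \chi_S(x)$ where $\chi_S(x)=\prod_{i\in S} x_i$ (using $x_i^2=1$ to collapse repeated indices). The coefficients $w_S$ are linear combinations of the original entries of $w$, and the image of the tensor parameterization covers all such degree-$\leq k$ multilinear polynomials.

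Next I would invoke the orthonormality of the characters $\{\chi_S\}_{S \subseteq [n]}$ in $L^2(\{-1,1\}^n, U)$: since the $x_i$ are independent uniform $\pm 1$, we have $\mathbb{E}[\chi_S(x)] = \prod_{i \in S}\mathbb{E}[x_i] = \mathbf{1}\{S = \emptyset\}$, and consequently $\mathbb{E}[\chi_S(x)\chi_T(x)] = \mathbb{E}[\chi_{S\triangle T}(x)] = \mathbf{1}\{S=T\}$. Combining these:
\begin{align*}
\E_{x \sim U}[w \cdot x^{\otimes k}] &= \sum_{|S|\leq k} w_S \cdot \mathbf{1}\{S=\emptyset\} = w_\emptyset,\\
\E_{x \sim U}[(w \cdot x^{\otimes k})^2] &= \sum_{|S|,|T|\leq k} w_S w_T \cdot \mathbf{1}\{S=T\} = \sum_{|S|\leq k} w_S^2.
\end{align*}

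Subtracting gives $\var_{x\sim U}[w \cdot x^{\otimes k}] = \sum_{|S|\leq k} w_S^2 - w_\emptyset^2 = \sum_{\emptyset \neq S \subseteq [n],\, |S|\leq k} w_S^2$, as claimed. There is no real obstacle here; the only subtlety is the bookkeeping of rewriting the tensor contraction as a multilinear polynomial via $x_i^2=1$, after which the identity is a one-line consequence of Parseval. In the downstream application to Max-$k$-CSP, this lemma is what supplies Item~3 of \Cref{assumption:1} with $\alpha = 1$, since the right-hand side equals $\|w\|_2^2$ whenever $w_\emptyset = 0$.
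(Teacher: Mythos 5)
Your proof is correct and follows essentially the same route as the paper: both arguments expand $w \cdot x^{\otimes k}$ in the Fourier/character basis $\{\chi_S\}$ and apply Parseval (orthonormality of the characters under the uniform measure) to compute the second moment, subtracting the square of the mean $w_\emptyset$. Your version spells out the orthonormality computation and the collapsing of repeated indices via $x_i^2=1$ a bit more explicitly than the paper's one-line appeal to ``the Fourier expansion,'' but the underlying argument is identical.
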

\begin{proof}
For any $w \in \reals^{\binom{n}{k}}$, it holds that
\[
\var_{x \sim U(\{-1,1\}^n)} [w \cdot x^{\otimes k}]
= 
\E_{x \sim U(\{-1,1\}^n)} \left[ (w \cdot x^{\otimes k} )^2 \right]
- 
\E_{x \sim U(\{-1,1\}^n)}[w \cdot x^{\otimes k}]^2\,.
\]
Note that $w \in \reals^{\binom{n}{k}}$ can be written as $w = (w_\emptyset, w_{-\emptyset})$ where $w_{\emptyset}$  corresponds to the constant term of the Fourier expansion and $w_{-\emptyset} = (w_S)_{\emptyset \neq S \subseteq [n] : |S| \leq k}$ is the vector of the remaining coordinates. The Fourier expansion implies that
\[
\var_{x \sim U(\{-1,1\}^n)}[w \cdot x^{\otimes k}] = \| w_{-\emptyset} \|_2^2\,,
\]
which yields the desired equality for the variance.
\end{proof}

\subsubsection{Maximum Cut}
Let us consider a graph with $n$ nodes and weighted adjacency matrix $A$ with non-negative weights. 
Maximum cut is naturally associated with the Ising model and, intuitively, our approach does not yield an efficient algorithm for solving Max-Cut since we cannot efficiently sample from the Ising model in general. To provide some further intuition, consider a single-parameter Ising model for $G = (V,E)$ with Hamiltonian $H_G(x) = \sum_{(i,j) \in E} \frac{1+x_i x_j}{2}$. Then the partition function is equal to $Z_G(\beta) = \sum_{x \in \{-1,1\}^{V}} \exp(\beta H_G(x))$. Note that when $\beta > 0$, the Gibbs measure favours configurations with alligned spins (ferromagnetic case) and when $\beta < 0$, the measure favours configurations with opposite spins (anti-ferromagnetic case). The antiferromagnetic Ising model appears to be more challenging. According to physicists the main reason is that its Boltzmann distribution is prone to a complicated type of long-range correlation known as ‘replica symmetry breaking’ \cite{coja2022ising}. From the TCS viewpoint, observe that as $\beta$ goes to $-\infty$, the mass of the Gibbs distribution shifts to spin configurations with more edges joining vertices with opposite spins and concentrates on the maximum cuts of the graph. Hence, being able to efficiently approximate the log-partition function for general Ising models, would lead to solving the Max-Cut problem.


\begin{theorem}[Max-Cut has a Compressed and Efficiently Optimizable Solution Generator]
\label{theorem:maxcut}
Consider a prior over Max-Cut instances with $n$ nodes. For any $\eps > 0$,
there exists a solution generator $\calP = \{p(w) : w \in \calW \}$ such that $\calP$ is complete, compressed
with description $\poly(n) \polylog(1/\eps)$
and $\calL + \lambda R : \calW \mapsto \reals$ is efficiently optimizable via projected stochastic gradient descent in $\poly(n, 1/\eps)$ steps for some $\lambda > 0$.
\end{theorem}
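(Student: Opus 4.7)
The plan is to instantiate the general result \Cref{theorem:main} for the Max-Cut problem, so the entire proof reduces to exhibiting suitable feature mappings and verifying the four items of \Cref{assumption:1} with $D_S, D_\calI, C, n_X, n_Z = \poly(n)$ and $\alpha = 1/\poly(n)$.

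First, I would specify the mappings. For a graph $G$ with Laplacian $L_G$, set $\psi_\calI(G) = -(L_G)^\flat/4 \in \R^{n^2}$; for a solution $s \in \{\pm 1\}^n$, set $\psi_S(s) = (ss^\top)^\flat \in \R^{n^2}$; and let $M = I_{n^2}$. The bilinear cost identity $\psi_\calI(G)^\top M \psi_S(s) = -s^\top L_G s / 4$ reduces to the trace identity $\tr(A B^\top) = \sum_{ij} A_{ij} B_{ij}$, giving Item 2. The description size of $\calI$ is $O(n^2)$, the feature dimensions $n_X = n_Z = n^2$ are polynomial in $[\calI]$, and for unweighted graphs on $n$ nodes $\|\psi_S(s)\|_2 = n$, $\|\psi_\calI(G)\|_2 \leq O(n^{3/2})$ and $\|M\|_F = n$, which settles Items 1 and 4.

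The core of the argument is Item 3 (variance preservation). For $s$ uniform on $\{\pm 1\}^n$ and any $V \in \R^{n \times n}$,
\[
V \cdot (ss^\top)^\flat \;=\; \sum_i V_{ii} \;+\; \sum_{i<j} (V_{ij} + V_{ji})\, \chi_{\{i,j\}}(s),
\]
where $\chi_T(s) = \prod_{i \in T} s_i$ are the Walsh--Hadamard characters. By orthonormality of the characters (equivalently, by \Cref{lemma:var-lower-bound-k} applied with $k=2$) the variance equals $\sum_{i<j} (V_{ij} + V_{ji})^2$, which vanishes on diagonal and antisymmetric directions. The fix, which is the main obstacle to handle carefully, is to restrict the parameter space $\calW$ to the symmetric hollow subspace of $\R^{n \times n}$; on that subspace the variance evaluates to $2\|V\|_F^2$, so Item 3 holds with $\alpha = 2$. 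This restriction is consistent with the rest of the framework because $L_G$ is symmetric, the diagonal of $ss^\top$ is always the all-ones vector, and so the target matrix $-M/\lambda$ of \Cref{prop:complete} can be taken inside the restricted $\calW$ without changing the distribution generated over cut values.

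With \Cref{assumption:1} verified, \Cref{prop:quasar} yields a $1/\poly(n,1/\eps)$-quasar-convex regularized objective $\calL_\lambda$ on the ball $\calW$ of radius $O(\|M\|_F/\lambda) = \poly(n,1/\eps)$. \Cref{prop:convergence} then shows that projected SGD converges in $T = \poly(n, 1/\eps)$ iterations to some $\wh W \in \calW$ with $\calL(\wh W) \leq \opt + \eps$, and the bit complexity of $\calW$ is $\poly(n)\polylog(1/\eps)$ by \Cref{prop:compression}. The resulting family $\calP$ is thus simultaneously complete, compressed, and efficiently optimizable for Max-Cut, which is precisely the statement of the theorem.
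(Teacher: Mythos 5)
Your proposal follows the same route as the paper: exhibit the feature maps $\psi_S(s)=(ss^\top)^\flat$, $\psi_\calI(G)\propto -(L_G)^\flat$, $M=I$, verify \Cref{assumption:1}, and invoke the general machinery (\Cref{prop:complete}, \Cref{prop:compression}, \Cref{prop:quasar}, \Cref{prop:convergence}). The one place you go beyond the paper is Item 3: the paper's proof simply cites \Cref{lemma:var-lower-bound-k} with $k=2$ and waves at ``$c_\emptyset=0$ without loss of generality,'' whereas you correctly observe that the variance $\sum_{i<j}(V_{ij}+V_{ji})^2$ also degenerates on antisymmetric directions, and you repair this by restricting $\calW$ to the symmetric hollow subspace (where $\alpha=2$), justifying that this loses nothing since the diagonal and antisymmetric parts of the parameter do not affect the induced distribution. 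This is exactly the subspace restriction the paper only makes explicit later (in \Cref{rem:variance}, for MWBM and TSP), so your treatment is the same argument carried out more carefully rather than a different proof.
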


\begin{proof}
[Proof of \Cref{theorem:maxcut}]
It suffices to show that Max-Cut satisfies \Cref{assumption:1}.
Consider an input graph $G$ with $n$ nodes and Laplacian matrix $L_G$.
Then
\[
\texttt{MAXCUT} = 
\frac{1}{4} \max_{s \in \{-1,1\}^n} s^\top L_G s
=
\frac{1}{4} \min_{s \in \{-1,1\}^n} -s^\top L_G s\,.
\]
We show that there exist feature mappings so that 
the cost of every solution $s$ under any instance/graph $G$ is a bilinear function of the feature vectors (cf. Item 2 of \Cref{assumption:1}).
We consider the correlation-based feature mapping
$\psi_S(s) = (s s^\top)^{\flat} \in \R^{n^2}$, where by $(\cdot)^\flat$ we denote the vectorization/flattening operation
and the negative Laplacian for the instance (graph),
$\psi_\calI(G) = (-L_G)^{\flat} \in \R^{n^2}$.  
Then simply setting the matrix $M $ to be 
the identity $I \in \R^{n^2\times n^2} $ the cost of any
solution $s$ can be expressed as the bilinear
function $ \psi_\calI(G)^\top M \psi_S(s) =
(-L_G^{\flat})^\top (s s^T)^{\flat} = 
-s^\top L_G s$.
We observe that (for unweighted graphs) with 
$n$ nodes the bit-complexity of the family of all instances $\mathcal{I}$ is roughly $O(n^2)$, and therefore the dimensions of the $\psi_S, \psi_\calI$ feature mappings are clearly polynomial in the bit-complexity of $\cal{I}$. Moreover, considering unweighted graphs, it holds $\|\psi_\calI(G)\|_2, \|\psi_S(s)\|_2, \|M\|_\F \leq \poly(n) $.
Therefore, the constants $D_S, D_{\mathcal{I}}, C$ are polynomial
in the bit-complexity of the instance family.  

It remains to show that our solution feature mapping satisfy the 
variance preservation assumption.
For any $v$, we have that $\var_{s \sim U(S)}[v \cdot \psi_S(s)] =
\var_{s \sim U(\{-1,1\}^n)}[v \cdot (s s^\top)^\flat] =
\Omega(\|v\|_2^2)$, using \Cref{lemma:var-lower-bound-k} with $k=2$, since $c_\emptyset = 0$ with loss of generality.
\end{proof}


\subsubsection{Minimum Cut/Maximum Flow}
Let us again consider a graph with $n$ nodes and Laplacian matrix $L_G$. 
It is known that the minimum cut problem is solvable in polynomial time when all the weights are positive. From the discussion of the maximum cut case, we can intuitively relate minimum cut with positive weights to the ferromagnetic Ising setting \cite{d1997optimal}.
We remark that we can consider
the ferromagnetic parameter space $\calW_{\text{fer}} =  \reals^{\binom{n}{2}}_{\geq 0}$ and get the variance lower bound from \Cref{lemma:var-lower-bound-k}. We constraint projected SGD in $\calW_{\text{fer}}$. This means that during any step of SGD our algorithm has to sample from a mixture of ferromagnetic models with known mixture weights.
The state of the art approximate sampling algorithm from ferromagnetic Ising models
achieves the following performance, improving on prior work  \cite{jerrum1993polynomial,liu2019ising,chen2022spectral,chen2019fast}.
\begin{proposition}
[Theorem 1.1 of \cite{chen2022near}]
\label{prop:sample}
Let $\delta_\beta, \delta_\lambda \in (0,1)$ be constants and $\mu$ be the Gibbs distribution of the ferromagnetic Ising model 
specified by graph $G = (V,E), |V| = n, |E| = m$, parameters $\beta \in [1+\delta_\beta, +\infty)^m$
and external field $\lambda \in [0,1-\delta_\lambda]^n$. There exists an algorithm that samples $X$ satisfying 
$\dtv(X, \mu) \leq \eps$ for any given parameter $\eps \in (0,1)$ within running time
\[
m ~ \left ( \frac{\log n}{\eps} \right)^{O_{\delta_\beta, \delta_\lambda}(1)}\,.
\]
\end{proposition}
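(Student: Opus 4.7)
The plan is to leverage the ferromagnetic structure together with the parameter restrictions ($\beta$ bounded away from $1$ from above and $\lambda$ bounded away from $1$ from below) to establish rapid mixing of a carefully chosen Markov chain, then implement a single step in amortized sublinear time to get the near-linear dependence on $m$. I would work in the framework of spectral independence developed by Anari--Liu--Oveis~Gharan and Chen--Liu--Vigoda, since it is essentially tailor-made for extracting modified log-Sobolev inequalities out of correlation decay bounds, and ferromagnetic Ising is the cleanest setting in which such bounds can be proved.

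First, I would establish uniform spectral independence for the Gibbs measure $\mu$ with parameters in the prescribed range. Concretely, for every pinning $\sigma$ of a subset of vertices, I would bound the spectral radius of the pairwise influence matrix $\Psi^\sigma$ by a constant $C(\delta_\beta,\delta_\lambda)$ independent of $n$. The ferromagnetic structure is crucial: the Fortuin--Kasteleyn random-cluster representation linearizes two-point correlations as connection probabilities, and Griffiths/FKG inequalities let me monotonically couple conditional measures. Combined with the Lee--Yang circle theorem (which places the zeros of the partition function on the unit circle in the ferromagnetic regime), one obtains quantitative exponential correlation decay whenever $\beta$ stays bounded away from the critical interaction and the field is nontrivial; here both conditions are exactly encoded by $\delta_\beta,\delta_\lambda > 0$.

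Next, I would feed this spectral independence bound into the standard machinery to conclude that Glauber dynamics mixes in $O(n \log n)$ steps up to constants depending on $\delta_\beta,\delta_\lambda$. This step is now essentially a black box: entropic independence plus constant marginals (which ferromagnetism again provides via stochastic domination by the independent-spin distribution with appropriate bias) yields a modified log-Sobolev inequality, and hence an $O(n \log(n/\varepsilon))$-step mixing bound in KL divergence, converted to total variation via Pinsker. The exponent in the $\log n / \varepsilon$ factor is where the $O_{\delta_\beta,\delta_\lambda}(1)$ in the statement arises.

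Finally, to achieve the near-linear $m$ dependence, I would implement each Glauber update in amortized $O(1)$ time using a sparse data structure that maintains the local field at each vertex, updating only the neighbors of the resampled vertex at cost proportional to its degree; over a full sweep this sums to $O(m)$, and the total cost becomes $m \cdot (\log n / \varepsilon)^{O_{\delta_\beta,\delta_\lambda}(1)}$ as claimed. The hard part is step one: proving uniform spectral independence with constants that remain finite as $\beta \downarrow 1$ or $\lambda \uparrow 1$, but degrade only polynomially in $1/\delta_\beta,1/\delta_\lambda$. This requires a careful quantitative form of the Lee--Yang/random-cluster argument, tracking how the smallest gap between the partition function and its zeros scales with $\delta_\beta$ and $\delta_\lambda$, and is where the real work lies; the rest of the proof is essentially assembly of existing tools.
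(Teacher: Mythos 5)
This statement is not proved in the paper at all: it is imported verbatim as Theorem~1.1 of the cited work \cite{chen2022near}, so the only meaningful comparison is between your plan and the argument in that reference. Your plan has a fatal gap at its first and central step. You propose to prove a spectral independence bound $C(\delta_\beta,\delta_\lambda)$ that is uniform over the entire parameter range, and then conclude $O(n\log n)$ mixing of single-site Glauber dynamics. But the hypothesis $\beta \in [1+\delta_\beta,+\infty)^m$ only bounds the interaction away from the \emph{trivial} value $1$; it places no upper bound, so the regime includes arbitrarily strong ferromagnetic coupling, far below any critical temperature. Likewise $\lambda \le 1-\delta_\lambda$ only guarantees a nontrivial consistent field, which may be weak. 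In that regime single-site Glauber dynamics is provably torpid on standard examples (e.g.\ mean-field or random regular graphs at low temperature with a weak field exhibit metastability: the minority phase is a local free-energy minimum and escape requires exponential time). Since constant spectral independence would imply rapid mixing of Glauber dynamics, no such uniform bound can exist, and your appeal to ``$\beta$ bounded away from the critical interaction'' misreads the hypothesis --- it is bounded away from $1$, not from criticality. The same issue defeats your claim of constant marginal bounds via stochastic domination: at low temperature the marginals degenerate.

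This is precisely why the cited result is nontrivial. The algorithm of \cite{chen2022near} does not run Glauber dynamics on the spin system in this regime; it is built on the \emph{field dynamics} of Chen--Feng--Yin--Zhang together with random-cluster/subgraph-world representations, which exploit ferromagnetism in a way that circumvents the metastability obstruction that blocks any single-site dynamics. Your final step (amortized $O(1)$ implementation of single-site updates to get the near-linear dependence on $m$) is fine as far as it goes, but it is attached to a Markov chain that does not mix in the required regime. If you want to reconstruct the theorem rather than cite it, the argument has to be reorganized around a global dynamics on the external field (or an equivalent random-cluster sampler), not around spectral independence for the spin Glauber chain.
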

This algorithm can handle general instances and it only takes
a near-linear running time when parameters are bounded away from the all-ones vector. Our goal is to sample from a mixture of two such ferromagnetic Ising models which can be done efficiently.
For simplicity, we next restrict ourselves to the unweighted case.


\begin{theorem}[Min-Cut has a Compressed, Efficiently Optimizable and Samplable Solution Generator]
\label{theorem:maxflow}
Consider a prior over Min-Cut instances with $n$ nodes. For any $\eps > 0$,
there exists a solution generator $\calP = \{p(w) : w \in \calW \}$ such that $\calP$ is complete, compressed
with description $\poly(n) \polylog(1/\eps)$, $\calL + \lambda R : \calW \mapsto \reals$ is efficiently optimizable via projected stochastic gradient descent in $\poly(n, 1/\eps)$ steps for some $\lambda > 0$ and efficiently samplable in $\poly(n, 1/\eps)$ steps.
\end{theorem}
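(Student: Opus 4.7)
The plan is to specialize \Cref{theorem:main} to Min-Cut exactly as in the Max-Cut proof (\Cref{theorem:maxcut}), and then equip the resulting optimizer with an efficient ferromagnetic Ising sampler. For the reduction to \Cref{assumption:1}, I take $\psi_S(s) = (ss^\top)^{\flat}$, $\psi_{\calI}(G) = (L_G)^{\flat}$, and $M = I$, so that the bilinear cost recovers $\psi_{\calI}(G)^\top M \psi_S(s) = s^\top L_G s = 4\cdot \mathrm{cut}(s)$, which is what we want to minimize. Boundedness of the feature spaces and the bilinear cost oracle hold for exactly the same reasons as in Max-Cut, and the variance-preservation condition (Item 3 of \Cref{assumption:1}) follows from \Cref{lemma:var-lower-bound-k} with $k=2$ after centering the features. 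Applying \Cref{theorem:main} thus produces a complete, compressed, and efficiently optimizable family $\calP = \{p(W) : W \in \calW\}$ with near-optimal parameter $\overline{W} = -M/\lambda = -I/\lambda$.

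Next, I restrict the parameter space to the ferromagnetic cone $\calW_{\mathrm{fer}}$, i.e., those $W$ whose induced pairwise couplings $\theta_{ij}(W; z)$ are non-negative on every instance $z$ in the support of $\calR$. This set is a convex cone (an intersection of half-spaces), and crucially $\overline{W}$ lies inside it: for unweighted $G$, off-diagonal entries of the Laplacian satisfy $(L_G)_{ij} = -A_{ij} \leq 0$, so the effective couplings at the optimum are $A_{ij}/\lambda \geq 0$. Because the quasar-convexity of $\calL_\lambda$ from \Cref{prop:quasar} is a pointwise condition relative to $\overline{W}$, it continues to hold on any convex subdomain containing $\overline{W}$; consequently, projected SGD onto $\calW_{\mathrm{fer}}$ inherits the $\poly(n, 1/\eps)$ convergence guarantee of \Cref{prop:convergence}, with no loss in the quasar-convexity parameter.

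For efficient sampling, observe that every iterate $W \in \calW_{\mathrm{fer}}$ gives rise to a mixture $p(W) = (1-\beta^\star)\phi(W) + \beta^\star \phi(\rho^\star W)$ of two ferromagnetic Ising distributions with zero external field. By \Cref{prop:sample} (or the classical Jerrum--Sinclair FPRAS for ferromagnetic Ising), each component admits a $\delta$-approximate sampler running in time $\poly(n, 1/\delta)$, and drawing from the mixture requires only an additional $\beta^\star$-biased coin flip. These samples feed the stochastic gradient estimator $\widehat{\nabla}\calL_\lambda(W)$ that drives projected SGD.

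The main technical point is propagating the sampler's TV-error through the quasar-convex convergence analysis: one must verify that a polynomial number of approximate samples, each at polynomially small TV-error, yields a gradient estimate whose bias and variance are small enough for \Cref{prop:convergence} to still deliver $\eps$-suboptimality. This is a standard coupling argument, since $\|\widehat{\nabla}\calL_\lambda - \nabla \calL_\lambda\|_\F$ is controlled by the product of the TV-error and the $\poly(n)$ feature-norm bounds from \Cref{assumption:1}; choosing the sampler precision as a sufficiently small polynomial in $n$ and $1/\eps$ absorbs the bias into the variance term $V$ of \Cref{prop:convergence}, yielding overall $\poly(n, 1/\eps)$ optimization and sampling complexity and establishing the theorem.
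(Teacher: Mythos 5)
Your proposal follows essentially the same route as the paper: the same feature mappings as in the Max-Cut case with $\psi_\calI(G) = (L_G)^\flat$, the restriction of projected SGD to the ferromagnetic parameter cone so that every iterate corresponds to a mixture of ferromagnetic Ising models, sampling via \Cref{prop:sample}, and an appeal to the robustness of SGD to the approximate (TV-close) stochastic gradient oracle. Your write-up is in fact slightly more explicit than the paper's on two points it leaves implicit --- verifying that $\overline{W}=-M/\lambda$ lies in the ferromagnetic cone and tracing the sampler's TV-error through the convergence bound --- but these are elaborations of the same argument, not a different one.
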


\begin{proof}
We have that
\[
\texttt{MINCUT}
 = \frac{1}{4} \min_{x \in \{-1,1\}^n} x^\top L_G x\,.
\]
The analysis (i.e., the selection of the feature mappings) is similar to the one of \Cref{theorem:maxcut} with the sole difference that the parameter space is constrained to be $\calW_{\text{fer}}$ and $\psi_\calI(G) = (L_G)^\flat$. We note that \Cref{prop:sample} is applicable during the optimization steps. Having an efficient approximate sampler for solutions of Min-Cut, it holds that the runtime of the projected SGD algorithm
is $\poly(n,1/\eps)$. We note that during the execution of the algorithm we do not have access to perfectly unbiased samples from mixture of ferromagnetic Ising models. However, we remark that SGD is robust to that inaccuracy in the stochastic oracle. For further details, we refer e.g., to \cite{d2008smooth}).
\end{proof}

\subsubsection{Max-$k$-CSPs} 
In this problem, we are given a set of variables $\{x_u\}_{u \in \calU}$ where $|\calU| = n$ and a set of Boolean predicates $P$. Each variable $x_u$ takes values in $\{-1,1\}$. Each predicate depends on at most $k$ variables.
For instance, Max-Cut is a Max-2-CSP. Our goal is to assign values to variables so as to maximize the number of satisfied constraints (i.e., predicates equal to 1). Let us fix a predicate $h \in P$, i.e., a Boolean function $h : \{-1,1\}^n \to \{0,1\}$ which is a $k$-junta. Using standard Fourier analysis, the number of satisfied predicates for the assignment $x \in \{-1,1\}^n$ is
\[
F(x) = \sum_{j = 1}^{|P|} \sum_{S \subseteq [n], |S| \leq k} \wh{h}_j(S) \prod_{u \in S} x_u \,,
\]
where $\wh{h}_j(S)$ is the Fourier coefficient of the predicate $h_j$ at $S$.

\begin{theorem}[Max-$k$-CSPs have a Compressed and Efficiently Optimizable Solution Generator]
\label{theorem:k-csp}
Consider a prior over Max-$k$-CSP instances with $n$ variables, where $k \in \mathbb N$ can be considered constant compared to $n$. 
For any $\eps > 0$,
there exists a solution generator $\calP = \{p(w) : w \in \calW \}$ such that $\calP$ is complete, compressed
with description $O(n^k) \polylog(1/\eps)$
and $\calL + \lambda R : \calW \mapsto \reals$ is efficiently optimizable via projected stochastic gradient descent in $\poly(n^k, 1/\eps)$ steps for some $\lambda > 0$.
\end{theorem}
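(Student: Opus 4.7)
The plan is to verify that Max-$k$-CSPs satisfy \Cref{assumption:1} and then invoke \Cref{theorem:main}, following the template of \Cref{theorem:maxcut}. The key design choice is a feature map that linearizes the Fourier expansion of the objective $F$. For the solution space I would take $\psi_S : \{-1,1\}^n \to \reals^{n_X}$ indexed by subsets $S \subseteq [n]$ with $|S| \leq k$ and coordinates $\psi_S(x)_S = \prod_{u \in S} x_u$; this gives $n_X = \sum_{k'=0}^k \binom{n}{k'} = O(n^k)$. For an instance $I = (h_1, \ldots, h_{|P|})$ I would take $\psi_\calI(I) \in \reals^{n_Z}$ with $n_Z = n_X$ and $\psi_\calI(I)_S = -\sum_{j=1}^{|P|} \wh{h}_j(S)$, where the sign converts the max-CSP into a minimization. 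With $M$ the $n_X \times n_X$ identity, the bilinear form $\psi_\calI(I)^\top M \psi_S(x)$ reproduces $-F(x) = L(x; I)$ exactly, giving Item 2 of \Cref{assumption:1}.

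Items 1 and 4 are routine: since every Fourier coefficient of a Boolean function lies in $[-1,1]$, we have $\|\psi_\calI(I)\|_2 \leq |P|\sqrt{n_X}$, while $\|\psi_S(x)\|_2 = \sqrt{n_X} = O(n^{k/2})$ and $\|M\|_F = \sqrt{n_X}$, all of which are $\poly([\calI])$ for constant $k$. For Item 3 (variance preservation) I would invoke \Cref{lemma:var-lower-bound-k} to obtain $\var_{x \sim U(\{-1,1\}^n)}[v \cdot \psi_S(x)] = \sum_{\emptyset \neq S \subseteq [n],\, |S| \leq k} v_S^2 = \|v\|_2^2 - v_\emptyset^2$. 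The $\emptyset$-coordinate is the one subtle point: since $\psi_S(x)_\emptyset \equiv 1$ is a constant function of $x$, shifting the parameter matrix $W$ by any matrix whose only non-zero column is the $\emptyset$-column changes the score $z^\top W \psi_S(x)$ only by an $x$-independent constant and therefore leaves the induced solution distribution (and the loss) invariant. It is thus without loss of generality to restrict $\calW$ to matrices whose $\emptyset$-column vanishes; completeness is preserved because the target $\overline{W} = -M/\lambda$ admits such a representative, and on this restricted $\calW$ we obtain $\alpha = 1$. This is precisely the step that the Max-Cut proof abbreviates as ``$c_\emptyset = 0$ without loss of generality''.

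Once \Cref{assumption:1} holds with the parameters computed above, \Cref{theorem:main} directly yields the claimed complete, compressed, and efficiently optimizable solution generator, with the bit-complexity and iteration count following from \Cref{prop:compression} and \Cref{prop:convergence} as $\poly(n^k) \polylog(1/\eps)$ and $\poly(n^k, 1/\eps)$ respectively. The main obstacle that this argument exposes, and the reason it does not extend to general Boolean satisfiability, is that the polynomial bound on $n_X$ relies crucially on capping the locality $k$ at a constant: allowing degree up to $n$ would force $n_X = 2^n$ and violate Item 4 of \Cref{assumption:1}. This is the same obstruction highlighted in the discussion following \Cref{theorem:main}.
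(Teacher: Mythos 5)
Your proposal is correct and follows essentially the same route as the paper's proof of \Cref{theorem:k-csp}: the same Fourier-expansion feature maps (the paper flattens $s^{\otimes k}$ into $\reals^{n^k}$ where you use the equivalent subset-indexed monomial basis of dimension $\sum_{k'\le k}\binom{n}{k'}$), the same choice of $M$ as the identity, the same norm bounds for Items 1 and 4, and the same appeal to \Cref{lemma:var-lower-bound-k} for Item 3 before invoking \Cref{theorem:main}. Your explicit justification of why the $\emptyset$-coordinate can be zeroed out without loss of generality (the constant feature only shifts the score by an $x$-independent amount, leaving the induced distribution unchanged) is a more careful rendering of the step the paper compresses into ``assuming that $v_\emptyset$ is $0$ without loss of generality.''
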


\begin{proof}
Any instance of Max-$k$-CSP is a list of predicates (i.e., Boolean functions) and our goal is to maximize the number of satisfied predicated with a single assignment $s \in \{-1,1\}^n$. We show that there exist feature mappings so that the cost of every solution $s$ under any instance/predicates list $P$ is a bilinear function of the feature vectors (cf. Item 2 of \Cref{assumption:1}).
We consider the order $k$ correlation-based feature mappings $\psi_S(s) = (s^{\otimes k})^\flat \in \reals^{n^k}$, where by $(\cdot)^\flat$ we denote the flattening operation of the order $k$ tensor,
and, $\psi_\calI(P) = \psi_\calI(h_1,\ldots,h_{|P|}) = -\sum_{j =1}^{|P|}( (\wh{h}_j(S))_{S \subseteq [n], |S|\leq k} )^\top \in \reals^{n^k} $, where $(\wh{h}_j(S))_{S \subseteq [n], |S|\leq k}$ is a vector of size $n^k$ with the Fourier coeffients of the $j$-th predicate. We take $\psi_\calI$ being the coordinate-wise sum of these coefficients. The setting the matrix $M$ to be the identity matrix $I \in \reals^{n^k \times n^k}$, we get that the cost of any solution $s$ can be expressed as the bilinear function $\psi_\calI(P)^\top M \psi_S(s) = -\sum_{j = 1}^{|P|} \sum_{S \subseteq [n], |S| \leq k} \wh{h}_j(S) \prod_{u \in S} x_u$. 
For any $h : \{-1,1\}^n \to \{0,1\}$, we get that the description size of any $\wh{h}(S)$ is $\poly(n,k)$
and so the dimensions of the $\psi_S, \psi_\calI$ feature mappings are polynomial in the description size of $\calI$.
Moreover, we get that $\|\psi_\calI(P)\|, \|\psi_S(s) \|, \|M\| \leq \poly(n^k)$. Hence, the constants $D_S, D_\calI, C$ are polynomial in the description size of the instance family.
Finally, we have that for any $v$, $\var_{s \sim U(S)}[v \cdot \psi_S(s)] = \var_{s \sim U(\{-1,1\}^n)}[v \cdot s^{\otimes k}] = \Omega(\|v\|_2^2)$, using \Cref{lemma:var-lower-bound-k}, assuming that $v_\emptyset$ is 0 without loss of generality. This implies the result.
\end{proof}

\subsection{Bipartite Matching and TSP}

\subsubsection{Maximum Weight Bipartite Matching}
In Maximum Weight Bipartite Matching (MWBM) there exists a complete bipartite graph $(A,B)$ with $|A| = |B| = n$ (the assumptions that the graph is complete and balanced is without loss of generality)
with weight matrix $W$ where $W(i,j)$ indicates the value of the edge $(i,j), i \in A, j \in B$ and the goal is to match the vertices in order to maximize the value. Hence the goal is to maximize $L(\Pi) = W \cdot \Pi$ over all permutation matrices.
By the structure of the problem some maximum weight
matching is a perfect matching. Furthermore, by negating the weights of the edges we can state the problem as the following minimization problem: given a bipartite graph $(A, B)$ and weight matrix $W \in (\reals \cup \{\infty\})^{n \times n}$, find a perfect
matching $M$ with minimum weight.
One of the fundamental
results in combinatorial optimization is the polynomial-time
blossom algorithm for computing minimum-weight perfect
matchings by \cite{edmonds1965maximum}. 

We begin this section by showing a variance lower bound under the uniform distribution over the permutation group. 
\begin{lemma}[Variance Lower Bound]
\label{lemma:perm-matrix}
Let $U(\mathbb S_n)$ be the uniform distribution over $n\times n$ permutation matrices.
For any matrix $W \in \reals^{n \times n}$, 
with $\sum_{i} W_{ij} = 0$ and 
$\sum_{j} W_{ij} = 0$ we have
\[ 
\var_{\Pi \sim U(\mathbb S_n)}[W \cdot \Pi] = 
\frac{\| W \|_\F^2}{n-1} \,.
    \] 
\end{lemma}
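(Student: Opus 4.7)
The plan is to write $\Pi$ as the permutation matrix associated with a uniformly random permutation $\sigma \in \mathbb{S}_n$, so that $W \cdot \Pi = \sum_i W_{i,\sigma(i)}$. First I would check the mean: by linearity, $\E[W \cdot \Pi] = \sum_i \frac{1}{n}\sum_j W_{ij}$, which vanishes because every row sum of $W$ is zero. Hence the variance coincides with the second moment, and the task reduces to evaluating $\E\bigl[(\sum_i W_{i,\sigma(i)})^2\bigr]$.

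Next I would split the double sum into diagonal ($i=k$) and off-diagonal ($i \neq k$) contributions. The diagonal part gives $\sum_i \E[W_{i,\sigma(i)}^2] = \frac{1}{n}\sum_{i,j} W_{ij}^2 = \frac{1}{n}\|W\|_\F^2$, using that $\sigma(i)$ is uniform on $[n]$. For $i \neq k$, the pair $(\sigma(i), \sigma(k))$ is uniform on ordered pairs of distinct indices, so
\[
\E[W_{i,\sigma(i)} W_{k,\sigma(k)}] = \frac{1}{n(n-1)} \sum_{j \neq l} W_{ij} W_{kl}.
\]
The key algebraic step is to rewrite $\sum_{j\neq l} W_{ij}W_{kl} = (\sum_j W_{ij})(\sum_l W_{kl}) - \sum_j W_{ij}W_{kj} = -\sum_j W_{ij}W_{kj}$, again invoking zero row sums.

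Summing over $i \neq k$ then requires evaluating $\sum_{i\neq k, j} W_{ij}W_{kj}$. I would use the column-sum condition: $\sum_{i,k,j} W_{ij}W_{kj} = \sum_j(\sum_i W_{ij})^2 = 0$, so subtracting the diagonal contribution gives $\sum_{i\neq k,j} W_{ij}W_{kj} = -\|W\|_\F^2$. Plugging back yields $\sum_{i\neq k}\E[W_{i,\sigma(i)} W_{k,\sigma(k)}] = \frac{\|W\|_\F^2}{n(n-1)}$. Adding the diagonal contribution gives
\[
\var_{\Pi \sim U(\mathbb{S}_n)}[W\cdot\Pi] = \frac{\|W\|_\F^2}{n} + \frac{\|W\|_\F^2}{n(n-1)} = \frac{\|W\|_\F^2}{n-1},
\]
as claimed. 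The calculation is essentially mechanical; the only ``trick'' is recognizing that both the row-sum and column-sum hypotheses must be used, the first to kill the mean (and to peel off one index in the off-diagonal cross term), and the second to evaluate the remaining quadratic sum $\sum_j(\sum_i W_{ij})^2$. I expect no real obstacle beyond bookkeeping; the zero marginal assumption is exactly what is needed because without it the covariance structure of a uniform permutation matrix would contribute additional terms from the row and column means of $W$.
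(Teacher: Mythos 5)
Your proof is correct and follows essentially the same route as the paper's: both compute the first and second moments of $W \cdot \Pi$ under the uniform permutation, split the second moment into diagonal and cross terms, and use the zero row-sum condition followed by the zero column-sum condition to collapse the cross term to $\|W\|_\F^2/(n(n-1))$. The only difference is notational (you index by $\sigma$ rather than by entries of $\Pi$), so there is nothing to add.
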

\begin{proof}
We have that $\E_{\Pi \sim U(\mathbb S_n)}[\Pi_{ij}] = 1/n$
and 
    $\E_{\Pi \sim U(\mathbb S_n)}[\Pi_{ij} \Pi_{ab}] = \frac{1\{i \neq a \land j \neq b\}}{n(n-1)} + \frac{1\{i=a, j=b\}}{n}$.
We have 
\begin{align*}
\var_{\Pi \sim U(\mathbb S_n)}[W \cdot \Pi] 
&=  \E_{\Pi \sim U(\mathbb S_n)}[(W \cdot \Pi)^2] 
-  \left(\E_{\Pi \sim U(\mathbb S_n)}[W \cdot \Pi] \right)^2
\\
&=  \sum_{i,j,a,b} W_{ab} W_{ij} 
\left(\frac{\1\{i\neq a, j \neq b\}}{n (n-1)} + \frac{\1\{i = a, j =b\}}{n}
\right)
-  \left(\sum_{i,j} \frac{W_{ij}}{n} \right)^2 \\
&=  \frac{1}{n} \sum_{i,j} W_{ij}^2  
+  \sum_{i,j,a,b} W_{ij} W_{ab}\frac{\1\{i\neq a, j \neq b\}}{n (n-1)} 
\\
&=  \frac{\|W\|_\F^2}{n} 
+  \sum_{i,j,a,b} W_{ij} W_{ab}\frac{\1\{i\neq a, j \neq b\}}{n (n-1)} 
\,,
\end{align*}
where to obtain the third equality we used our assumption that 
$\sum_{ij} W_{ij} = 0$.
We observe that, by our assumption that $\sum_{b} W_{ab} = 0$ for all $a$
it holds $\sum_{b \neq j} W_{ab} = -W_{aj}$ and therefore, we have
\[
 \sum_{b}  W_{ab}\1\{i\neq a, j \neq b\}
 = 
 \1\{i\neq a\} \sum_{b}  W_{ab} \1\{j \neq b\}
 =  -\1\{i\neq a\}  W_{aj}
 \,.
 \]
Similarly, using the fact that $\sum_{a\neq i} W_{aj}  = - W_{ij}$ we obtain that 
\[
\sum_{ab} W_{ab} \1\{i \neq a, j\neq b\}
= 
\sum_{a} -W_{aj} \1\{i \neq a\}
= 
W_{ij}  \,.
\]
Therefore, using the above identity, we have that 
\[
\sum_{i,j,a,b} W_{ij} W_{ab}\frac{\1\{i\neq a, j \neq b\}}{n (n-1)} 
= 
\sum_{i,j} \frac{W_{ij}^2}{n (n-1)} 
= \frac{\|W\|_\F^2}{n(n-1)} \,.
\]
Combining the above we obtain the claimed identity.
\end{proof}

\begin{remark}
We note that in MWBM the conditions $\sum_i W_{ij} = 0$ and $\sum_j W_{ij} = 0$ are without loss of generality.
\end{remark}


We next claim that there exists an efficient algorithm for (approximately) sampling such permutation matrices.

\begin{lemma}[Efficient Sampling]
\label{lemma:sampler mwbm}
    There exists an algorithm that generates approximate samples from the Gibbs distribution $p(\cdot; W)$ with parameter $W$ over the symmetric group, i.e., $p(\Pi; W) \propto \exp(W \cdot \Pi) 1\{ \Pi \in \mathbb{S}_n\}$, in $\poly(n)$ time. 
\end{lemma}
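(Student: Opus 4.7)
The plan is to reduce approximate sampling from $p(\cdot; W)$ over $\mathbb{S}_n$ to the classical problem of approximating the permanent of a non-negative matrix, and then invoke the Jerrum--Sinclair--Vigoda FPRAS together with the standard counting-to-sampling reduction.

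First, I would rewrite the density in a combinatorial form. If $\pi \in \mathbb{S}_n$ is the permutation with matrix representation $\Pi$, then $W \cdot \Pi = \mathrm{Tr}(W^\top \Pi) = \sum_{i=1}^n W_{i,\pi(i)}$, so
\[
p(\pi; W) = \frac{\prod_{i=1}^n e^{W_{i,\pi(i)}}}{\sum_{\sigma \in \mathbb{S}_n} \prod_{i=1}^n e^{W_{i,\sigma(i)}}} = \frac{\prod_{i=1}^n A_{i,\pi(i)}}{\mathrm{perm}(A)},
\]
where $A \in \reals_{>0}^{n \times n}$ has entries $A_{ij} = e^{W_{ij}}$. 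Thus sampling $\pi \sim p(\cdot; W)$ is equivalent to sampling a permutation with weight proportional to its contribution to $\mathrm{perm}(A)$.

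Next, I would invoke the Jerrum--Sinclair--Vigoda result: for any non-negative matrix $A$, there is an FPRAS computing $\mathrm{perm}(A)$ in time polynomial in $n$, $1/\eps$, and the logarithm of the dynamic range $\max_{ij}A_{ij}/\min_{ij}A_{ij}$. Because the parameter space $\calW$ coming from the compression step has radius $\poly([\calI]/\eps)$ (see \Cref{prop:compression}), the entries of $W$ are polynomially bounded, so $\log(\max A_{ij}/\min A_{ij}) = \mathrm{poly}(n)$. Then I would appeal to the standard counting-to-sampling reduction for self-reducible problems: the permanent is downward self-reducible by fixing $\pi(1)=j$ and recursing on the $(n-1)\times(n-1)$ minor $A^{(1,j)}$, and the ratio $\mathrm{perm}(A^{(1,j)})/\mathrm{perm}(A)$ is precisely $\Pr[\pi(1)=j]$ under the target distribution. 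Using the JSV FPRAS to estimate these marginal probabilities and sampling $\pi(1), \pi(2), \ldots, \pi(n)$ sequentially with the induced conditional distributions yields a $\poly(n,1/\eps)$-time sampler whose output is $\eps$-close in total variation to $p(\cdot; W)$.

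The only non-routine step is the dynamic-range bookkeeping: I need to make sure that restricting projected SGD to the compressed ball $\calW$ keeps $W_{ij}$ within a poly-size range so that JSV remains polynomial, and that the mixture sampler from \Cref{eq:mixture} can be implemented by flipping a $\beta^\star$-biased coin and invoking the single-component sampler on either $W$ or $\rho^\star W$ (both of which lie in a polynomially-bounded set). Aside from this, the construction is essentially a packaging of JSV plus self-reducibility.
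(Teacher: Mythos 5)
Your proposal is correct and follows essentially the same route as the paper's proof: identify the partition function with the permanent of $A_{ij}=e^{W_{ij}}$, invoke the Jerrum--Sinclair--Vigoda FPRAS, and convert approximate counting to approximate sampling via self-reducibility. Your explicit marginal-by-marginal sampling scheme and the dynamic-range bookkeeping (ensuring $W$ stays in the polynomially bounded ball $\calW$) are welcome elaborations of details the paper delegates to the generic counting-to-sampling equivalence, but they do not constitute a different argument.
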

\begin{proof}
This lemma essentially requires approximating the permanent of a weighted matrix, since this would imply that one has an approximation of the partition function. Essentially, our goal is to generate a random variable $X$ that is $\eps$-close in statistical distance to the probability measure
\[
p(\Pi; W) \propto \exp(W \cdot \Pi) 1\{ \Pi \in \mathbb{S}_n\}\,.
\]
Note that the partition function is
\[
Z(W) = \sum_{\Pi \in \mathbb{S}_n} e^{W \cdot \Pi} 
= \sum_{\Pi \in \mathbb{S}_n} \prod_{(i,j)} e^{W_{ij}\Pi_{ij}}
= \sum_{\sigma \in \mathbb{S}_n} \prod_{i \in [n]} A_{i, \sigma(i)}\,,
\]
where $A$ is a non-negative real matrix with entries $A_{ij} = \exp(W_{ij})$. Note that in the third equality, we used the isomorphism between permutations and permutation matrices. Hence, $Z(W)$ is exactly the permanent of the matrix $A$.
\begin{proposition}
[\cite{jerrum2004polynomial}]
There exists a fully polynomial randomized approximation
scheme for the permanent of an arbitrary $n \times n$ matrix $A$ with non-negative entries.
\end{proposition}
To conclude the proof of the lemma, we need the following standard result.
\begin{proposition}
[See \Cref{sec:sampling} and \cite{sinclair2012algorithms, jerrum2003counting}]
For
self-reducible problems, fully polynomial approximate integration and fully polynomial approximate sampling are equivalent.
\end{proposition}
This concludes the proof since weighted matchings are self-reducible (see \Cref{sec:sampling}).
\end{proof}

The above lemma establishes our goal:

\begin{theorem}[MWBM has a Compressed, Efficiently Optimizable and Samplable Solution Generator]
\label{theorem:mwbm}
Consider a prior over MWBM instances with $n$ nodes. For any $\eps > 0$,
there exists a solution generator $\calP = \{p(w) : w \in \calW \}$ such that $\calP$ is complete, compressed
with description $\poly(n) \polylog(1/\eps)$,$\calL + \lambda R : \calW \mapsto \reals$ is efficiently optimizable via projected stochastic gradient descent in $\poly(n, 1/\eps)$ steps for some $\lambda > 0$ and efficiently samplable in $\poly(n, 1/\eps)$ steps.
\end{theorem}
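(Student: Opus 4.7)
The plan is to verify that MWBM satisfies \Cref{assumption:1} with suitable feature mappings, then invoke \Cref{theorem:main} for completeness/compression/efficient-optimizability, and invoke \Cref{lemma:sampler mwbm} for efficient samplability.

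First, I would set up the feature mappings on the subspace $\mathcal{S} = \{M \in \reals^{n \times n} : M \mathbf{1} = 0, \mathbf{1}^\top M = 0\}$ of $n \times n$ matrices with zero row and column sums. Note that $\dim(\mathcal{S}) = (n-1)^2$. Let $P = I - \tfrac{1}{n} J$ be the projector onto vectors orthogonal to $\mathbf{1}$. Define $\psi_S(\Pi) = (P \Pi P)^{\flat}$ and $\psi_\calI(W) = (P W P)^{\flat}$, both viewed as vectors in a fixed orthonormal basis of $\mathcal{S}$, so $n_X = n_Z = (n-1)^2 = \poly(n)$. The passage to $\mathcal{S}$ is without loss of generality for MWBM: since every permutation matrix $\Pi$ has row and column sums exactly one, adding constants to the rows or columns of $W$ merely shifts $W \cdot \Pi$ by a fixed amount that is independent of $\Pi$, so it does not alter the optimization.

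Second, I would verify the four items of \Cref{assumption:1}. With $M$ the identity on this $(n-1)^2$-dimensional space, $\psi_\calI(W) \cdot \psi_S(\Pi) = (P W P) \cdot \Pi = W \cdot (P \Pi P) = W \cdot \Pi$ whenever $W \in \mathcal{S}$ (and otherwise differs by a problem-irrelevant shift), establishing Item~2. For Item~1, $\|\psi_S(\Pi)\|_2 \leq \|\Pi\|_{\F} = \sqrt{n}$, and $\|\psi_\calI(W)\|_2 \leq \|W\|_{\F} \leq \poly(n)$ under bounded-bit-encoding weights; $\|M\|_\F \leq n$. For Item~3 (the variance condition), take any $v$ and let $V \in \mathcal{S}$ be the corresponding matrix with $\|V\|_\F = \|v\|_2$. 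Then $v \cdot \psi_S(\Pi) = V \cdot \Pi$, and since $V$ has zero row and column sums, \Cref{lemma:perm-matrix} gives
\[
\var_{\Pi \sim U(\mathbb{S}_n)}[v \cdot \psi_S(\Pi)] = \frac{\|V\|_\F^2}{n-1} = \frac{\|v\|_2^2}{n-1},
\]
yielding $\alpha = 1/(n-1) = 1/\poly(n)$. Item~4 is immediate from the $\poly(n)$ bounds already computed.

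Third, \Cref{theorem:main} then yields a family $\calP$ of solution generators that is complete, compressed (with description $\poly(n)\,\polylog(1/\eps)$), and efficiently optimizable in $\poly(n, 1/\eps)$ iterations of projected SGD on $\calL + \lambda R$ for the $\lambda$ prescribed there. For efficient samplability, recall that by \Cref{eq:mixture} every $p(W) \in \calP$ is a two-component mixture of exponential families of the form $\phi(\Pi; W) \propto \exp(W \cdot \Pi)\, \mathbf{1}\{\Pi \in \mathbb{S}_n\}$. Sampling reduces to flipping a biased $\beta^\star$-coin and then drawing from a single such exponential family; \Cref{lemma:sampler mwbm}, which translates the Jerrum--Sinclair--Vigoda FPRAS for the permanent into an approximate sampler via self-reducibility, provides this in $\poly(n, 1/\eps)$ time. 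As in \Cref{theorem:maxflow}, projected SGD tolerates the mild total-variation error in the stochastic gradient oracle.

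The main obstacle is the variance-preservation condition: the naive feature map $\Pi \mapsto \Pi^{\flat}$ has directions of zero variance (for example, the all-ones direction and, more generally, all row/column-constant patterns), so \Cref{lemma:perm-matrix} applies only after one restricts to $\mathcal{S}$. The crux of the argument is therefore justifying this restriction as lossless for MWBM and verifying that the reduced feature space still has polynomial dimension, from which everything else follows by invoking the preceding lemmas.
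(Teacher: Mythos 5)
Your proposal is correct and follows essentially the same route as the paper: verify the bilinear-cost and boundedness conditions with permutation-matrix features, obtain the variance lower bound from \Cref{lemma:perm-matrix}, and get samplability from \Cref{lemma:sampler mwbm} via the permanent FPRAS. The only (minor) difference is that you handle the degenerate variance directions by explicitly projecting the feature maps onto the zero-row/column-sum subspace, whereas the paper keeps $\psi_S(\Pi)=\Pi^\flat$ and instead restricts the parameter space to that subspace, invoking the weakened form of Item~3 from \Cref{rem:variance} --- the two are equivalent, and your version has the small advantage of verifying \Cref{assumption:1} literally.
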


\begin{proof}
Consider an input graph $G$ with $n$ nodes and adjacency matrix $E$.
The feature vector corresponding to a matching can be 
represented as a binary matrix 
$\Pi \in \{0,1\}^{n \times n}$
with $\sum_{j} \Pi_{ij} = 1$ for all $i$ and $\sum_{i} \Pi_{ij} = 1$ for all $j$, i.e., $\Pi$ is a permutation matrix.
Then
\[
\texttt{MWBM} = 
\max_{\Pi \in \mathbb S_n} E \cdot \Pi
=
\min_{\Pi \in \mathbb S_n} -E \cdot \Pi\,.
\]
Therefore, for a candidate matching $s$, we set $\psi_S(s)$ to be the matrix $\Pi$ defined above.
Moreover, the feature vector of the  graph is the negative (flattened) adjacency matrix $-E^{\flat}$. The cost oracle is then $\cost(R;E) = -\sum_{ij}  E_{ij} M_{ij} R_{ij}$ perhaps for an
unknown weight matrix $M_{ij}$ (see \Cref{rem:partial-information-context}).
This means that the dimensions of the feature mappings $\psi_S, \psi_\calI$ are polynomial in the bit complexity of $\calI$. Moreover, we get that $\|\psi_\calI(I)\|_\F, \|\psi_S(s)\|_\F, \|M\|_F \leq \poly(n)$. We can employ \Cref{lemma:perm-matrix} to get the variance lower bound under the uniform probability distribution in the subspace induced by the matrices satisfying \Cref{lemma:perm-matrix}, i.e., the matrices of the parameter space (see \Cref{rem:variance}). Finally, (approximate) sampling from our solution generators can be done efficiently using \Cref{lemma:sampler mwbm} and hence (noisy) projected SGD will have a runtime of order $\poly(n, 1/\eps)$ (as in the case of Min-Cut). 
\end{proof}

We close this section with a remark about Item 3 of \Cref{assumption:1}.
\begin{remark}
\label{rem:variance}
We note that Item 3 of \Cref{assumption:1} can be weakened. We use our variance lower bound in order to handle inner products of the form $w \cdot x$ where $w$ will lie in the parameter space and $x$ is the featurization of a solution that lies in some space $X$. Hence it is possible that
$w$ lies in a low-dimensional subspace of $X$. For our optimization purposes, it suffices to provide variance lower bounds only in the subspace where $w$ lies into.
\end{remark}

\subsubsection{Travelling Salesman Problem}
Let us consider a weighted clique $K_n$ with $n$ vertices and weight matrix $W \in \reals^{n \times n}$.
A solution to the TSP instance $W$ is a sequence $\pi : [n] \to [n]$ of the $n$ elements indicating the TSP tour $(\pi(1), \pi(2), \ldots, \pi(n), \pi(1))$ and suffers a cost
\[
L(\pi) = \sum_{i = 1}^{n-1} W_{\pi(i), \pi(i+1)} + W_{\pi(n), \pi(1)}\,.
\]
Crucially, the allowed sequences are a proper subset of all possible permutations. For instance, the permutations with fixed points or small cycles are not allowed. In particular, the solution space of TSP corresponds to the set of cyclic permutations with no trivial cycles, i.e., containing an $n$-cycle. Clearly, the number of $n$-cycles is $(n-1)!$.
The goal is to find a tour of minimum cost. Our first goal is to write the cost objective as a linear function of the weight matrix $W$ and the feasible solutions, which correspond to cyclic permutations.
To this end, we can think of each cyclic permutation $\pi$ as a cyclic permutation matrix $\Pi \in \{0,1\}^{n \times n}$. Then, the desired linearization is given by
$L(\Pi) = W \cdot \Pi$ (for a fixed graph instance).

Our next task is to provide a Gibbs measure that generates random cyclic permutations. Let $\mathbb C_n$ be the space of $n \times n$ cyclic permutation matrices. Then we have that the tour $\Pi$ is drawn from
\[
p_W(\Pi) = \frac{\exp(W \cdot \Pi) 1\{\Pi \in \mathbb{C}_n\}}{\sum_{\Pi' \in \mathbb{C}_n} \exp(W \cdot \Pi')}\,,
\]
where $W$ is the weight matrix. The following key lemma provides guarantees for the performance of our approach to TSP.
This lemma allows us to show that the number of optimization steps is $\poly(n, 1/\eps)$.
\begin{lemma}[Variance Lower Bound]
\label{lemma:variance-tsp}
Let $U(\mathbb C_n)$ be the uniform distribution over $n\times n$ cyclic permutation matrices.
For any matrix $W \in \reals^{n \times n}$, 
with $\sum_{i} W_{ij} = 0$ and 
$\sum_{j} W_{ij} = 0$ we have
\[ 
\var_{\Pi \sim U(\mathbb C_n)}[W \cdot \Pi] \geq
\frac{\| W \|_\F^2}{(n-1)(n-2)} \,.
    \] 
\end{lemma}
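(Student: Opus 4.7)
Proof plan. The argument parallels the proof of Lemma \ref{lemma:perm-matrix} (for $U(\mathbb{S}_n)$), but requires handling the extra structural constraints imposed by restricting to the set of $n$-cycles. I proceed under the natural TSP convention $W_{ii} = 0$: for TSP there are no self-loops, and the diagonal of $W$ is immaterial since $\Pi_{ii} = 0$ for every $\Pi \in \mathbb{C}_n$. Under this convention the row- and column-sum-zero hypothesis is preserved, and the feasible matrices are precisely those used to apply the lemma in the proof of Theorem~\ref{theorem:main} for TSP.

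Step 1: Marginal and pairwise moments. Counting $n$-cycles with prescribed arcs, I first obtain $\E[\Pi_{ii}] = 0$ (no fixed points) and $\E[\Pi_{ij}] = 1/(n-1)$ for $i \neq j$: exactly $(n-2)!$ of the $(n-1)!$ Hamiltonian directed cycles traverse any given arc $i\to j$. For pairs $(i,j)\neq(a,b)$ with $i\neq j$ and $a\neq b$, a case split on the geometry of the two prescribed arcs yields $\E[\Pi_{ij}\Pi_{ab}] = 0$ whenever $i=a$, $j=b$, or ($j=a$ and $b=i$) (the last case because it would force the $2$-cycle $i\leftrightarrow j$ inside a Hamiltonian cycle), and $\E[\Pi_{ij}\Pi_{ab}] = 1/[(n-1)(n-2)]$ otherwise, by contracting the two arcs and counting $(n-3)!$ completions (uniformly across the sub-configurations in which the arcs chain or are disjoint). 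The $2$-cycle exclusion is the key new feature compared to the $\mathbb{S}_n$ case.

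Step 2: Variance expansion. Expanding
\[
\var[W\cdot\Pi] = \sum_{(i,j),(a,b)} W_{ij} W_{ab}\bigl(\E[\Pi_{ij}\Pi_{ab}] - \E[\Pi_{ij}]\E[\Pi_{ab}]\bigr)
\]
using Step 1 and applying the row- and column-sum-zero identities to collapse the four-index sums (via the same inclusion--exclusion reductions as in the MWBM proof) yields, after routine algebra,
\[
\var[W\cdot\Pi] \;=\; \frac{\|W\|_\F^2}{n-2} \;-\; \frac{\sum_{i\neq j} W_{ij}W_{ji}}{(n-1)(n-2)}.
\]
The ``transposed'' cross-term $\sum_{i\neq j} W_{ij}W_{ji}$, which does not appear in Lemma~\ref{lemma:perm-matrix}, is precisely the footprint of the $2$-cycle exclusion carried through from Step 1.

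Step 3: Lower bound. The elementary inequality $2 W_{ij}W_{ji}\leq W_{ij}^2 + W_{ji}^2$ gives $\sum_{i\neq j}W_{ij}W_{ji} \leq \sum_{i\neq j}W_{ij}^2 = \|W\|_\F^2$, hence
\[
\var[W\cdot\Pi] \;\geq\; \frac{\|W\|_\F^2}{n-2} - \frac{\|W\|_\F^2}{(n-1)(n-2)} \;=\; \frac{\|W\|_\F^2}{n-1} \;\geq\; \frac{\|W\|_\F^2}{(n-1)(n-2)},
\]
establishing the claim (indeed with a slightly stronger constant). The main obstacle is the combinatorial case analysis in Step 1---in particular verifying that the geometrically distinct sub-configurations (disjoint arcs versus chained arcs $i\to j\to b$ or $a\to i$) all yield the same count $(n-3)!$---together with the inclusion--exclusion bookkeeping needed in Step 2 to reduce the four-index sums using the row and column constraints on $W$.
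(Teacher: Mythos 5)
Your proof is correct and follows the same overall strategy as the paper's: compute the first and pairwise second moments of the entries of a uniform cyclic permutation matrix, expand the variance, and collapse the resulting four-index sums using the zero row- and column-sum hypotheses. However, your pairwise moments disagree with the paper's \Cref{lemma:stats}, and yours appear to be the correct ones. For two disjoint arcs ($i,j,a,b$ all distinct), contracting each arc leaves $n-2$ super-nodes and hence $(n-3)!$ completions out of $(n-1)!$ cycles, i.e.\ probability $\tfrac{1}{(n-1)(n-2)}$ --- the same count as in the chained configurations --- whereas \Cref{lemma:stats} asserts $\tfrac{1}{(n-1)(n-3)}$; a direct check at $n=4$ with arcs $1\to 2$ and $3\to 4$ (exactly one of the six directed Hamiltonian cycles contains both) confirms your value, and the paper's expression even degenerates at $n=3$ where the variance is finite. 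Likewise, your explicit exclusion of the $2$-cycle configuration ($a=j$ and $b=i$), which the indicators in \Cref{lemma:stats} fail to rule out, is necessary. Your restriction to $W_{ii}=0$ is also essential rather than merely conventional: since $\Pi_{ii}=0$ for every $\Pi\in\mathbb C_n$, the diagonal of $W$ cannot contribute to $\var[W\cdot\Pi]$, and the inequality as literally stated fails for, e.g., $W=I-\tfrac{1}{n}J$ with $J$ the all-ones matrix (zero row and column sums, zero variance, nonzero $\|W\|_\F$); this also shows the term $\tfrac{2\sum_j W_{jj}^2}{(n-1)(n-2)}$ in the paper's final identity cannot be right. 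Your closed form $\var[W\cdot\Pi]=\tfrac{\|W\|_\F^2}{n-2}-\tfrac{1}{(n-1)(n-2)}\sum_{i\neq j}W_{ij}W_{ji}$, which I verified on small cases, and the resulting stronger bound $\tfrac{\|W\|_\F^2}{n-1}$ are what the computation should yield; the bound stated in the lemma follows a fortiori.
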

\begin{proof}

The first step is to compute some standard statistics about cyclic permutations (see \Cref{lemma:stats}).
\Cref{lemma:stats} and the analysis of \Cref{lemma:perm-matrix} gives us that $\var_{\Pi \sim U(\mathbb C_n)}[W \cdot \Pi]$ is equal to
\[
\frac{\|W\|_\F^2}{n-1} 
+  \sum_{i,j,a,b} W_{ij} W_{ab} \left(\frac{1\{i \neq a = j \neq b\}}{(n-1)(n-2)} + \frac{1\{j \neq b = i \neq a\}}{(n-1)(n-2)} +
    \frac{1\{i \neq a \neq j \neq b \neq i\}}{(n-1)(n-3)} \right)\,.
\]
Let us set
\[
A_1 = \sum_{i,j,a,b} W_{ij} W_{ab} 1\{i \neq a = j \neq b\} \,.
\]
We have that
\[
\sum_b W_{ab} 1\{i\neq a\} 1\{a = j\} 1\{j \neq b\} =
1\{i\neq a\} 1\{a = j\}
\sum_b W_{ab} 1\{j \neq b\} = 
-1\{i\neq a\} 1\{a = j\} W_{aj}\,.
\]
Hence
\[
A_1 = -\sum_{i,j,a} W_{ij} W_{aj} 1\{i\neq a\} 1\{a = j\} = - \sum_{i,j} W_{ij} W_{jj} 1\{i \neq j\} = \sum_j W_{jj}^2\,.
\]
Due to symmetry, $A_2 = \sum_{i,j,a,b} W_{ij} W_{ab} 1\{j \neq b = i \neq a\} = \sum_j W_{jj}^2$. It remains to argue about
\[
A_3 = \sum_{i,j,a,b} W_{ij} W_{ab}1\{i \neq a \neq j \neq b \neq i\}\,.
\]
We have that
\[
\sum_b W_{ab} 1\{i \neq a\} 1\{a \neq j\} 1\{j \neq b\} 1\{b \neq i\}
=
-1\{i \neq a\} 1\{a \neq j\}
(W_{aj} + W_{ai})\,.
\]
This gives that
\[
A_3 = -\sum_{i,j,a} W_{ij} (W_{aj} + W_{ai}) 1\{i \neq a\} 1\{a \neq j\}\,. 
\]
Note that
\[
\sum_{i,j,a} W_{ij} W_{aj} 1\{i \neq a\} 1\{a \neq j\}
=
\sum_{j,a} W_{aj} 1\{a \neq j\} \sum_i W_{ij} 1\{i \neq a\}
= - \sum_{j,a} W_{aj}^2 1\{a \neq j\}\,.
\]
This implies that
\[
A_3 = \sum_{j \neq a} W_{aj}^2 + \sum_{i \neq a} W_{ai}^2
= 2 \sum_{j \neq a} W_{aj}^2\,.
\]
In total, this gives that
\[
\var_{\Pi \sim U(\mathbb C_n)}[W \cdot \Pi] =
\frac{\| W \|_\F^2}{n-1}
+ \frac{2 \sum_j W_{jj}^2}{(n-1)(n-2)}
+ \frac{2 \sum_{i \neq j} W_{ij}^2}{(n-1)(n-3)}
\geq 
\frac{\| W \|_\F^2}{n-1}
+ \frac{\| W \|_\F^2}{(n-1)(n-2)}\,.
\]
\end{proof}

\begin{remark}
We note that in TSP the conditions $\sum_i W_{ij} = 0$ and $\sum_j W_{ij} = 0$ are without loss of generality.
\end{remark}

The next lemma is a generic lemma that states some properties of random cyclic permutation matrices.

\begin{lemma}
\label{lemma:stats}
Consider a uniformly random cyclic permutation matrix $\Pi$.
Let us fix $i \neq j$ and $a \neq b$.
Then
\begin{itemize}
    \item $\E [ \Pi_{ij} ] = \frac{1}{n-1}.$
    \item $\E [ \Pi_{ij}\Pi_{ab} ] = \frac{1\{i \neq a = j \neq b\}}{(n-1)(n-2)} + \frac{1\{j \neq b = i \neq a\}}{(n-1)(n-2)} +
    \frac{1\{i \neq a \neq j \neq b \neq i\}}{(n-1)(n-3)} +
    \frac{1\{i = a, j = b\}}{n-1}.$
\end{itemize}
\end{lemma}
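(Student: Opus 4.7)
The plan is to prove both identities by direct enumeration of the $(n-1)!$ cyclic permutations of $[n]$, viewed as $n$-cycles in $S_n$. Throughout, I will use the reformulation $\Pi_{ij}=1 \iff \sigma(i)=j$, so the required probabilities amount to counting $n$-cycles subject to one or two fixed directed edges.

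For the first moment, I will fix the edge $i\to j$ (with $i\neq j$) and complete the rest of the permutation to an $n$-cycle. Completing an $n$-cycle containing $i\to j$ is equivalent to choosing a Hamiltonian path on the remaining $n-2$ vertices starting at $j$ and ending at $i$; there are $(n-2)!$ such orderings. Dividing by the total $(n-1)!$ yields $\E[\Pi_{ij}]=1/(n-1)$.

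For the second moment, I will carry out a case analysis on how $(i,j)$ and $(a,b)$ interact, assuming $i\neq j$ and $a\neq b$. The cases are: (i) $(i,j)=(a,b)$, which reduces to the first moment calculation and contributes $1/(n-1)$; (ii) $i=a$ with $j\neq b$, or $j=b$ with $i\neq a$, which force a single element to have two distinct images or pre-images, so contribute $0$; (iii) $j=a$ but $i\neq b$, giving a forced $2$-path $i\to j\to b$, which I complete to an $n$-cycle by counting Hamiltonian paths from $b$ back to $i$ on the remaining $n-3$ vertices, giving $(n-3)!$ such cycles; (iv) symmetrically $i=b$ with $j\neq a$, again yielding $(n-3)!$; (v) $\{i,j\}$ and $\{a,b\}$ share no element, where I use a contraction argument: collapse $i\to j$ and $a\to b$ into two super-nodes, so an $n$-cycle containing both edges corresponds bijectively to an $(n-2)$-cycle on the $n-2$ super-nodes, of which there are $(n-3)!$; and (vi) the degenerate case $j=a, i=b$, which would force the $2$-cycle $(i\,j)$ and is impossible for $n\geq 3$. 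Dividing each count by $(n-1)!$ and packaging the indicators according to which case applies gives the claimed formula.

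The routine parts are cases (i)–(ii), which are immediate. The main technical step is case (v), where the contraction argument must be stated carefully: one checks that every $n$-cycle on $[n]$ containing the two disjoint edges $i\to j$ and $a\to b$ projects to a well-defined $(n-2)$-cycle on the super-node set $\{[ij],[ab]\}\cup ([n]\setminus\{i,j,a,b\})$, and conversely every $(n-2)$-cycle on this set lifts uniquely to an $n$-cycle with the two prescribed edges. The only subtlety, which I expect to be the main obstacle to a clean write-up, is keeping track of all the indicator constraints so that cases (iii)–(vi) are correctly distinguished; in particular one must verify that the "sharing" indicators in the statement are mutually exclusive with the all-distinct indicator of case (v) and that the $2$-cycle configuration of (vi) indeed contributes $0$ and is excluded by the remaining indicators (which needs $n$ sufficiently large).
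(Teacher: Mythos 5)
Your overall strategy is the same as the paper's: reduce to counting $n$-cycles containing one or two prescribed directed edges and divide by $(n-1)!$, with a case analysis on how $(i,j)$ and $(a,b)$ overlap. Your individual counts are all correct, and your contraction argument in case (v) is cleaner than the paper's seat-placement count. The problem is the last step, ``dividing each count by $(n-1)!$ \ldots gives the claimed formula'': it does not. In your case (v) (all of $i,j,a,b$ distinct) you count $(n-3)!$ cycles, so the probability is $(n-3)!/(n-1)! = \frac{1}{(n-1)(n-2)}$, whereas the lemma asserts $\frac{1}{(n-1)(n-3)}$ for that configuration. These disagree, and your value is the correct one. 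You can check this for $n=4$ with $i=1,j=2,a=3,b=4$: exactly one of the six $4$-cycles contains both $1\to 2$ and $3\to 4$, giving $1/6=\frac{1}{3\cdot 2}$, not $1/3=\frac{1}{3\cdot 1}$. Alternatively, fix $i\neq j$ and $a\notin\{i,j\}$ and sum the second identity over $b\neq a$: the row-sum constraint forces $\sum_{b\neq a}\E[\Pi_{ij}\Pi_{ab}]=\E[\Pi_{ij}]=\frac{1}{n-1}$, which pins the all-distinct value to $\frac{1}{(n-1)(n-2)}$. The paper's own proof of this case miscounts: after seating $i$ and then $j$ next to it, the second glued pair $a\to b$ has only $n-3$ admissible positions, not $n-2$.

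Your worry about case (vi) is also well founded and should not be waved away: when $a=j$ and $b=i$ the true expectation is $0$ (an $n$-cycle with $n\geq 3$ cannot contain both $i\to j$ and $j\to i$), yet both indicators $\1\{i\neq a=j\neq b\}$ and $\1\{j\neq b=i\neq a\}$ evaluate to $1$ in that configuration, so the stated formula assigns it $\frac{2}{(n-1)(n-2)}$. So the two subtleties you flagged at the end are exactly the points where the statement, as written, fails; carrying out your own final division and indicator bookkeeping would have revealed this. A correct write-up along your lines would prove the corrected identity (with $\frac{1}{(n-1)(n-2)}$ in the all-distinct case and an explicit exclusion of the configuration $(a,b)=(j,i)$), not the one displayed in the lemma, and the downstream variance computation in the TSP application would then need to be re-derived with these constants.
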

\begin{proof}
First, note that any matrix that corresponds to a cyclic permutation does not contain fixed points and so the diagonal elements are 0 deterministically.
For the first item, the number of cyclic permutations such that $i \to j$ (i.e., $\Pi_{ij} = 1$) is $(n-2)!$. This implies that the desired expectation is $(n-2)! / |\mathbb C_n| = 1/(n-1)$. For the second item, if $i=a,j=b$, we recover the first item. Otherwise if $i = a$ or $j = b$, then the expectation vanishes since we deal with permutation matrices. Finally, let us consider the case where $i \neq a$ and $j \neq b$. Our goal is to count the number of cyclic permutations with $i \to j$ and $a \to b$. 
\begin{itemize}
    \item If $i \neq a \neq j \neq b \neq i$, then there are $n$ choices to place $i$ and $n-2$ choices to place $a$. Then there are $(n-4)!$ possible orderings for the remaining elements. This gives an expectation equal to $1/( (n-1) (n-3) )$.
    \item If $i \neq a = j \neq b$ or $j \neq b = i \neq a$, then there are $n$ choices for $i$ and $(n-3)!$ orderings for the remaining elements. Hence, the expectation is $1/((n-1)(n-2))$.
\end{itemize}
\end{proof}

We note that sampling from our solution generators is the reason that we cannot find an optimal TSP solution efficiently. In general,
an algorithm that has converged to an almost optimal parameter $W^\star$ has to generate samples from the Gibbs measure that is concentrated on cycles with minimum weight. In this low-temperature regime, sampling is $\mathrm{NP}$-hard. We are now ready to state our result.

\begin{theorem}[TSP has a Compressed, Efficiently Optimizable Solution Generator]
\label{theorem:tsp}
Consider a prior over TSP instances with $n$ nodes. For any $\eps > 0$,
there exists a solution generator $\calP = \{p(w) : w \in \calW \}$ such that $\calP$ is complete, compressed
with description $\poly(n) \polylog(1/\eps)$
and $\calL + \lambda R : \calW \mapsto \reals$ is efficiently optimizable via projected stochastic gradient descent in $\poly(n, 1/\eps)$ steps for some $\lambda > 0$.
\end{theorem}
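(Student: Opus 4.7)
The plan is to verify that TSP satisfies \Cref{assumption:1} with parameters $D_S, D_\calI, C, \alpha$ that are all $\poly(n)$ (or inverse polynomial for $\alpha$), and then invoke \Cref{theorem:main} as a black box. Consider the feature mappings $\psi_S(\Pi) = \Pi^\flat \in \R^{n^2}$, where $\Pi \in \mathbb{C}_n$ is the cyclic permutation matrix representing a tour, and $\psi_\calI(W) = W^\flat \in \R^{n^2}$, where $W$ is the (suitably normalized) edge-weight matrix of the instance. Taking the cost matrix $M$ to be the $n^2 \times n^2$ identity, the TSP cost of a tour becomes $\psi_\calI(W)^\top M \psi_S(\Pi) = W \cdot \Pi = \sum_{i=1}^{n-1} W_{\pi(i), \pi(i+1)} + W_{\pi(n),\pi(1)}$, which matches the bilinearity requirement of Item~2.

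Next I would check Items~1 and~4: the description size of an instance is $[\calI] = \Theta(n^2)$, and the dimensions $n_X = n_Z = n^2$ are $\poly([\calI])$. Since $\Pi$ is a $0/1$ permutation matrix with exactly $n$ ones, $\|\psi_S(\Pi)\|_2 = \sqrt{n}$, so $D_S = \sqrt n$. Normalizing the weight matrix to be bounded and centered (which, as noted in \Cref{rem:partial-information-context} and in the remark after \Cref{lemma:variance-tsp}, is without loss of generality since we can shift each row/column by a constant without affecting the minimizer among tours of fixed length $n$), we obtain $D_\calI = \poly(n)$ and $C = \|I\|_\F = n$. All diameters are therefore polynomial in $[\calI]$.

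The main obstacle — and the only step that requires TSP-specific work beyond what was done for MWBM — is the variance-preservation assumption (Item~3), because the uniform distribution over $\mathbb{C}_n$ is significantly more constrained than the uniform distribution over $\mathbb{S}_n$ (no fixed points, no short cycles, only one orbit of size $(n-1)!$). This is exactly what \Cref{lemma:variance-tsp} handles: it gives $\var_{\Pi \sim U(\mathbb{C}_n)}[W \cdot \Pi] \geq \|W\|_\F^2/((n-1)(n-2))$ for every centered $W$, so the variance lower bound $\alpha = 1/\poly(n)$ holds on the centered subspace, and by \Cref{rem:variance} this suffices because our parameters $W$ live in exactly that subspace.

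With all items of \Cref{assumption:1} verified with polynomial parameters, \Cref{theorem:main} immediately yields a family $\calP$ of the mixture-of-exponential-families form in \Cref{eq:mixture}; completeness follows from \Cref{prop:complete}, the compressed description of size $\poly(n)\polylog(1/\eps)$ follows from \Cref{prop:compression}, and efficient optimizability of $\calL_\lambda = \calL + \lambda R$ via projected SGD in $\poly(n,1/\eps)$ steps follows from \Cref{prop:eff opt}/\Cref{prop:convergence}, where $\lambda = \poly(\eps/n)$ is the entropy-regularization weight chosen in \Cref{prop:complete}. Note that, unlike the MWBM case (\Cref{theorem:mwbm}), no efficient-sampling claim is made: approximately sampling from the Gibbs measure concentrated on near-optimal TSP tours is $\mathrm{NP}$-hard at low temperature, which is consistent with the remark preceding the theorem and is the reason the statement mentions only compression and optimizability.
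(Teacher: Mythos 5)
Your proposal is correct and follows essentially the same route as the paper: it verifies \Cref{assumption:1} via the flattened cyclic-permutation-matrix feature mapping (borrowing the MWBM setup), handles the one TSP-specific obstacle with the variance lower bound of \Cref{lemma:variance-tsp} on the centered subspace per \Cref{rem:variance}, and then invokes \Cref{theorem:main} through \Cref{prop:complete}, \Cref{prop:compression}, and \Cref{prop:eff opt}. You actually spell out the boundedness constants and the reason the centering is without loss of generality more explicitly than the paper's own (very terse) proof does, but the argument is the same.
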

\begin{proof}
Consider an input graph $G$ with $n$ nodes and weighted adjacency matrix $E$. 
The feature vector is again a permutation
matrix $\Pi$ with the additional constraint that $\Pi$ has to represent a single cycle (a tour over all cities). 
Then
\[
\texttt{TSP} = 
\min_{\Pi \in \mathbb C_n} E \cdot \Pi\,.
\]
The cost function for TSP is
$\cost(\Pi;E) = \sum_{ij} E_{ij} M_{ij} \Pi_{ij}$. We refer to \Cref{theorem:mwbm} for the details about the feature mappings. We can finally use \Cref{lemma:variance-tsp} to obtain a variance lower bound (in the subspace induced by the parameters satisfying this lemma, see \Cref{rem:variance}) under the uniform measure over the space of cyclic permutation matrices $\mathbb C_n$.
\end{proof}

\section{Sampling and Counting}
\label{sec:sampling}

In this section, we give a quick overview of the connections between approximate sampling and counting. For a formal treatment, we refer to \cite{sinclair2012algorithms}.

 In what follows, $\sigma$ may be thought of as an encoding of an instance
of some combinatorial problem, and the $\omega$ of interest are encodings of the structures we
wish to generate. Consider a weight function $W$ and assume that $W(\sigma, \omega)$ is computable in time polynomial in $|\sigma|$. 

\begin{definition}
[Approximate Sampling]
A fully polynomial approximate sampler for $(\Omega_\sigma, \pi_\sigma)$
is a Probabilistic Turing Machine which, on inputs $\sigma$ and $\eps \in \mathbb Q_+$ $(0 < \eps \leq 1)$, outputs $\omega \in \Sigma^\star$, according to
a measure $\mu_\sigma$ satisfying $\dtv(\pi_\sigma, \mu_\sigma) \leq \eps$, in time bounded by a bivariate polynomial in
$|\sigma|$ and $\log(1/\eps)$.
\end{definition}

One of the main applications of sampling is to approximate integration. In our setting
this means estimating $Z(\sigma)$ to some specified relative error. 
\begin{definition}
[Approximate Integration]
A fully polynomial randomized approximation scheme for
$Z(\sigma)$ is a Probabilistic Turing Machine which on input $\sigma, \eps$, outputs an estimate $\wh{Z}$ so that
\[
\Pr[Z/(1+\eps) \leq  \wh{Z} \leq (1+\eps) Z] \geq 3/4\,,
\]
and which runs in time polynomial in $|\sigma|$ and $1/\eps$.
\end{definition}

\begin{definition}
[Self-Reducible Problems]
An NP search problem is self-reducible if the set of solutions can
be partitioned into polynomially many sets each of which is in a one-to-one correspondence with the set of
solutions of a smaller instance of the problem, and the polynomial size set of smaller instances are efficiently
computable.
\end{definition}

For instance, consider the relation \texttt{MATCH} which associates with an undirected
graph $G$ all matchings (independent sets of edges) of $G$. Then \texttt{MATCH} is self-reducible since, for any edge $e = (u,v) \in E(G)$, we have that
\[
\texttt{MATCH}(G)
=
\texttt{MATCH}(G_1)
\cup \{ M \cup \{e\} : M \in \texttt{MATCH}(G_2)\}\,,
\]
where $G_1$ is the graph obtained by deleting $e$ and $G_2$ is the graph obtained be deleting both $u$ and $v$ together with all their incident edges. 

\begin{theorem}
[See Corollary 3.16 in \cite{sinclair2012algorithms}]
For self-reducible problems, approximate integration and good sampling are equivalent.
\end{theorem}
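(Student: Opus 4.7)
The plan is to establish the two directions separately, both by exploiting the recursive structure afforded by self-reducibility. I will use the standard decomposition: for a self-reducible instance $\sigma$, we have $\Omega_\sigma = \bigcup_{i=1}^{c} \Omega_{\sigma_i}$ (disjoint union) for a polynomial number of smaller instances $\sigma_i$, hence $Z(\sigma) = \sum_i Z(\sigma_i)$. Iterating this decomposition produces a tree of sub-instances of polynomial depth terminating at trivial base cases whose partition functions are known exactly.

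For the direction \emph{sampling implies integration}, I would fix a root-to-leaf path of instances $\sigma = \sigma^{(0)}, \sigma^{(1)}, \ldots, \sigma^{(m)}$ with $m = \poly(|\sigma|)$ and $Z(\sigma^{(m)})$ known, where at each step $\sigma^{(k+1)}$ is chosen to be the sub-instance with \emph{largest} $Z$-value (so $Z(\sigma^{(k+1)})/Z(\sigma^{(k)}) \geq 1/c$). Then I express
\[
Z(\sigma) = Z(\sigma^{(m)}) \cdot \prod_{k=0}^{m-1} \frac{Z(\sigma^{(k)})}{Z(\sigma^{(k+1)})}\,,
\]
and estimate each ratio by drawing $N = \poly(m/\eps)$ approximate samples from $\pi_{\sigma^{(k)}}$ (using the fully polynomial approximate sampler) and computing the fraction that land in $\Omega_{\sigma^{(k+1)}}$. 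A Chernoff bound ensures each ratio is estimated to multiplicative $(1 \pm \eps/(4m))$ accuracy with high probability; taking the product yields a $(1\pm \eps)$-approximation of $Z(\sigma)$. To identify which child has the largest $Z$ at each level (needed to keep ratios bounded below), I would invoke the same sampling-based estimator on a constant number of candidates.

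For the direction \emph{integration implies sampling}, I would descend the self-reduction tree one level at a time: at node $\sigma^{(k)}$ with children $\sigma^{(k)}_1, \ldots, \sigma^{(k)}_c$, I compute approximations $\widehat{Z}(\sigma^{(k)}_i)$ using the FPRAS with relative error $\eps/(2m)$, and then pick child $i$ with probability proportional to $\widehat{Z}(\sigma^{(k)}_i)$, continuing recursively until a leaf (a single element of $\Omega_\sigma$) is reached. Multiplicativity of the errors across the $m$ levels keeps the per-step probabilities within $(1\pm \eps/m)$ of the true conditional probabilities under $\pi_\sigma$, so a standard coupling/telescoping argument bounds the total variation distance between the output distribution and $\pi_\sigma$ by $\eps$.

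The main obstacle I expect is the error propagation in the first direction, specifically handling the fact that the sampler used to estimate each ratio is itself only $\eps_s$-accurate in total variation. The subtle point is that the observed fraction of samples landing in $\Omega_{\sigma^{(k+1)}}$ is biased by at most $\eps_s$ from the true ratio, and this bias must be made much smaller than the ratio itself (which is why choosing $\sigma^{(k+1)}$ to be the largest child, guaranteeing ratios $\geq 1/c$, is essential). Propagating a relative error of $\eps/m$ per factor across $m$ factors while controlling both the sampling bias and the Monte Carlo variance, and arguing that the overall runtime is polynomial in $|\sigma|$ and $1/\eps$, is the delicate bookkeeping I would need to carry out carefully.
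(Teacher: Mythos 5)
The paper does not actually prove this statement; it is imported verbatim as Corollary 3.16 of \cite{sinclair2012algorithms}, whose proof is precisely the Jerrum--Valiant--Vazirani-style argument you outline (a telescoping product of large-child ratios estimated by Monte Carlo for the sampling-to-counting direction, and descent of the self-reduction tree choosing children with probability proportional to estimated partition functions for the counting-to-sampling direction). Your outline is correct and matches that standard proof, including the key points about keeping each ratio bounded below by the inverse branching factor and controlling the additive bias from the sampler's total-variation error; the only details left implicit are boosting the FPRAS success probability from $3/4$ to $1-O(\eps/\poly(|\sigma|))$ by taking medians before union-bounding over the polynomially many invocations, and that the branching factor is polynomial rather than constant, neither of which changes the argument.
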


We remark that the above result holds for the more general class of self-partitionable problems.

\section{Details of the Experimental Evaluation}
\label{sec:experiments}

We investigate the effect of the entropy regularizer (see \Cref{eq:negentropy}) in a very 
simple setting: we try to find the Max-Cut of a fixed graph $G$, 
i.e., the support of the prior $\mathcal{R}$ is a single graph.
We show that while the unregularized objective is often ``stuck'' at sub-optimal solutions -- and this happens even for very small instances (15 nodes) -- 
of the Max-Cut problem, the regularized version (with the fast/slow mixture scheme) is able to find the optimal solutions.
We consider an instance randomly generated by the Erdős–Rényi model $G(n,p)$ and 
then optimize the ``vanilla'' loss $\mathcal{L}$ and
the regularized loss $\mathcal{L}_\lambda$ defined in \Cref{eq:regularized-obj}.
The solutions are vectors $s \in \{\pm 1\}^n$.
We first use the feature mapping 
$\psi_S(s) = (s s^\top)^{\flat}$ described in \Cref{sec:framework-results} and 
an exponential family solution generator that
samples a solution $s$ with probability $\propto\exp(w \cdot \psi_S(s))$ for some weight vector $w \in \reals^{n^2}$. 
We also consider optimizing a simple 3-layer 
ReLU network as solution generator with input $s \in \{\pm 1\}^n$ on the same random graphs.
We generate 100 random $G(n,p)$ graphs with $n=15$ nodes and 
$p=0.5$ and train solution generators using both the ''vanilla''
and the entropy-regularized loss functions.  
We perform 600 iterations and, for the entropy regularization, we progressively decrease the regularization
weight, starting from 10, and dividing it by $2$ every 60 iterations.  We used a fast/slow mixing with mixture probability 0.2 and inverse temperature rho=0.03 (see \Cref{fig:fast-slow-code}). For convenience, we present a PyTorch implementation of our simple 3-layer ReLU network here\footnote{For more details we refer to our full code, that is available in the Supplementary Material of the NeurIPS 2023 Proceedings and the GitHub repository $\texttt{AlkisK/Solution-Samplers}$.}.

\begin{figure}
\begin{verbatim}
    class FastSlowMixture(torch.nn.Module):
    def __init__(self, dimension, rho):
        """
        The Model parameters.
        """
        super().__init__()

        self.l1 = torch.nn.Parameter(torch.empty(30, dimension))
        torch.nn.init.kaiming_uniform_(self.l1, a=5**0.5)

        self.l2 = torch.nn.Parameter(torch.empty(10, 30))
        torch.nn.init.kaiming_uniform_(self.l2, a=5**0.5)

        self.l3 = torch.nn.Parameter(torch.empty(1, 10))
        torch.nn.init.kaiming_uniform_(self.l3, a=5**0.5)

        self.a2 = torch.nn.ReLU()
        self.a1 = torch.nn.ReLU()

        self.rho = rho

    def forward(self, x, is_cold=True):

        temp = self.rho * (1. - is_cold) + is_cold

        out = x 
        out = torch.nn.functional.linear(out, temp * self.l1)
        out = self.a1(out)
        out = torch.nn.functional.linear(out, temp * self.l2)
        out = self.a2(out)
        out = torch.nn.functional.linear(out, temp * self.l3)

        return out

\end{verbatim}
\caption{Our implementation of the fast/slow network.
The output of the network is the log-density (score)
of a solution $s \in \{\pm 1\}^\mathrm{dimension}$.
If evaluated with the is-cold set to False,
the parameters of every linear layer are re-scaled by
the inverse temperature rho.
}
\label{fig:fast-slow-code}
\end{figure}

Out of the 100 trials we found that our proposed objective was always able to find the optimal cut
while the model trained with the vanilla loss was able to find
it for approximately $65\%$ of the graphs (for 65 out of 100 using the linear network and for 66 using the ReLU network).
In \Cref{fig:entropy-experiments} we show two instances 
where the model trained with the ''vanilla'' loss gets stuck
on a sub-optimal solution while the entropy-regularized one
succeeds in finding the optimal solution.
Our experiments show that the regularization term and the fast/slow mixture scheme that 
we introduced to achieve our main theoretical convergence
result, see \Cref{sec:main-proof} and \Cref{prop:eff opt}, 
are potentially useful for training more realistic models for bigger instances 
and we leave more extensive experimental evaluation as
an interesting direction for future work.

We note that, similarly to our theoretical results, our sampler in this experimental section is of the form $e^{\mathrm{score}(s;w)}$, where $s \in \{-1,1\}^n$ (here $n$ is the number of nodes in the graph) is a candidate solution of the Max-Cut problem. The function used is a 3-layer MLP (see  \Cref{fig:fast-slow-code}). Since the instances that we consider here are small ($n=15$) we can explicitly compute the density (score) of every solution  and use that to compute the expected gradient. The main message of the current experimental section is that even for very small instances of Max-Cut (i.e., with 15 nodes), optimizing the vanilla objective is not sufficient and the iteration gets trapped in local optima. In contrast, our entropy regularized always manages to find the optimal cut.   

\end{document}